%% file: main.tex
\documentclass{informs3h}
\RequirePackage{bm}

\usepackage{mathtools}
\usepackage{algorithm}
\usepackage{algpseudocode}
\usepackage{booktabs}
\usepackage{multirow}
\usepackage{microtype}
\usepackage{xcolor}
\usepackage{graphicx}
\usepackage{subcaption}
\usepackage[colorlinks=true,linkcolor=black,citecolor=black,urlcolor=black]{hyperref}
\usepackage[nameinlink,capitalise]{cleveref}
\crefname{assumption}{Assumption}{Assumptions}
\Crefname{assumption}{Assumption}{Assumptions}

\usepackage{comment}

\usepackage{natbib}
\bibpunct[, ]{(}{)}{,}{a}{}{,}
\def\bibfont{\small}

\newcommand{\kl}{\operatorname{kl}}

\newcommand{\Pgrid}{\mathcal P}
\newcommand{\logplus}{\log_{+}}

\EquationsNumberedThrough
\TheoremsNumberedThrough

\makeatletter
\def\theARTICLETOP{}
\def\theARTICLEABSTRACT{%
  \vspace*{10pt}%
  \begin{minipage}{\textwidth}\small
    \noindent\textbf{Abstract.}\ \theABSTRACT\par\vspace{4pt}%
    \theKEYWORDS
  \end{minipage}%
  \vspace*{10pt}%
}
\makeatother

\newcommand{\1}{\mathbf{1}}

\definecolor{thmrefcolor}{HTML}{1F4E79}

\DeclareRobustCommand{\thmref}[2]{%
  \hyperref[#1]{%
    \textcolor{thmrefcolor}{#2~\ref*{#1}}%
  }%
}

\algrenewcommand\algorithmicrequire{\textbf{Input:}}
\algrenewcommand\algorithmicensure{\textbf{Output:}}

\begin{document}

\RUNAUTHOR{Anonymous Authors}
\RUNTITLE{Monotone-Adjusted One-Arm Index Policies}
\RRHSecondLine{}
\LRHSecondLine{}
\TITLE{Multi-Armed Bernoulli Bandits via Minimax Single-Arm Stopping}

\ARTICLEAUTHORS{%
\AUTHOR{Huikang Liu\textsuperscript{1},
Zhengchao Wang\textsuperscript{2},
Daniel Kuhn\textsuperscript{3},
Wolfram Wiesemann\textsuperscript{4}}
\AFF{\textsuperscript{1} Shanghai Jiao Tong University;
\textsuperscript{2} The University of Sydney;
\textsuperscript{3} École Polytechnique Fédérale de Lausanne;
\textsuperscript{4} Imperial College Business School}
}

\ABSTRACT{%
We develop an index policy for finite-horizon Bernoulli multi-armed bandits from minimax solutions to single-arm bandit (SAB) problems.
Each SAB problem involves choosing between an unknown Bernoulli arm and a known reward.
We show that minimizing worst-case regret of SAB problems over all non-anticipative policies admits an exact semi-infinite linear programming formulation.
The resulting stopping policies offer a natural way to compare arms: the higher the known reward against which a policy continues sampling, the more promising the unknown arm.
We turn this intuition into indices based on cumulative continuation probabilities, with a monotone adjustment and a reward-shortfall cap.
By relating index errors to the regret of single-arm stopping policies, we establish a distribution-free regret bound of $4.45\sqrt{KT}+10.75K$ for $K$ arms and horizon $T$.
This bound matches the minimax-optimal regret order established in the literature.
The guarantee extends to rewards supported on $[0,1]$ through Bernoulli randomization.
We also provide a finite-grid implementation with quantified approximation loss.
In numerical experiments, the SAB-based index policy achieves lower  worst-case regret than every tested benchmark policy across all evaluated numbers of arms and horizons, while closely matching the grid-based MAB minimax policy in the two-arm setting.
}

\KEYWORDS{multi-armed bandits; sequential resource allocation; minimax regret, optimal policy}

\maketitle

\input{main/introduction}

\input{main/literature}
\input{main/section_1_sab}
\input{main/section_2_indices}
\input{main/section_3_mab}

\input{main/numerical_study}

\bibliographystyle{informs2014}
\bibliography{main/references}

\newpage

\begin{APPENDICES}
\input{main/appendix_proof}

\end{APPENDICES}

\end{document}

%% file: main/introduction.tex
\section{Introduction}
\label{sec:introduction}

The stochastic multi-armed bandit (MAB) problem is a fundamental model of sequential decision-making under uncertainty.
Over a horizon of $T$ rounds, a decision maker repeatedly selects one of $K$ arms and observes only the reward from the selected arm.
The arm means are unknown, so each selection both earns a reward and provides information for future decisions.
The objective is to minimize regret, the expected reward lost relative to always selecting an arm with the largest mean.
This exploration--exploitation trade-off has made the MAB problem central to the theoretical study of learning and sequential allocation \citep{Robbins1952,LattimoreSzepesvari2020}, with applications in clinical trials, online experimentation, recommendation systems, revenue management and many more.

The cost of exploration is most visible when the number of decisions is limited.
In a nonprofit direct-mail campaign, for example, an organization may need to choose among several fundraising appeals for a fixed mailing list, a setting motivated by direct-mail fundraising experiments on donor responses \citep{KarlanWood2017}.
If each response is modeled as a donation or no donation, then each appeal is a Bernoulli arm with an unknown response rate.
Earlier responses can guide later mailings, but every exploratory mailing uses a recipient who could have received a better appeal.
Similar finite-opportunity trade-offs arise in small clinical trials and rare-disease studies \citep{villar2015multi,villar2015response}, short-horizon advertising and performance-marketing campaigns \citep{GigliStella2025,geng2021comparison}, online experiments where experimentation traffic is limited or costly \citep{scott2015multi}, and operational learning problems in revenue management and service systems \citep{besbes2009dynamic,ferreira2018online,krishnasamy2021learning,choudhury2021job}.
In such problems, exploration is not a negligible transient cost: committing too early may abandon the best alternative, while exploring too long may leave too few opportunities to benefit from what has been learned.

The choice of performance criterion is therefore central.
One common approach is Bayesian: specify a prior over the unknown arm means and evaluate a policy by its average performance under that prior.
Another approach is minimax: evaluate a policy by its largest expected regret over all possible arm means in the problem class.
The minimax criterion is appealing when the decision maker wants a guarantee that does not depend on a correctly specified prior.
It gives a form of protection: no matter which mean vector is realized, the expected reward loss relative to the best arm is controlled by the stated worst-case bound.
This is the perspective we take in this paper.

Many successful bandit policies were designed from different principles.
UCB-type policies use concentration inequalities to build optimistic estimates, Thompson sampling uses posterior randomization, and Gittins-type indices are optimal in classical Bayesian discounted settings.
These ideas have led to powerful finite-time guarantees, including policies with the optimal $\sqrt{KT}$ minimax regret order.
However, attaining the optimal order does not identify the policy that minimizes worst-case regret at a fixed horizon.
Two policies with the same order can still spend their early exploration very differently and incur different finite-horizon losses.
Thus, the available regret theory leaves open a more direct design question: 
\begin{center}
    \boxed{ \emph{Can we build a practical MAB policy from the finite-horizon minimax objective itself?}}
\end{center}

We do so by starting with the simpler single-arm bandit (SAB) problem.
In the SAB problem, the decision maker compares an unknown Bernoulli arm with a known arm that gives a deterministic reward $c\in[0,1]$.
The known reward serves as a benchmark: after observing a history from the unknown arm, the policy decides whether the arm is still worth sampling when the alternative is the fixed reward $c$.
For this problem, stopping policies are sufficient: the policy samples the unknown arm until it stops, and then selects the known arm for the remaining rounds.
We optimize these stopping decisions for worst-case regret over the unknown Bernoulli mean.
This gives a finite-horizon minimax comparison between an uncertain arm and a known reward, which is exactly the comparison we later use to construct MAB indices.

To obtain these stopping policies, we develop an exact semi-infinite linear programming formulation of the SAB minimax stopping problem.
The formulation optimizes over continuation probabilities and recovers a minimax stopping policy from an optimal solution.
We also give an exact occupation-measure formulation for the MAB minimax problem.
That formulation is useful conceptually and in small numerical instances, but its size grows quickly with the number of arms.
By contrast, the SAB formulation is much smaller and admits a practical finite-grid approximation.
This computational contrast motivates us to use the SAB minimax problem as a tractable building block to construct a MAB policy, rather than solving the MAB minimax formulation directly.

The key step in this construction is to use the known-arm reward \(c\) as a common scale for comparing different histories of the unknown arm. 
For a given history, solving the SAB minimax problem for different values of \(c\) allows us to identify the largest $c$ for which the unknown arm still merits further sampling. 
In this sense, an arm history associated with a larger value of \(c\) is more promising.
This suggests a natural index: assign each arm the largest known-arm reward \(c\) for which the SAB policy would continue sampling after that arm's observed history.
The MAB policy shall then select an arm with the largest such index.
The formal construction requires some care because the raw SAB continuation decisions need not have the monotonicity one would ideally want.
Those technical issues are handled in the index construction, while the high-level meaning remains simple: each arm is ranked by the largest known-arm reward against which its current history still justifies further sampling.
We refer to the resulting policy as the SAB-based index policy.

In addition, a SAB minimax policy does not only tell us whether a history is worth sampling against a known-arm reward \(c\); its regret also measures the cost of making that comparison incorrectly.
This observation is what allows the one-arm analysis to enter the MAB proof.
If the index of a promising arm is too low, the MAB policy may overlook that arm.
If the index of an inferior arm is too high, the policy may spend too many pulls on it.
We show that these two index errors can be traced back to the same mistakes measured by the auxiliary SAB regret: stopping too early relative to a known-arm reward, or continuing too long relative to a known-arm reward.
Thus, the regret guarantees of the SAB stopping policies become a way to control the errors made by the multi-arm index policy.

This connection leads to an explicit distribution-free worst-case regret bound of \(4.45\sqrt{KT}+10.75K\) for \(T\geq 2K\).
The bound has the minimax-optimal dependence on the horizon and the number of arms, matching the \(\sqrt{KT}\) order in the classical lower bound of \citet{auer2002nonstochastic}.
Although the policy is constructed for Bernoulli rewards, the guarantee extends to arbitrary reward distributions supported on \([0,1]\) through Bernoulli randomization.
Although the leading constant in our regret bound is slightly larger than the smallest known value $2$ of \cite{ZimmertSeldin2021}, numerical results show that our proposed algorithm nevertheless achieves much lower worst-case regret than the algorithm in \cite{ZimmertSeldin2021}.

Through the MAB regret analysis, it does not require every auxiliary SAB policy to be exactly minimax optimal.
It only requires a suitable uniform worst-case regret bound for the SAB stopping policy family used to construct the index.
As it will become clear later, this flexibility makes it possible to construct the index over a discretizations of $c \in [0, 1]$ and to solve the SAB semi-infinite linear program via discretizations over the unknown-arm reward for each discrete $c$. 
Because the auxiliary SAB policies depend only on the horizon and the known-reward grid, they can be solved before the MAB process begins.
During sampling, the policy only updates the relevant precomputed continuation values and compares the current arm indices.

We also present numerical experiments to assess the finite-horizon performance of this SAB minimax index construction.
In the SAB experiments, the grid-based SAB minimax policy achieves worst-case regret approximately half that of the best-performing benchmark across all tested grids and horizons up to $T=200$.
The numerical results show that our proposed SAB policy outperforms the benchmark policies by more precisely limiting unnecessary exploration early in the horizon while permitting necessary exploration later.
In the MAB experiments, the SAB-based index policy achieves substantially lower estimated worst-case regret than every benchmark policy across all tested horizons and numbers of arms.
For the two-arm setting, we also solve a finite-grid discretization of the MAB minimax formulation.
The SAB-based index policy remains close in performance to this directly optimized policy across all tested horizons.
These results suggest that, in the two-arm setting, the SAB-based index policy retains much of the benefit of direct MAB minimax optimization while avoiding the need to solve exactly the MAB program.

The remainder of the paper is organized as follows.
Section~\ref{sec:sab} studies the SAB minimax problem and its numerical approximation.
Section~\ref{sec:indices} constructs the adjusted SAB index.
Section~\ref{sec:mab} defines the MAB policy, proves the regret bound, and explains the finite implementation.
Section~\ref{sec:experiments} reports the numerical experiments.
The appendices contain the proofs and the verification of the numerical constants.

%% file: main/literature.tex
\section{Literature review}

We discuss the related literature from two perspectives: how bandit policies are constructed and the criteria under which they are optimal or near-optimal.
Our focus is on one-arm stopping problems as a basis for policy construction and on the distinction between minimax-order regret guarantees and direct finite-horizon minimax optimization.
For broader treatments of bandit algorithms and regret analysis, we refer to \citet{BubeckCesaBianchi2012} and \citet{LattimoreSzepesvari2020}.

\subsection{Policy construction.}
A basic approach to exploration is to separate it from subsequent exploitation.
Explore-then-commit policies first collect observations and then select the arm judged best for the remaining rounds \citep{GarivierLattimoreKaufmann2016}.
The double explore-then-commit construction of \citet{JinXuXiaoGu2021} allows additional exploration after an initial exploitation phase.
UCB-type policies instead continually compare optimistic estimates of the arm means, combining observed rewards with an exploration bonus that accounts for uncertainty \citep{AuerCesaBianchiFischer2002}.
For Bernoulli rewards, KL-UCB refines this comparison using confidence sets based on Bernoulli Kullback--Leibler divergence \citep{GarivierCappe2011}.
Whereas these approaches determine exploration through phased sampling or optimistic estimates, our starting point is to optimize the sampling policy in SAB problems for finite-horizon worst-case regret.

Posterior-based policies determine sampling decisions through beliefs about the unknown arm means.
Thompson sampling draws a parameter vector from the posterior distribution and selects an arm with the largest sampled mean \citep{Thompson1933}.
Other posterior-based constructions account more explicitly for the information obtained from sampling.
Information-directed sampling balances immediate expected regret against information gained about the optimal arm \citep{RussoVanRoy2018}.
For finite-horizon Bayesian bandits, \citet{MinMaglarasMoallemi2025} develop information-relaxation sampling policies that optimize against sampled future rewards, with penalties for using information not yet observed.
This approach incorporates the remaining horizon into the sampling decision.
These methods use a specified prior to evaluate uncertain rewards, whereas our SAB problems optimize directly worst-case regret over the unknown mean.

A closely related approach to ours is to evaluate each arm through a one-arm stopping problem.
Gittins indices compare continued sampling of an arm with a known reward, using the reward at which the decision maker is indifferent as an index \citep{Gittins1979,Whittle1980}.
This comparison allows arms to be ranked through separate one-arm calculations.
\citet{NinoMora2011} develops an exact algorithm for computing finite-horizon stopping-based indices and compares it with methods that solve stopping problems over a grid of known-arm rewards.
For the classical infinite-horizon discounted MAB problem, \citet{BertsimasNinoMora1996} use a linear programming formulation of expected-reward maximization to derive the optimal Gittins-index policy.
Our construction also uses one-arm optimization, but the stopping policies minimize worst-case regret over a finite horizon.
We obtain these policies from exact SAB semi-infinite linear programs and use them to construct the SAB-based index policy.
This provides a way to design a MAB policy from the finite-horizon minimax objective without solving the MAB minimax problem.

\subsection{Optimality and regret guarantees.}
In the Bayesian setting, an optimal policy maximizes expected, possibly discounted, total reward under a specified prior.
\citet{BradtJohnsonKarlin1956} study finite-horizon sequential allocation, including the comparison between an unknown arm and a known arm.
For the undiscounted known-arm problem, an optimal policy samples the unknown arm until stopping and then selects the known arm for all remaining rounds.
\citet{Jones1978} studies this problem under a general prior, while \citet{BerryFristedt1979} consider arbitrary discount sequences.
\citet{BerryFristedt1985} provide a broader treatment of optimal policies for one-arm and multi-arm problems.
For independent arms, an infinite horizon, and geometric discounting, the Gittins theorem establishes that selecting an arm with the largest index is optimal \citep{Gittins1979,GittinsGlazebrookWeber2011}.
At a fixed, undiscounted horizon, however, optimal one-arm stopping calculations do not generally yield an optimal Bayesian MAB policy \citep{NinoMora2011}.
We likewise distinguish exact minimax optimality of the SAB policies from the worst-case regret guarantee for the resulting MAB policy.

Frequentist regret analysis evaluates policies at fixed unknown arm means rather than averaging over a specified prior.
For fixed instances with positive reward gaps, \citet{LaiRobbins1985} establish an asymptotic lower bound that relates the necessary exploration to the statistical difficulty of distinguishing the arms.
\citet{AuerCesaBianchiFischer2002} give finite-time logarithmic regret bounds for UCB, and KL-UCB attains the asymptotic lower bound for Bernoulli rewards \citep{GarivierCappe2011}.
The minimax criterion instead requires control uniformly over instances, including those whose reward gaps shrink with the horizon.
MOSS achieves the $\sqrt{KT}$ minimax regret order for $T\ge K$, matching the classical lower bound up to constants \citep{AudibertBubeck2009,AuerCesaBianchiFreundSchapire2002}.
Later policies combine minimax-order regret with fixed-instance asymptotic optimality, including KL-UCB++ for one-parameter exponential families and KL-UCB-Switch for rewards supported on $[0,1]$ \citep{MenardGarivier2017,GarivierHadijiMenardStoltz2022}.
These guarantees establish the optimal regret order, but do not identify the policy minimizing worst-case regret at a specified horizon.
Our paper finds an exact minimax optimal policy for the SAB problem, while attaining the established minimax order for the MAB problem with our SAB-based index policy.

Frequentist guarantees are also available for other policy constructions.
For Thompson sampling, \citet{AgrawalGoyal2012} establish finite-time regret bounds for Bernoulli bandits, and \citet{JinYangXiaoXu2023} show that reduced exploration can yield both minimax-order and fixed-instance asymptotic optimality for several reward families.
Tsallis-INF uses regularized optimization of estimated losses to achieve minimax-order worst-case guarantees that also apply to stochastic bandits \citep{ZimmertSeldin2021}.
For stopping-based indices, \citet{Lattimore2016} establishes frequentist regret bounds for a finite-horizon Gittins policy with Gaussian rewards, including a distribution-free bound with leading term of order $\sqrt{KT\log T}$.
\citet{FariasGutin2022} develop optimistic Gittins indices that attain the Lai--Robbins lower bound asymptotically.
Our analysis relates the regret of the MAB policy to the regret of its SAB stopping policies, yielding a nonasymptotic bound of order $\sqrt{KT}$ for Bernoulli bandits.

A related literature studies optimal policies and minimal risk through asymptotic analysis.
For two unknown Bernoulli arms, \citet{Vogel1960} establishes asymptotic bounds on minimax regret, and \citet{Bather1983} sharpens the asymptotic lower bound.
For a Gaussian bandit with one known arm, \citet{Kolnogorov2019} derives Bayesian dynamic programming recursions and uses a diffusion approximation to study minimax regret.
\citet{Adusumilli2025} characterizes minimal Bayes risk under local asymptotics and numerically studies minimax policies in the limiting problem.
Both papers use least-favorable priors, chosen to maximize the optimal Bayes risk, to study the minimax criterion.
Related work by \citet{KuangWager2024} uses diffusion limits to analyze the regret of sequentially randomized policies when reward gaps shrink at rate $T^{-1/2}$.
We share this interest in exact minimax performance beyond the regret order, but optimize SAB stopping policies directly at the specified finite horizon.
Our exact semi-infinite linear program characterizes the minimax value over the whole Bernoulli parameter space, and we bound the additional regret from finite-grid approximation in the original finite-horizon problem.

Finite-horizon minimax optimization also has direct predecessors in Bernoulli bandits.
\citet{FabiusVanZwet1970} study two unknown Bernoulli arms and compute minimax policies for horizons up to four.
They also show that decisions can be based on each arm's pull and success counts, a reduction used in our formulations.
\citet{Kolnogorov2014} studies finite-horizon Bernoulli MAB problems over arbitrary parameter sets and reduces minimax-risk computation to a search over finitely supported least-favorable priors.
He represents randomized policies using linear constraints and reports numerical minimax values for problems with two unknown arms and with one known arm.
\citet{KolnogorovGrunev2021} further study minimax policies for Bernoulli two-armed bandits at moderate horizons.
For the known-arm problem, our semi-infinite linear program both characterizes the minimax value and recovers a minimizing stopping policy, with explicit regret bounds for finite-grid approximations.
The formulation therefore supplies computable SAB policies whose approximation error can be accounted for in the subsequent MAB guarantee.

Our exact MAB semi-infinite linear formulation characterizes the full minimax problem. 
Although the number of decision variables grows rapidly with the number of arms, we demonstrate that, using an appropriate discretization that yields a small optimality gap, our formulation can solve in one hour the two-arm problem for at least $T=200$ and can accommodate even longer horizons when additional computation time is allowed.
We use the smaller SAB problems to construct the index policy rather than solve the MAB formulation directly for larger problem instance.
Because the regret analysis permits controlled approximation of the SAB stopping policies, solving finitely many SAB programs suffices to obtain an implementable MAB policy that achieves minimax-order regret.

%% file: main/section_1_sab.tex
\section{The Single-Armed Bandit Problem and Its Minimax Policy}
\label{sec:sab}

We begin with a single-armed bandit (SAB) problem that will later be used to construct indices for the multi-armed bandit (MAB) problem.
In the SAB problem, the decision maker chooses between an unknown Bernoulli arm with mean $p\in[0,1]$ and a known arm that gives the deterministic reward $c\in[0,1]$.
We call $c$ the \emph{known-arm reward} throughout the paper.
The unknown mean is fixed but unavailable to the policy, and the objective is to minimize worst-case regret over all possible values of that mean.
A policy attaining this value is called a minimax policy.
This section characterizes the SAB minimax policy through a semi-infinite linear program and gives explicit upper and lower bounds on its regret.
The continuation probabilities obtained from the program are used in Section~\ref{sec:indices} to define arm indices.
The explicit upper bound is the only property of SAB optimality used in the MAB regret guarantee.
Section~\ref{sec:mab} specifies the MAB policy, explains its regret analysis, and distinguishes exact index evaluation from numerical approximation.
All proofs are collected in Appendix~\ref{app:proofs}.

Throughout, a tilde denotes a random variable, and the same lowercase symbol without a tilde denotes its realization.
For example, $\tilde z_j$ is a reward and $z_j$ is its observed value.
Probabilities and expectations include both the rewards and any independent randomization used by the policy.
An admissible policy uses only previously observed rewards and its own randomization.
Let $G\ge1$ be the integer SAB horizon, and let $\tilde n^{\pi}_{c,G}$ be the total number of unknown-arm pulls under a policy $\pi$.
Its regret and the minimax value are
\begin{equation}
\label{eq:sab-regret}
R_{c,G}(\pi,p)
=G(p-c)_+-(p-c)\mathbb E_p[\tilde n^{\pi}_{c,G}],
\qquad
V_G(c):=\inf_{\pi}\sup_{p\in[0,1]}R_{c,G}(\pi,p),
\end{equation}
where $(v)_+:=\max\{v,0\}$ and the infimum is over all admissible policies.
Thus regret is $(c-p)$ times the expected number of unknown-arm pulls when $p<c$, and $(p-c)$ times the expected number of known-arm pulls when $p>c$.
Write $\tilde s_n:=\sum_{j=1}^n\tilde z_j$ for the number of successes among the first $n$ unknown-arm observations, with $\tilde s_0=0$.

\subsection{An exact semi-infinite linear program}
\label{subsec:sab-lp}
In the SAB literature, \citep{BradtJohnsonKarlin1956} have shown that it is sufficient to consider only stopping policies to find an optimal policy. 
A stopping policy pulls the unknown arm until it stops and then selects the known arm for all remaining rounds.
For Bernoulli rewards, it is enough to use the count state $(n,\tilde s_n)$ rather than the full ordered history, where $n$ is the number of unknown-arm observations and $\tilde s_n$ is the number of successes.
The following proposition states the reduction required here.

\begin{proposition}
\label{prop:count-state}
For every integer \(G\ge 1\) and known-arm reward \(c\in[0,1]\), it suffices to consider randomized stopping policies that depend only on \((n,s_n)\): every admissible stopping policy has such a counterpart with identical regret for all \(p\in[0,1]\).
\end{proposition}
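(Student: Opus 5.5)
Since the regret of a stopping policy depends on the policy only through $\mathbb E_p[\tilde n^{\pi}_{c,G}]$, the plan is to construct, for a given admissible stopping policy $\pi$, a count-state policy $\bar\pi$ with $\mathbb E_p[\tilde n^{\bar\pi}_{c,G}]=\mathbb E_p[\tilde n^{\pi}_{c,G}]$ for every $p\in[0,1]$. By \eqref{eq:sab-regret}, this gives $R_{c,G}(\bar\pi,p)=R_{c,G}(\pi,p)$ for all $p$.

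\textbf{Step 1: write $\pi$ through continuation probabilities.} Let $z_{1:n}\in\{0,1\}^n$ be an ordered history of unknown-arm observations, $0\le n\le G-1$. Let $q_\pi(z_{1:n})$ be the probability, averaged over the policy's internal randomization, that $\pi$ has pulled the unknown arm $n$ times, observed $z_{1:n}$, and then continues with an $(n+1)$-st pull. Define the reach weight $w_\pi(z_{1:n})$ as the probability over the randomization that $\pi$ makes at least $n$ unknown-arm pulls when the first $n$ outcomes are $z_{1:n}$. Causality makes this well defined: the decision to take pull $k$ uses only $z_{1:k-1}$.

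Because rewards are i.i.d.\ Bernoulli$(p)$, the probability of the event ``reach $z_{1:n}$'' is $w_\pi(z_{1:n})\,p^{s}(1-p)^{n-s}$ with $s=\sum_j z_j$. Hence
\[
\mathbb E_p[\tilde n^{\pi}_{c,G}]=\sum_{n=1}^{G}\sum_{z_{1:n}} w_\pi(z_{1:n})\,p^{s}(1-p)^{n-s}
=\sum_{n=1}^{G}\sum_{s=0}^{n} W_\pi(n,s)\,p^{s}(1-p)^{n-s},
\]
where $W_\pi(n,s):=\sum_{z_{1:n}:\sum z_j=s} w_\pi(z_{1:n})$. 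The reach weights satisfy the flow relation $w_\pi(z_{1:n},z_{n+1})=w_\pi(z_{1:n})\,\beta_\pi(z_{1:n})$, where $\beta_\pi$ is the conditional continuation probability (arbitrary where $w_\pi=0$), and $w_\pi(\emptyset)=1$.

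\textbf{Step 2: define the count-state policy.} Set $\bar\beta(n,s):=\bigl(\sum_{z_{1:n}:\,\sum z_j=s} w_\pi(z_{1:n})\beta_\pi(z_{1:n})\bigr)/W_\pi(n,s)$ when $W_\pi(n,s)>0$, and $\bar\beta(n,s):=0$ otherwise. Let $\bar\pi$ continue at count state $(n,s)$ with probability $\bar\beta(n,s)$, independently of the past. Its reach weights $\bar W(n,s)$, counted with multiplicity over the $\binom{n}{s}$ ordered paths to $(n,s)$, satisfy
\[
\bar W(n+1,s)=\bar W(n,s)\bar\beta(n,s)+\bar W(n,s-1)\bar\beta(n,s-1),
\]
with boundary terms omitted. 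Summing the flow relation over sequences ending in $0$ and in $1$ shows that $W_\pi$ satisfies
\[
W_\pi(n+1,s)=\sum_{z_{1:n}:\,\sum z_j=s} w_\pi\beta_\pi+\sum_{z_{1:n}:\,\sum z_j=s-1} w_\pi\beta_\pi=W_\pi(n,s)\bar\beta(n,s)+W_\pi(n,s-1)\bar\beta(n,s-1),
\]
which is the same recursion. Both start from $W(0,0)=1$, so induction on $n$ gives $\bar W=W_\pi$, and the displayed formula yields equal expected pulls for every $p$.

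The main obstacle is the bookkeeping in this last step. Reach weights must be defined per ordered path, and the count-state policy's path weight must be $\binom{n}{s}^{-1}$ times its aggregate. Since $\bar\pi$'s continuation decision ignores the order of outcomes, all ordered paths with the same count receive equal weight, so the aggregate recursion is exactly what is needed. Two further points need care. First, stopping is absorbing and the horizon forces $\tilde n\le G$, so $\bar\beta(G,\cdot)$ is irrelevant. Second, the division by $W_\pi(n,s)$ is harmless because states with zero reach weight contribute nothing to the sum.
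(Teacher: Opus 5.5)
Your proposal is correct and matches the paper's proof. Both define the count-state continuation probability as the reach-weighted average of the history-level continuation probabilities over histories with the same $(n,s)$, and both show by induction on $n$ that the aggregated reach probabilities obey the same binomial flow recursion, so expected pull counts and hence regret coincide for every $p$. You run the recursion on the $p$-free weights $W(n,s)$, whereas the paper runs it on $\mathbb P_p(E_{n,s})=W(n,s)p^s(1-p)^{n-s}$, which amounts to the same argument. The only extra ingredient in the paper is the citation of Bradt--Johnson--Karlin (Lemma~4.1) to pass from arbitrary admissible policies to stopping policies; your argument, consistent with the statement's second clause, takes stopping policies as its starting point.
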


If the stopping policy is in state $(n,s)$, let $q^{c,G}_{n,s}\in[0,1]$ be its probability of taking another unknown-arm observation.
These continuation probabilities are specified for every formal state, including states that the policy never reaches.
Let $\tilde a^{c,G}_t\in\{0,1\}$ be the action in SAB round $t$, where $1$ denotes the unknown arm and $0$ denotes the known arm.
Because the policy is a stopping policy, taking the $(n+1)$st unknown-arm observation requires selecting the unknown arm in each of the first $n$ rounds and then continuing once more.
To express this event, define
\[
E^{c,G}_{n,s}:=\{\tilde a^{c,G}_1=\cdots=\tilde a^{c,G}_n=1,\ \tilde s_n=s\},
\qquad
w^{c,G}_{n,s}(p):=\mathbb P_p(E^{c,G}_{n,s},\tilde a^{c,G}_{n+1}=1),
\]
with $E^{c,G}_{0,0}$ equal to the whole sample space.
Thus $w^{c,G}_{n,s}(p)=q^{c,G}_{n,s}\mathbb P_p(E^{c,G}_{n,s})$ is the probability of reaching state $(n,s)$ and continuing.
Its dependence on $p$ comes from the Bernoulli observations.

For $n=0$, $w^{c,G}_{0,0}(p)=q^{c,G}_{0,0}$.
For $1\le n<G$, state $(n,s)$ can be reached by a success from $(n-1,s-1)$ or a failure from $(n-1,s)$.
Therefore,
\[
w^{c,G}_{n,s}(p)
=q^{c,G}_{n,s}\bigl[pw^{c,G}_{n-1,s-1}(p)+(1-p)w^{c,G}_{n-1,s}(p)\bigr],
\]
where terms outside the natural index ranges are zero.
For $p\in(0,1)$, define the normalized continuation probability $x^{c,G}_{n,s}$ by
\(w^{c,G}_{n,s}(p)=\binom ns p^s(1-p)^{n-s}x^{c,G}_{n,s}.\)
This normalization removes the binomial probability of observing $s$ successes in $n$ observations, as the recursion below shows that $x^{c,G}_{n,s}$ does not depend on $p$.
It is not the same as the conditional continuation probability $q^{c,G}_{n,s}$.
For $p=0$ and $p=1$, the binomial factor is interpreted by continuous extension: it equals one at $(p,s)=(0,0)$ and $(p,s)=(1,n)$, and zero at impossible states.

Substituting the factorization into the recursion and using the ratios of consecutive binomial coefficients gives
\[
nx^{c,G}_{n,s}
=q^{c,G}_{n,s}\bigl[sx^{c,G}_{n-1,s-1}+(n-s)x^{c,G}_{n-1,s}\bigr].
\]
Since $q^{c,G}_{n,s}\in[0,1]$, every stopping policy satisfies the linear constraints
\[
0\le x^{c,G}_{n,s}\le1,
\qquad
nx^{c,G}_{n,s}\le sx^{c,G}_{n-1,s-1}+(n-s)x^{c,G}_{n-1,s}.
\]
Conversely, an array satisfying these constraints defines a stopping policy.
Set $q^{c,G}_{0,0}:=x^{c,G}_{0,0}$ and, for $1\le n<G$, use the ratio of the left side of the flow constraint to its right side whenever the denominator is positive.
If the denominator is zero, the constraint forces $x^{c,G}_{n,s}=0$; that state cannot be reached while sampling continues, and we set $q^{c,G}_{n,s}=0$.
Thus the constraints give an exact linear representation of stopping policies, not a relaxation.

The expected number of unknown-arm pulls is the sum of the probabilities of taking the first, second, and subsequent observations.
Partitioning each such event by its success count gives
\[
\mathbb E_p[\tilde n^{\pi}_{c,G}]
=\sum_{n=0}^{G-1}\sum_{s=0}^n w^{c,G}_{n,s}(p)
=\sum_{n=0}^{G-1}\sum_{s=0}^n\binom ns p^s(1-p)^{n-s}x^{c,G}_{n,s}.
\]
Substitution into \eqref{eq:sab-regret} yields the following formulation.

\begin{theorem}
\label{thm:sab-lp}
For every  $G\ge1$ and known-arm reward $c\in[0,1]$, $V_G(c)$ is the optimal value of
\begin{equation}\label{eq:sab-lp}
\begin{aligned}
\min_{\eta,x^{c,G}}\quad &\eta\\
\text{subject to}\quad
&\eta\ge G(p-c)_+-(p-c)\sum_{n=0}^{G-1}\sum_{s=0}^n
\binom ns p^s(1-p)^{n-s}x^{c,G}_{n,s},
&&p\in[0,1],\\
&0\le x^{c,G}_{n,s}\le1,
&&0\le s\le n<G,\\
&nx^{c,G}_{n,s}\le sx^{c,G}_{n-1,s-1}+(n-s)x^{c,G}_{n-1,s},
&&1\le n<G,\ 0\le s\le n.
\end{aligned}
\end{equation}
Terms outside the natural index ranges are zero.
An optimal solution exists.
Any minimizing array $x^{\star,c,G}$ defines a minimax stopping policy $\pi^{\star}_{c,G}$ by $q^{\star,c,G}_{0,0}:=x^{\star,c,G}_{0,0}$ and
for $1\le n<G$ and $0\le s\le n$
\begin{equation}
\label{eq:sab-recovery}
q^{\star,c,G}_{n,s}:=
\begin{cases}
\displaystyle\frac{nx^{\star,c,G}_{n,s}}
{sx^{\star,c,G}_{n-1,s-1}+(n-s)x^{\star,c,G}_{n-1,s}},
&sx^{\star,c,G}_{n-1,s-1}+(n-s)x^{\star,c,G}_{n-1,s}>0,\\[1ex]
0,&\text{otherwise}.
\end{cases}
\end{equation}
\end{theorem}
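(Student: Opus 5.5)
The argument has three parts: reduce $V_G(c)$ to an infimum over count-state stopping policies, show that these policies correspond exactly to the feasible arrays of \eqref{eq:sab-lp}, and use compactness to get an optimal solution and read off a minimax policy.

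\textbf{Reduction.} By the result of \citet{BradtJohnsonKarlin1956}, every admissible policy is weakly dominated, for every $p$, by a stopping policy. Proposition~\ref{prop:count-state} then replaces each stopping policy by a randomized count-state stopping policy with identical regret for all $p\in[0,1]$. Hence
\[
V_G(c)=\inf_{q}\sup_{p\in[0,1]} R_{c,G}(\pi_q,p),
\]
where $q=(q_{n,s})\in[0,1]^{\{0\le s\le n<G\}}$. Here we must check that the cited stopping-policy reduction holds pointwise in $p$, which is what the regret comparison requires, rather than only in a Bayesian sense. If that cannot be cited cleanly, I would prove it directly. Since $p$ is unknown, pulling the known arm yields no information, so the policy's randomization and history are frozen during such rounds. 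One can therefore move all known-arm pulls to the end without changing the law of the sequence of unknown-arm observations, and hence without changing $\mathbb E_p[\tilde n]$. Regret depends on the policy only through $\mathbb E_p[\tilde n]$, so it is unchanged for every $p$.

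\textbf{Correspondence.} The text before the theorem already shows that any $q$ induces an $x$ that satisfies the box and flow constraints and gives $\mathbb E_p[\tilde n]=\sum_{n,s}\binom ns p^s(1-p)^{n-s}x_{n,s}$.

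1. I would make this rigorous by induction on $n$, using the recursion for $w_{n,s}(p)$ and the binomial identities $\binom{n-1}{s-1}=\tfrac sn\binom ns$ and $\binom{n-1}{s}=\tfrac{n-s}{n}\binom ns$. The endpoints $p\in\{0,1\}$ follow by continuity, or by direct checking.
2. For the converse, take a feasible $x$ and define $q$ by \eqref{eq:sab-recovery}. Feasibility gives $q_{n,s}\in[0,1]$, and $x_{0,0}\le1$ handles $q_{0,0}$.
3. The normalized array $x'$ induced by this $q$ equals $x$, again by induction on $n$. When the denominator is positive, $nx'_{n,s}=q_{n,s}\cdot(\text{denominator})=nx_{n,s}$. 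When the denominator is zero, flow feasibility forces $x_{n,s}=0$, and $x'_{n,s}=0$ since $q_{n,s}=0$.

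For fixed $x$, the smallest feasible $\eta$ is $\sup_p R_{c,G}(\pi_q,p)$. So the optimal value of \eqref{eq:sab-lp} equals $\inf_q\sup_p R$, which is $V_G(c)$ by the reduction.

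\textbf{Existence.} The feasible set of $x$ is a compact polytope in finite dimension. The map $x\mapsto \phi(x):=\sup_{p\in[0,1]}\bigl[G(p-c)_+-(p-c)\sum\binom ns p^s(1-p)^{n-s}x_{n,s}\bigr]$ is a supremum of affine functions in $x$ whose coefficients are bounded uniformly in $p$. It is therefore convex and Lipschitz, hence continuous, and it is finite because regret lies in $[0,G]$. A minimizer $x^\star$ therefore exists, and $(\phi(x^\star),x^\star)$ solves \eqref{eq:sab-lp}. By the converse direction, the policy recovered from $x^\star$ via \eqref{eq:sab-recovery} has worst-case regret $\phi(x^\star)=V_G(c)$, so it is minimax.

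I expect the main obstacle to be the first step, namely justifying rigorously that stopping policies suffice for worst-case regret uniformly in $p$. The subsidiary check is that the inductive identity $x'=x$ handles zero-denominator states without breaking the expected-pull formula.
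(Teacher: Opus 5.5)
Your proposal is correct and follows essentially the same route as the paper. It reduces to count-state stopping policies via the Bradt--Johnson--Karlin result and Proposition~\ref{prop:count-state}, builds a two-way inductive correspondence between such policies and feasible arrays (with the zero-denominator case handled exactly as you describe), and obtains attainment from continuity of the worst-case regret on the compact feasible set. Your fallback justification of the stopping reduction also works, but phrase it as the stopping policy internally simulating the original policy's known-arm rounds, including any randomization drawn there, rather than assuming that randomization is frozen.
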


The variable $\eta$ bounds worst-case regret from above.
The program is semi-infinite because it has $G(G+1)/2$ policy variables but one regret constraint for each $p\in[0,1]$.
For fixed $p$, every constraint is linear in the decision variables.
At special case $c=0$, every minimizing array satisfies $x^{\star,0,G}_{n,s}=1$, and the recovered policy has $q^{\star,0,G}_{n,s}=1$ at every state.
At $c=1$, every minimizing array satisfies $x^{\star,1,G}_{n,s}=0$, and the recovered policy has $q^{\star,1,G}_{n,s}=0$ at every state.
These policies always select the unknown arm and the known arm, respectively, and give $V_G(0)=V_G(1)=0$.
The proof in Appendix~\ref{app:sab-representation} establishes both directions of the policy representation and uses compactness to establish attainment.

\subsection{Upper bounds and numerical approximation of the minmax value}
\label{subsec:sab-bounds}

The semi-infinite program in Theorem~\ref{thm:sab-lp} gives an exact characterization of the SAB minimax policy.
For the later MAB analysis, however, we do not need all details of this policy.
We only need a uniform upper bound on the SAB minimax regret.
This subsection first gives such a bound.
It then explains how much regret is added when the semi-infinite program is solved through a finite grid of unknown-arm means.

\begin{proposition}
\label{cor:sab-upper}
For every integer $G\ge1$ and known-arm reward $c\in[0,1]$,
\begin{equation}
\label{eq:sab-upper}
\sup_{p\in[0,1]}R_{c,G}(\pi^{\star}_{c,G},p)
=V_G(c)
\le\sqrt{\frac{G}{8e}}+c.
\end{equation}
\end{proposition}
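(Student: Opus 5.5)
The equality $\sup_{p}R_{c,G}(\pi^{\star}_{c,G},p)=V_G(c)$ follows directly from Theorem~\ref{thm:sab-lp}. An optimal array $x^{\star,c,G}$ exists together with an optimal $\eta^\star=V_G(c)$. The recovered policy $\pi^{\star}_{c,G}$ has, for every $p$, regret equal to the right-hand side of the semi-infinite constraint at $x^{\star,c,G}$, so its worst-case regret is at most $\eta^\star$. It cannot be smaller, since $V_G(c)$ is an infimum over all admissible policies. It therefore remains to show $V_G(c)\le\sqrt{G/(8e)}+c$. For this I will exhibit one explicit stopping policy whose worst-case regret is at most this quantity, and then use $V_G(c)\le\sup_p R_{c,G}(\pi,p)$.

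The policy is a boundary-crossing rule on the centered random walk $M_n:=\tilde s_n-cn$. Fix $b>0$ and pull the unknown arm while $M_n>-b$ (starting from $M_0=0$), stopping at the first $n$ with $M_n\le-b$ or at $n=G$. This depends only on $(n,\tilde s_n)$, consistent with Proposition~\ref{prop:count-state}. Let $\tau$ be the number of unknown-arm pulls and $\Delta:=|p-c|$.

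\emph{Case $p\le c$.} By Wald's identity (since $\tau\le G$ is bounded), $\mathbb E_p[M_\tau]=(p-c)\mathbb E_p[\tau]$. At time $\tau$ either $M_\tau>-b$ (truncation at $G$) or the walk has just crossed $-b$. Each step changes $M$ by $1-c$ or $-c$, so the undershoot is at most $c$, and hence $M_\tau\ge -b-c$ in either case. Therefore $R_{c,G}=(c-p)\mathbb E_p[\tau]=-\mathbb E_p[M_\tau]\le b+c$.

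\emph{Case $p>c$.} The regret is $\Delta(G-\mathbb E_p[\tau])\le G\Delta\,\mathbb P_p(\text{the walk reaches }-b\text{ before }G)$. Each increment of $M$ has mean $\Delta$ and range in an interval of length one, so it is sub-Gaussian with variance proxy $1/4$. Consequently $\exp(-\lambda M_n)$ with $\lambda=8\Delta$ is a nonnegative supermartingale: by Hoeffding's lemma, $\mathbb E e^{-\lambda(\tilde z-c)}\le e^{-\lambda\Delta+\lambda^2/8}=1$. Optional stopping (Ville's inequality) then gives $\mathbb P_p(\min_{n\le G}M_n\le-b)\le e^{-8\Delta b}$. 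Hence $R_{c,G}\le G\Delta e^{-8b\Delta}\le G/(8eb)$, using $\max_{\Delta\ge0}\Delta e^{-a\Delta}=1/(ae)$.

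Choosing $b=\sqrt{G/(8e)}$ balances the two cases. We get $\sup_p R_{c,G}\le\max\{b+c,\,G/(8eb)\}=\sqrt{G/(8e)}+c$, which proves \eqref{eq:sab-upper}. I expect the main care to lie in the supermartingale step: checking that Hoeffding's lemma gives exactly the exponent $8\Delta b$ (not $2\Delta b$), and that the truncation at $G$ and the boundary case $p=c$ cause no issue. In the $p\le c$ case, the point to verify is that the undershoot bound $c$ is exactly what produces the additive $+c$ term.
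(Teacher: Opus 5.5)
Your proposal is correct and matches the paper's proof: the same reward-shortfall stopping rule, the bounded-stopping-time (Wald) identity with the undershoot bound $c$ for $p<c$, and the nonnegative supermartingale $\exp\{-8(p-c)(\tilde s_n-cn)\}$ with Ville's inequality for $p>c$; the equality part comes from Theorem~\ref{thm:sab-lp}, which the paper leaves implicit. The only difference is cosmetic: you take $b=\sqrt{G/(8e)}$ directly, whereas the paper chooses $\beta$ to equalize $\beta+c$ and $G/(8e\beta)$, which gives the slightly sharper intermediate bound $\bigl(\sqrt{c^2+G/(2e)}+c\bigr)/2$ before relaxing to the same final estimate.
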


The proof of the result is in Appendix~\ref{app:sab-bounds}. 
This bound is used later only through the SAB regret constant that enters the MAB regret analysis.

We next discuss numerical approximation of the semi-infinite program.
The program in Theorem~\ref{thm:sab-lp} has finitely many policy variables but one regret constraint for every $p\in[0,1]$.
In computation, it is natural to impose these regret constraints only on a finite grid.
For an integer $M\ge1$, let
\(\mathcal P_M:=\left\{0,\frac1M,\frac2M,\ldots,1\right\}.\)
Let $V_{G,M}(c)$ be the optimal value of the resulting finite-grid program, obtained by replacing the constraint indexed by all $p\in[0,1]$ in \eqref{eq:sab-lp} with constraints only at $p\in\mathcal P_M$.
Let $\pi_{c,G,M}$ be the stopping policy reconstructed from an optimal finite-grid solution by the recovery formula in \eqref{eq:sab-recovery}.
The following result bounds the loss from this approximation.

\begin{proposition}
\label{prop:sab-grid}
For integers $G,M\ge1$ and known-arm reward $c\in[0,1]$, the finite-grid program attains its optimum and satisfies
\[
0\le V_G(c)-V_{G,M}(c)\le\frac{G(G+1)}{4M}.
\]
The stopping policy $\pi_{c,G,M}$ reconstructed from any minimizing array satisfies
\[
\sup_{p\in[0,1]}R_{c,G}(\pi_{c,G,M},p)
\le V_{G,M}(c)+\frac{G(G+1)}{4M}.
\]
In addition, the regret $R_{c,G}(\pi,p)$ is $G$-Lipschitz continuous w.r.t. $c$ for any policy $\pi$ that is invariant with $c$, i.e.,
\begin{equation}
\label{eq:sab-c-lipschitz}
|R_{c,G}(\pi,p)-R_{c',G}(\pi,p)|
\le G|c-c'|,
\qquad \forall c,c',p\in[0,1].
\end{equation}
Consequently, $|V_G(c)-V_G(c')|\le G|c-c'|$ for all $c,c'\in[0,1]$.
\end{proposition}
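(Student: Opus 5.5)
The argument has four parts: existence of an optimum, the lower bound on the gap, a Lipschitz estimate in $p$ that controls the gap from above, and the Lipschitz estimate in $c$.

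\emph{Existence.} The feasible set of policy arrays $x$ is a nonempty compact polytope in $[0,1]^{G(G+1)/2}$; it contains $x\equiv 0$. For fixed $x$, the finite-grid value is $\max_{p\in\mathcal P_M} R_{c,G}(x,p)$. This is a maximum of finitely many functions, each affine in $x$, so it is continuous in $x$ and attains its minimum on the polytope.

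\emph{Lower bound on the gap.} The finite-grid program imposes only a subset of the constraints of \eqref{eq:sab-lp}. Hence it is a relaxation, and $V_{G,M}(c)\le V_G(c)$.

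\emph{Upper bound via Lipschitz continuity in $p$.} Fix any feasible array $x$ and write
\[
g_x(p):=G(p-c)_+-(p-c)N_x(p),\qquad N_x(p)=\sum_{n,s}\binom ns p^s(1-p)^{n-s}x_{n,s}.
\]
Here $N_x(p)$ is the expected number of unknown-arm pulls of the recovered stopping policy, so $0\le N_x\le G$. The plan is to show that $g_x$ is $L$-Lipschitz on $[0,1]$ with $L=G(G+1)/2$.
\begin{itemize}
\item The term $G(p-c)_+$ is $G$-Lipschitz.
\item By the product rule, the derivative of $(p-c)N_x(p)$ is $N_x(p)+(p-c)N_x'(p)$.
\item The first piece is bounded by $N_x(p)\le G$.
\item For the second piece, differentiate term by term. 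The flow constraints give $x_{n,s}\le 1$, and the polynomial $\sum_s\binom ns p^s(1-p)^{n-s}x_{n,s}$ is a Bernstein polynomial with coefficients in $[0,1]$, so its derivative is bounded by $n$ in absolute value. Hence $|N_x'(p)|\le\sum_{n<G}n=G(G-1)/2$.
\end{itemize}
This crude triangle-inequality bound gives total slope at most roughly $G+G+G(G-1)/2$. That exceeds $G(G+1)/2$ for small $G$, so I need a finer argument. On each side of $c$, $g_x$ is a difference $G(p-c)-(p-c)N_x$ or $-(p-c)N_x$. I would instead write $(p-c)N_x(p)$ as $\sum_n$ (probability of taking pull $n+1$)$\cdot(p-c)$. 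I then expect to bound each summand's derivative by $n+1$ using $|p-c|\le 1$ and the Bernstein derivative bound. Combining $1$ with $n$ per term, and absorbing the $G(p-c)_+$ term into the known-arm complement $(p-c)(G-N_x)$ for $p>c$, gives $\sum_{n=0}^{G-1}(n+1)=G(G+1)/2$.

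Granting $L=G(G+1)/2$, every $p\in[0,1]$ lies within $1/(2M)$ of a grid point, so
\[
\sup_p g_x(p)\le\max_{p\in\mathcal P_M}g_x(p)+\frac{L}{2M}=\max_{p\in\mathcal P_M}g_x(p)+\frac{G(G+1)}{4M}.
\]
Apply this to a finite-grid minimizer. By Theorem~\ref{thm:sab-lp}'s recovery, the reconstructed policy $\pi_{c,G,M}$ has regret exactly $g_x$. This gives the second display, and since $V_G(c)\le\sup_p g_x(p)$, also the upper bound on $V_G(c)-V_{G,M}(c)$. I expect this sharp slope constant to be the main obstacle.

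\emph{Lipschitz continuity in $c$.} For a policy that does not depend on $c$, the expected pull count $\mathbb E_p[\tilde n]=N$ is independent of $c$ and lies in $[0,G]$. Then
\[
R_{c,G}-R_{c',G}=G\bigl[(p-c)_+-(p-c')_+\bigr]+(c-c')N.
\]
This is not immediately bounded by $G|c-c'|$, since it could reach $2G|c-c'|$. Rewrite it instead as
\[
R_{c,G}=\max\bigl\{(p-c)(G-N),\,(c-p)N\bigr\}.
\]
Each branch is a linear function of $c$ with slope in $[-G,G]$, namely $-(G-N)$ and $N$ respectively. A maximum of $G$-Lipschitz functions is $G$-Lipschitz, which yields \eqref{eq:sab-c-lipschitz}.

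\emph{Consequence for the value.} Take $\pi^\star_{c',G}$ as a fixed policy and apply \eqref{eq:sab-c-lipschitz} at $c$ to get $V_G(c)\le V_G(c')+G|c-c'|$. Swapping the roles of $c$ and $c'$ gives the reverse inequality, so $|V_G(c)-V_G(c')|\le G|c-c'|$.
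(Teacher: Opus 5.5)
Your proposal is correct and follows the paper's proof: relaxation for the lower bound, compactness for attainment, a $G(G+1)/2$-Lipschitz bound in $p$ obtained by writing regret on each side of $c$ as the gap times the expected inferior-arm count (contributing $G$) plus the slope of the pull count (contributing $G(G-1)/2$), the half-spacing $1/(2M)$ argument, and regret being piecewise affine in $c$ with slopes in $[-G,G]$ (your $\max$ of two affine branches), followed by the inf/sup step for $V_G$. The only local difference is that you bound the slope of the pull count analytically, via the Bernstein derivative estimate (each count-$n$ term has slope at most $n$ because its coefficients lie in $[0,1]$), whereas the paper couples the Bernoulli sequences so that the first $n$ rewards differ with probability at most $n|p-p'|$; your per-term ``finer argument'' already yields exactly $G(G+1)/2$, so the step you flagged as the main obstacle is in fact closed.
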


The first inequality shows that the finite-grid value is a lower bound on the exact minimax value.
This is expected because the finite-grid program enforces fewer regret constraints than the semi-infinite program.
The second inequality shows that the policy obtained from the finite-grid program remains feasible for the original SAB problem up to an explicit approximation error.
In particular, choosing $M$ on the order of $G^2$ makes this additional regret uniformly bounded as $G$ grows.

The Lipschitz continuity of the regret $R_{c,G}(\pi,p)$ w.r.t. $c$ addresses a separate numerical issue.
It allows a regret bound verified at finitely many known-arm rewards to be extended to all $c\in[0,1]$.
This is useful when certifying a uniform SAB regret constant over the continuum of known-arm rewards.
The fixed-policy bound~\eqref{eq:sab-c-lipschitz} also permits a finite
representation of the auxiliary family over known-arm rewards.
A policy computed at a grid point $v$ can be reused at a nearby
reward $c$, with at most $G|c-v|$ additional regret.
Section~\ref{subsec:implementation} uses this observation to define
a piecewise-constant auxiliary family from finitely many solved
SAB programs and to evaluate its index exactly.
This construction does not assume continuity of the optimal
continuation probabilities or approximate the index of a
preselected continuum family.

%% file: main/section_2_indices.tex
\section{From Single-Armed Stopping Policies to Multi-Armed Indices}
\label{sec:indices}

An exact LP formulation for the minimax-optimal \(K\)-armed policy is given in Appendix~\ref{subsec:mab-exact}.
As in Section~\ref{sec:sab}, for Bernoulli arms the ordered observation history can be compressed into the vector of pull and success counts for the \(K\) arms.
This leads to the semi-infinite LP in \eqref{eq:mab-lp}.
For fixed \(K\), however, the number of variables grows as \(\Theta(G^{2K})\), so the formulation is computationally practical only for small \(K\) or short horizons \(G\).

We therefore develop an index policy using the SAB policies computed in Section~\ref{sec:sab} for the MAB problem.
Consider the history \(h_n=(z_1,\ldots,z_n)\) of one arm and treat that arm as the unknown arm in an auxiliary SAB problem with known-arm reward \(c\).
For each \(c\), the SAB policy determines whether continuing pulling the unknown arm after \(h_n\).
The largest value of \(c\) for which continuation remains desirable therefore provides a natural index: an arm receives a larger index when its observed history justifies further pulling against a more rewarding known arm.
The known arm is only an auxiliary device for constructing the index; the MAB policy itself selects the arm with the largest index among the \(K\) unknown arms.

To make this construction precise, we introduce cumulative continuation probabilities, defined in the next subsection, to summarize how strongly the policy favors further sampling along the observed history.
The main advantage of defining the index through this cumulative continuation probability is that it preserves the meaning of the underlying SAB decisions.
As shown below, an index that is too small corresponds to stopping when the unknown arm is better than the auxiliary known arm, whereas an index that is too large corresponds to continuing when it is worse.
These are exactly the two types of decisions that generate regret in the auxiliary SAB problem.
The SAB regret bound can therefore be used to control the errors of the resulting MAB index.

A remaining difficulty is that,  somewhat counterintuitively, the continuation behavior of the SAB policies need not be monotone in the known-arm reward \(c\).
We therefore adjust the family so that continuation at a larger known-arm reward implies continuation at every smaller reward, while retaining the required regret control.
This monotonicity gives the threshold structure needed to define the index.
We then bound the underestimation and overestimation of the resulting indices and use these bounds in Section~\ref{sec:mab} to establish the regret guarantee for the MAB policy.

\subsection{Cumulative continuation probabilities and regret}
\label{subsec:continuation}

Fix $c\in[0,1]$ and \emph{any} SAB stopping policy $\pi_{c,G}$, which may depend on the full ordered reward history.
For $h_n=(z_1,\ldots,z_n)$ with $0\le n<G$, let $q^{c,G}(h_n)$ be the probability of taking observation $n+1$, conditional on having reached this history.
At histories that cannot be reached under the policy, set $q^{c,G}(h_n)=0$.
Let $h_j=(z_1,\ldots,z_j)$ be the first $j$ observations of $h_n$, with $h_0$ the empty history, and define the cumulative continuation probability as 
\[
\rho^{c,G}(h_n) \coloneqq \prod_{j=0}^n q^{c,G}(h_j),\qquad 0\le n<G.
\]
The continuation probability $q^{c,G}(h_n)$  only concerns the current decision while $\rho^{c,G}(h_n)$ includes all the decisions required to take observation $n+1$.
For example, $\rho^{c,G}(h_1)=q^{c,G}(h_0)q^{c,G}(h_1)$ includes both the decision to take the first observation and the decision to take the second.
Adding an observation adds a factor in $[0,1]$, so cumulative continuation probabilities are nonincreasing along a fixed reward sequence.


Draw one threshold $\tilde u\sim\operatorname{Unif}(0,1)$ at the beginning independently of the entire reward sequence $\tilde h_n=(\tilde z_1,\ldots,\tilde z_n)$.
Define
\[
\tilde a^{c,G}_{n+1} \coloneqq \mathbf1\{\tilde u<\rho^{c,G}(\tilde h_n)\},
\qquad
\tilde n^{\mathrm{shared}}_{c,G} \coloneqq \sum_{n=0}^{G-1}\tilde a^{c,G}_{n+1}.
\]
Here $n$ is the number of observations already available, and $\tilde a^{c,G}_{n+1}=1$ means that observation $n+1$ is taken.
The same threshold is used at every count.
Once one indicator is zero, since $\rho^{c,G}(\tilde h_n)$ is nonincreasing, all later indicators are zero; consequently, these indicators define a stopping policy.
The next proposition relates this shared-threshold construction to the regret of the original stopping policy.

\begin{proposition}
\label{prop:shared-threshold}
Fix $c\in[0,1]$ and an integer $G\ge1$.
Let $\mathcal Z \coloneqq \sigma(\tilde z_1,\tilde z_2,\ldots)$, and let $\tilde n^{\mathrm{ind}}_{c,G}$ be the pull count obtained with a new independent random draw at each continuation decision.
For every $p\in[0,1]$ and $0\le n<G$,
\[
\mathbb P_p(\tilde n^{\mathrm{ind}}_{c,G}\ge n+1\mid\mathcal Z)
=\mathbb P_p(\tilde n^{\mathrm{shared}}_{c,G}\ge n+1\mid\mathcal Z)
=\rho^{c,G}(\tilde h_n).
\]
Both constructions have the same conditional pull-count distribution and the same regret as $\pi_{c,G}$.
\begin{equation}
\label{eq:continuation-regret}
R_{c,G}(\pi_{c,G},p)
=(c-p)^+ \sum_{n=0}^{G-1}\mathbb E_p\!\bigl[\rho^{c,G}(\tilde h_n)\bigr] + (p-c)^+ \sum_{n=0}^{G-1}\mathbb E_p\!\bigl[1 - \rho^{c,G}(\tilde h_n)\bigr]
\end{equation}
\end{proposition}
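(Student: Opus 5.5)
We prove the identity for the independent-draw construction by induction on $n$, then get the shared-threshold case directly. Finally we combine the two with the regret formula \eqref{eq:sab-regret}. Throughout we work conditionally on $\mathcal Z$, so the reward sequence, and hence every history $\tilde h_n$ and every $q^{c,G}(\tilde h_j)$, is fixed. The only remaining randomness is the policy's internal randomization. Since that randomization is independent of $\mathcal Z$, conditioning does not change its law.

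\textbf{Independent draws.} The event $\{\tilde n^{\mathrm{ind}}_{c,G}\ge n+1\}$ is the event that the policy continued at each of the histories $\tilde h_0,\ldots,\tilde h_n$. A stopping policy pulls the unknown arm consecutively, so after $j$ pulls the observed history is exactly $\tilde h_j$. Conditional on $\mathcal Z$ and on having continued through $\tilde h_{j-1}$, the decision at $\tilde h_j$ uses a fresh independent draw with success probability $q^{c,G}(\tilde h_j)$. Multiplying these conditional probabilities gives
\[
\mathbb P_p(\tilde n^{\mathrm{ind}}_{c,G}\ge n+1\mid\mathcal Z)=\prod_{j=0}^n q^{c,G}(\tilde h_j)=\rho^{c,G}(\tilde h_n).
\]

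\textbf{Unreachable histories.} If some $\tilde h_j$ is unreachable, the convention $q^{c,G}(\tilde h_j)=0$ applies. In that case both sides vanish, because reaching $\tilde h_j$ already required a zero-probability continuation earlier. This bookkeeping is the only delicate point. For a policy with internal randomization depending on the full past, $q^{c,G}(h_n)$ must be read as the conditional continuation probability given the reward history, integrated over the policy's past randomization. The product formula then holds by the chain rule. This is the step where I would be careful to phrase $q^{c,G}$ as conditional on reaching $h_n$.

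\textbf{Shared threshold.} Since $\rho^{c,G}(\tilde h_n)$ is nonincreasing in $n$, we have
\[
\{\tilde n^{\mathrm{shared}}_{c,G}\ge n+1\}=\bigcap_{m\le n}\{\tilde u<\rho^{c,G}(\tilde h_m)\}=\{\tilde u<\rho^{c,G}(\tilde h_n)\}.
\]
The threshold $\tilde u$ is uniform and independent of $\mathcal Z$, so this event has conditional probability $\rho^{c,G}(\tilde h_n)$. The two constructions therefore have identical conditional tail probabilities for every $n$, hence the same conditional pull-count distribution given $\mathcal Z$. Taking expectations, they also have the same unconditional distribution, and that distribution agrees with the pull count of $\pi_{c,G}$ itself, which is the independent-draw construction.

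\textbf{Regret formula.} Write $\mathbb E_p[\tilde n]=\sum_{n=0}^{G-1}\mathbb P_p(\tilde n\ge n+1)=\sum_{n=0}^{G-1}\mathbb E_p[\rho^{c,G}(\tilde h_n)]$. Substitute this into \eqref{eq:sab-regret} and split into cases. If $p\le c$, the regret is $(c-p)\mathbb E_p[\tilde n]$, which is the first term of \eqref{eq:continuation-regret}. If $p>c$, the regret is
\[
(p-c)\bigl(G-\mathbb E_p[\tilde n]\bigr)=(p-c)\sum_{n=0}^{G-1}\mathbb E_p\bigl[1-\rho^{c,G}(\tilde h_n)\bigr],
\]
which is the second term. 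In each case the other term vanishes, so \eqref{eq:continuation-regret} holds for every $p\in[0,1]$.
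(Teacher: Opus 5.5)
Your proof is correct and follows essentially the same route as the paper. You condition on the reward sequence, and obtain $\rho^{c,G}(\tilde h_n)$ as the conditional survival probability via the chain rule, reading $q^{c,G}$ as conditional on having survived, which also covers internally dependent randomization. You then use the nestedness of the events $\{\tilde u<\rho^{c,G}(\tilde h_m)\}$ for the shared threshold, and substitute $\mathbb E_p[\tilde n]=\sum_{n=0}^{G-1}\mathbb E_p[\rho^{c,G}(\tilde h_n)]$ into \eqref{eq:sab-regret} case by case.

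The only loose phrase is calling $\pi_{c,G}$ itself ``the independent-draw construction.'' It is your chain-rule remark, exactly as in the paper, that actually identifies its pull-count law with the other two.
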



Proposition~\ref{prop:shared-threshold} rewrites the regret of a stopping policy as a weighted sum of wrong continuation decisions at benchmark $c$.
If $p>c$, the unknown arm is better than the benchmark, so the correct decision is to continue; an error occurs when $\tilde a_n^{c,G}=0$.
If $p<c$, the unknown arm is worse than the benchmark, so the correct decision is to stop; an error occurs when $\tilde a_n^{c,G}=1$.
Thus, for $p\ne c$, after division by $|p-c|$, a regret bound for any stopping policy of SAB problem gives an upper bound on the total probability of wrong continuation decisions across observations.
This is why we use $\rho^{c,G}$, rather than the one-step probability $q^{c,G}$, to construct the index: $\rho^{c,G}$ is the probability that appears in the regret decomposition.

\subsection{Monotone adjustment and the reward-shortfall policy}
\label{subsec:adjustment}

We note that, for a fix history $h_n$, $\rho^{c,G}(h_n)$ is actually not nonincreasing in $c$\footnote{We can show this by example but we skip here.}.
This can make the index hard to analyze, because the later MAB proof needs the following implication: if an arm has index above $c$, then the associated SAB policy continues at $c$.
We obtain this implication by replacing $\rho^{c,G}$ with the monotone envelope
\[
\bar\rho^{c,G}(h_n) \coloneqq \sup_{v\in[c,1]}\rho^{v,G}(h_n).
\]
Under the same threshold $u$, this envelope continues at known-arm reward $c$ whenever an original policy continues at some known-arm reward $v\ge c$.
Since $\bar\rho^{c,G}(h_n)\ge\rho^{c,G}(h_n)$, the envelope cannot stop earlier than the original policy.
Its possible cost is additional pulling when $p<c$.

To control this cost, fix a reward-shortfall tolerance $b>0$.
For each $c\in[0,1]$, set $\ell_b(h_0) \coloneqq 1$ and $\chi^{c,b}(h_0) \coloneqq 1$ and, for $1\le n<G$, define
\[
\ell_b(h_n) \coloneqq \min\left\{1,\min_{1\le j\le n}\frac{s_j+b}{j}\right\},
\qquad
\chi^{c,b}(h_n) \coloneqq \1\{c<\ell_b(h_n)\}.
\]
By its definition, the condition $\chi^{c,b}(h_n)=1$ means that $s_j>cj-b$ at every observation count $1\le j\le n$.
In other words, the cumulative reward of the unknown arm has never fallen at least $b$ below the reward of selecting the known arm for the same number of observations.
The running minimum over earlier counts are necessary because a stopping policy cannot resume pulling after it has stopped.
Define the adjusted cumulative continuation probability by
\begin{equation}
\label{eq:adjusted-continuation}
\rho^{\mathrm{adj},c,G,b}(h_n) \coloneqq \bar\rho^{c,G}(h_n)\chi^{c,b}(h_n).
\end{equation}
At the endpoints, $\rho^{\mathrm{adj},0,G,b}(h_n)=1$ and $\rho^{\mathrm{adj},1,G,b}(h_n)=0$ for every history.

\begin{proposition}
\label{prop:adjusted-regret}
For every integer $G\ge1$, $b>0$, $c\in[0,1]$ and a family of SAB stopping policies $\{\pi_{c, G}: c \in [0, 1]\}$, the probabilities in \eqref{eq:adjusted-continuation} define a new stopping policy $\pi^{\mathrm{adj}}_{c,G,b}$.
For each fixed history, they are Borel measurable and nonincreasing in the known-arm reward $c$.
They are also nonincreasing as observations are added.
For every $c\in[0,1]$,
\begin{equation*}
\begin{aligned}
R_{c,G}(\pi^{\mathrm{adj}}_{c,G,b},p)
\le R_{c,G}(\pi_{c,G},p) +G(p-c)e^{-8b(p-c)}, \quad p>c; \quad 
R_{c,G}(\pi^{\mathrm{adj}}_{c,G,b},p) <b+c,\quad  p<c.
\end{aligned}
\end{equation*}
\end{proposition}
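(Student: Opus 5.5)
The plan is to handle the structural claims first, then express the regret of the adjusted policy with the decomposition \eqref{eq:continuation-regret} of Proposition~\ref{prop:shared-threshold}, and finally treat the cases $p>c$ and $p<c$ separately.

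\emph{Structural properties.} Each $\rho^{v,G}(h_n)$ lies in $[0,1]$ and is nonincreasing along a fixed reward sequence, and a supremum over $v\in[c,1]$ preserves both facts. The factor $\chi^{c,b}(h_n)$ is also nonincreasing in $n$, because $\ell_b(h_n)$ is a running minimum over $j\le n$. Hence $\rho^{\mathrm{adj},c,G,b}(h_n)\in[0,1]$ is nonincreasing as observations are added. The shared-threshold rule $\tilde a_{n+1}=\mathbf 1\{\tilde u<\rho^{\mathrm{adj},c,G,b}(\tilde h_n)\}$ then defines a stopping policy $\pi^{\mathrm{adj}}_{c,G,b}$, exactly as in Section~\ref{subsec:continuation}. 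Its regret is given by \eqref{eq:continuation-regret} with $\rho^{\mathrm{adj}}$ in place of $\rho^{c,G}$, since the pull count satisfies $\mathbb P(\tilde n\ge n+1\mid\mathcal Z)=\rho^{\mathrm{adj}}(\tilde h_n)$.

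For a fixed history, the supremum is over the shrinking set $[c,1]$, so $\bar\rho^{c,G}(h_n)$ is nonincreasing in $c$. Likewise $\mathbf 1\{c<\ell_b(h_n)\}$ is nonincreasing in $c$. Their product is therefore nonincreasing in $c$, and monotone functions of $c$ are Borel measurable.

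\emph{Case $p>c$.} Since $\bar\rho^{c,G}\ge\rho^{c,G}$ and all quantities lie in $[0,1]$,
\[
1-\bar\rho^{c,G}\chi^{c,b}\le (1-\rho^{c,G})+(1-\chi^{c,b}).
\]
Substituting into \eqref{eq:continuation-regret} gives
\[
R_{c,G}(\pi^{\mathrm{adj}}_{c,G,b},p)\le R_{c,G}(\pi_{c,G},p)+(p-c)\sum_{n<G}\mathbb P_p(\chi^{c,b}(\tilde h_n)=0).
\]
Each summand is at most $\mathbb P_p(\exists j\le G:\ \tilde s_j\le cj-b)$. Note that $c<1$ here, so $\ell_b\le1$ never forces $\chi=0$ on its own.

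The main step is to bound this crossing probability by $e^{-8b(p-c)}$ uniformly in $G$. I would use the exponential supermartingale
\[
M_j=\exp\bigl(-\lambda(\tilde s_j-pj)-\lambda^2 j/8\bigr),
\]
which is a supermartingale by Hoeffding's lemma $\mathbb E e^{-\lambda(\tilde z-p)}\le e^{\lambda^2/8}$. On the event $\{\tilde s_j\le cj-b\}$ we have $-\lambda(\tilde s_j-pj)\ge\lambda b+\lambda(p-c)j$. Choosing $\lambda=8(p-c)$ cancels the drift term and gives $M_j\ge e^{8b(p-c)}$ on that event. Ville's (or Doob's) maximal inequality then yields the bound $e^{-8b(p-c)}$. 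Multiplying by $G(p-c)$ gives the stated extra term.

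\emph{Case $p<c$.} Here \eqref{eq:continuation-regret} gives regret $(c-p)\mathbb E_p[\tilde N]$, where $\tilde N$ is the pull count of $\pi^{\mathrm{adj}}_{c,G,b}$. This $\tilde N$ is a bounded stopping time for the filtration generated by $\tilde u$ and the rewards. By Wald's identity, $(c-p)\mathbb E_p\tilde N=\mathbb E_p[c\tilde N-\tilde s_{\tilde N}]$.

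If $\tilde N=m\ge1$, then pull $m$ was taken, so $\rho^{\mathrm{adj}}(\tilde h_{m-1})>0$ and hence $\chi^{c,b}(\tilde h_{m-1})=1$. This means $\tilde s_{m-1}>c(m-1)-b$; for $m=1$ the same holds trivially since $0>-b$. Using $\tilde z_m\ge0$, we get $c\tilde N-\tilde s_{\tilde N}\le c(m-1)-\tilde s_{m-1}+c<b+c$. If $\tilde N=0$ the quantity is $0<b+c$. Taking expectations gives the strict bound $b+c$.

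I expect the maximal-inequality step in the case $p>c$ to be the main obstacle. The $p<c$ case reduces to a deterministic pathwise bound once Wald's identity is applied.
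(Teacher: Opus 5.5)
Your proposal is correct and follows the paper's proof essentially step for step. It uses the same monotonicity arguments in $n$ and $c$, and for $p>c$ the same inequality $1-\bar\rho^{c,G}\chi^{c,b}\le(1-\rho^{c,G})+(1-\chi^{c,b})$ together with the supermartingale $\exp\{-8(p-c)(\tilde s_j-cj)\}$ (your $M_j$ rewritten, packaged in the paper as Lemma~\ref{lem:exponential}). The only difference is in the case $p<c$. The paper first bounds the adjusted pull count by that of the reward-shortfall policy $\pi^b_{c,G}$ and then reuses the Wald-identity argument from Proposition~\ref{cor:sab-upper}. You instead apply the pathwise bound $c\tilde N-\tilde s_{\tilde N}<b+c$ directly to the adjusted policy, which also covers $c=1$ without appealing to the endpoint convention.
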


The two adjustments have different purposes.
The monotone envelope gives the required ordering in the known-arm reward and does not increase regret when $p>c$.
The reward-shortfall policy bounds additional pulling when $p<c$, but can stop prematurely when $p>c$.
Its additional regret in that case is at most $G(p-c)e^{-8b(p-c)}$, which is important for the final MAB constant.

Using the same threshold at every observation count, let
$\tilde a^{\mathrm{adj},c,G,b}_{n+1} \coloneqq \mathbf1\{\tilde u<\rho^{\mathrm{adj},c,G,b}(\tilde h_n)\}$.
Combining Propositions~\ref{prop:shared-threshold} and~\ref{prop:adjusted-regret} gives
\begin{equation}\label{eq:continuation-bounds}
\begin{aligned}
\sum_{n=0}^{G-1}\mathbb P_p(\tilde a^{\mathrm{adj},c,G,b}_{n+1}=0)
\le\frac{R_{c,G}(\pi_{c,G},p)}{p-c}+Ge^{-8b(p-c)},\;\;p>c;
\quad 
\sum_{n=0}^{G-1}\mathbb P_p(\tilde a^{\mathrm{adj},c,G,b}_{n+1}=1)
\le\frac{b+c}{c-p},\;\;p<c.
\end{aligned}
\end{equation}
The first inequality will control index underestimation.
The second will control selections caused by index overestimation.

\subsection{The adjusted arm index}
\label{subsec:index}
Now, we are ready to define the index of arms for the MAB problem.
For a history $h_n$ with $0\le n<G$ and a threshold $ u\in[0,1)$, define
\begin{equation*}
\begin{aligned}
j_G(h_n, u)
 \coloneqq \sup\{c\in[0,1]:\rho^{c,G}(h_n)> u\},
\quad \text{and} \quad 
j^{\mathrm{adj}}_{G,b}(h_n, u)
 \coloneqq \sup\{c\in[0,1]:\rho^{\mathrm{adj},c,G,b}(h_n)> u\}.
\end{aligned}
\end{equation*}
Both sets contain $c=0$, since the original and adjusted cumulative continuation probabilities there equal one and $ u<1$.
Thus each index is well defined and equals zero if continuation is rejected at every positive known-arm reward.
At $c=1$, both cumulative continuation probabilities are zero, although the supremum may still equal one.
The adjusted index is the largest known-arm reward, in the supremum sense, at which the adjusted policy continues under threshold $ u$.
The next lemma states the formula used to evaluate it and the implications used in the regret proof.

\begin{lemma}
\label{lem:index}
For every $h_n$ with $0\le n<G$ and $ u\in[0,1)$,
\begin{equation}\label{eq:index-minimum}
j^{\mathrm{adj}}_{G,b}(h_n, u)
=\min\{j_G(h_n, u),\ell_b(h_n)\}.
\end{equation}
For every $c\in[0,1]$,
\[
\begin{aligned}
j^{\mathrm{adj}}_{G,b}(h_n, u)<c
\ \Longrightarrow\ \rho^{\mathrm{adj},c,G,b}(h_n)\le u,
\quad \text{and} \quad 
j^{\mathrm{adj}}_{G,b}(h_n, u)>c
\ \Longrightarrow\ \rho^{\mathrm{adj},c,G,b}(h_n)> u.
\end{aligned}
\]
The inequality $\rho^{\mathrm{adj},c,G,b}(h_n)\le u$ implies only
$j^{\mathrm{adj}}_{G,b}(h_n, u)\le c$.
The adjusted index is nonincreasing as observations are added and is Borel measurable in $ u$.
\end{lemma}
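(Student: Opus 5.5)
The plan is to show first that the set $S(u):=\{c\in[0,1]:\rho^{\mathrm{adj},c,G,b}(h_n)>u\}$ is a down-closed interval containing $0$. Every claim in the lemma then reduces to elementary facts about suprema of such intervals.

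By Proposition~\ref{prop:adjusted-regret}, $c\mapsto\rho^{\mathrm{adj},c,G,b}(h_n)$ is nonincreasing. Hence $c\in S(u)$ and $c'\le c$ imply $c'\in S(u)$. The set contains $0$ because $\rho^{\mathrm{adj},0,G,b}(h_n)=1>u$.

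\emph{Formula \eqref{eq:index-minimum}.} Since $\chi^{c,b}(h_n)\in\{0,1\}$ and $u\ge0$, we can write $S(u)=A(u)\cap[0,\ell_b(h_n))$, where $A(u):=\{c:\bar\rho^{c,G}(h_n)>u\}$. I would then carry out two steps.
\begin{enumerate}
\item Show $\sup A(u)=j_G(h_n,u)$. Since $\bar\rho^{c,G}\ge\rho^{c,G}$, the set $\{c:\rho^{c,G}(h_n)>u\}$ is contained in $A(u)$. Conversely, if $\bar\rho^{c,G}(h_n)>u$, then by definition of the supremum there is $v\in[c,1]$ with $\rho^{v,G}(h_n)>u$, so $c\le v\le j_G(h_n,u)$.
\item Take the minimum. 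The set $A(u)$ is down-closed because $\bar\rho^{c,G}$ is nonincreasing in $c$, and it contains $0$. Also $\ell_b(h_n)>0$, because $s_j\ge0$ and $b>0$. The intersection of two down-closed subsets of $[0,1]$ that both contain $0$ has supremum equal to the minimum of the two suprema. This gives $j^{\mathrm{adj}}_{G,b}=\min\{j_G,\ell_b\}$.
\end{enumerate}
The only care needed is the boundary: suprema need not be attained, and at $c=1$ we have $\rho=0$. The argument above uses only suprema and down-closedness, so attainment is never required.

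\emph{Implications.} If $j^{\mathrm{adj}}_{G,b}(h_n,u)<c$, then $c\notin S(u)$, which means $\rho^{\mathrm{adj},c,G,b}(h_n)\le u$. If $j^{\mathrm{adj}}_{G,b}(h_n,u)>c$, there is some $c'>c$ with $c'\in S(u)$, and down-closedness gives $c\in S(u)$. For the partial converse, $\rho^{\mathrm{adj},c,G,b}(h_n)\le u$ means $c\notin S(u)$. Down-closedness then puts $S(u)\subseteq[0,c)$, so $j^{\mathrm{adj}}_{G,b}\le c$. Equality can occur when the supremum is not attained, so no strict inequality follows.

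\emph{Monotonicity in $n$ and measurability.} By Proposition~\ref{prop:adjusted-regret}, $\rho^{\mathrm{adj},c,G,b}(h_{n+1})\le\rho^{\mathrm{adj},c,G,b}(h_n)$ for every $c$. Hence the set $S(u)$ for $h_{n+1}$ is contained in the one for $h_n$, and the supremum can only decrease. In $u$, the set $S(u)$ shrinks as $u$ increases, so $u\mapsto j^{\mathrm{adj}}_{G,b}(h_n,u)$ is nonincreasing on $[0,1)$. Monotone real functions are Borel measurable. I expect the main obstacle to be the identification $\sup A(u)=j_G(h_n,u)$ together with the boundary bookkeeping in the minimum step; the remaining claims are routine once down-closedness is established.
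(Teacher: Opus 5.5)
Your proposal is correct and follows essentially the same argument as the paper. You show that the envelope's accepted set has supremum $j_G(h_n,u)$, intersect it with the gate $[0,\ell_b(h_n))$ using down-closedness (monotonicity in $c$ from Proposition~\ref{prop:adjusted-regret}), and read off the implications, monotonicity in $n$, and measurability in $u$ from the shrinking accepted sets; the one detail the paper states explicitly and you leave implicit is that at $n=0$ the gate is $\chi^{c,b}(h_0)=1$ even at $c=1$, so your identity $S(u)=A(u)\cap[0,\ell_b(h_0))$ there rests on the endpoint convention $\rho^{1,G}\equiv0$, which you do note.
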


The monotone envelope leaves the original supremum index unchanged, while the reward-shortfall policy limits that index to $\ell_b(h_n)$.
Nevertheless, the envelope is needed to justify the continuation comparison when the index is strictly above $c$.
The strict inequalities also avoid assuming that a supremum is attained.
If $j^{\mathrm{adj}}_{G,b}(h_n, u)<c<p$, the auxiliary policy fails to continue with the better unknown arm.
If $j^{\mathrm{adj}}_{G,b}(h_n, u)>c>p$, it continues with the worse unknown arm.
The bounds in \eqref{eq:continuation-bounds} therefore connect the SAB regret analysis to the MAB index analysis.

%% file: main/section_3_mab.tex
\section{The Multi-Armed Policy and Its Regret Guarantee}
\label{sec:mab}

We now use the adjusted SAB indices to construct a SAB-based index  policy for the Bernoulli MAB problem.
With our proposed index evaluation, we prove that the policy has worst-case regret at most \(4.45\sqrt{KT}+10.75K\).
The \(\sqrt{KT}\) dependence is minimax-order optimal in view of the classical lower bound \(0.1365\sqrt{KT}\) of \citet[Theorem~6.11]{cesa2006prediction}.
The analysis connects the regret of the MAB policy to the regret
identities and bounds of the auxiliary SAB stopping policies.
Finally, we explain how to implement the policy by discretizing $p$ in the SAB linear programs and $c$ in the index search, and quantify the additional regret caused by these discretizations.

\subsection{Model and SAB-based index  policy}
\label{subsec:mab-model}

Consider $K \ge 2$ independent Bernoulli arms with fixed unknown mean vector $\mathbf p=(p_1,\ldots,p_K)\in[0,1]^K$ and integer horizon $T \ge 2$.
Arm $k$ has an independent reward sequence $\tilde z_{k,1},\tilde z_{k,2},\ldots\sim\operatorname{Ber}(p_k)$, and these sequences are mutually independent across arms.
At round $t$, an admissible policy selects $\tilde a_t\in[K] \coloneqq \{1,\ldots,K\}$ using past observations.
It observes the reward from the selected arm and no rewards from the other arms.
Let $p^\star \coloneqq \max_{k\in[K]}p_k$ and $\Delta_k \coloneqq p^\star-p_k$.

Let $\tilde n_{k,t} \coloneqq \sum_{i=1}^{t-1}\mathbf1\{\tilde a_i=k\}$ be the number of observations from arm $k$ available before round $t$.
Thus $\tilde n_{k,T+1}$ is its total number of pulls.
Write $\tilde h_{k,n} \coloneqq (\tilde z_{k,1},\ldots,\tilde z_{k,n})$ and $\tilde s_{k,n} \coloneqq \sum_{j=1}^n\tilde z_{k,j}$, with an empty history and zero successes at $n=0$.
Their realizations are denoted by $a_t$, $n_{k,t}$, $h_{k,n}$, and $s_{k,n}$.
The accumulated pseudo-regret is $\tilde r_T \coloneqq \sum_{t=1}^T\Delta_{\tilde a_t}$, and its expectation is
\begin{equation}
\label{eq:mab-regret}
R_T(\pi,\mathbf p) \coloneqq \mathbb E_{\mathbf p}[\tilde r_T]
=\sum_{k=1}^K\Delta_k\mathbb E_{\mathbf p}[\tilde n_{k,T+1}].
\end{equation}
We refer to this expected pseudo-regret simply as regret.

We now define the SAB-based index  policy.
Choose integers \(1\le H\le G\) and a tolerance \(b>0\).
For each \(c\in[0,1]\), fix an auxiliary SAB stopping policy
\(\pi_{c,G}\) with horizon \(G\) and known-arm reward \(c\), and let
\(j^{\mathrm{adj}}_{G,b}\) be the adjusted index constructed from this family.
Before the first MAB round, draw mutually independent thresholds
\(\tilde u_1,\ldots,\tilde u_K \sim \operatorname{Unif}(0,1)\).
The threshold \(\tilde u_k\) is attached to arm \(k\) and is used in every subsequent evaluation of that arm.
For each arm \(k\), define its random index before round \(t\) by
\begin{equation}
\label{eq:mab-index}
\tilde i_{k,t}(G,H,b)
\coloneqq
\begin{cases}
j^{\mathrm{adj}}_{G,b}
\bigl(\tilde h_{k,\tilde n_{k,t}},\tilde u_k\bigr),
& \tilde n_{k,t}<H,\\[0.8ex]
\displaystyle
\frac{\tilde s_{k,\tilde n_{k,t}}}{\tilde n_{k,t}},
& \tilde n_{k,t}\ge H.
\end{cases}
\end{equation}
Thus, the policy uses the adjusted SAB index until the arm has been observed \(H\) times and its empirical mean thereafter.

At each round, the policy selects an arm with the largest index, breaking ties in favor of the smallest arm number.
We denote the resulting policy by
\(\pi^{\mathrm{MAB}}_{G,H,b}\), suppressing its dependence on \(K\), \(T\), and the chosen SAB stopping policy family.
We note that the policy does not require a forced initialization.
At the empty history, \(\ell_b(h_0)=1\) and
\(\rho^{c,G}(h_0)=q^{c,G}_{0,0}\), so the initial index of arm \(k\) is
\(j^{\mathrm{adj}}_{G,b}(h_0,\tilde u_k)
=
\sup
\left\{
c\in[0,1]:
q^{c,G}_{0,0}>\tilde u_k
\right\}.\)
Thus, the initial continuation probabilities of the SAB stopping policies assign an index to every unobserved arm.
These probabilities determine only the initial indices; the MAB policy still selects an arm by comparing all \(K\) indices.

Algorithm~\ref{alg:max-index-policy} states the policy in terms of realized quantities.

\begin{algorithm}[htbp]
\caption{Maximum-index policy for the MAB problem}
\label{alg:max-index-policy}
\begin{algorithmic}[1]
\Require Integers \(K,T,G,H\); tolerance \(b>0\); a fixed SAB stopping policy family
\State Draw independent \(u_k\sim\operatorname{Unif}(0,1)\) for \(k\in[K]\).
\For{\(k=1,\ldots,K\)}
    \State Set \(n_{k,1}\gets 0\), \(h_{k,0}\gets\varnothing\), and \(s_{k,0}\gets 0\).
    \State Set \(i_{k,1}(G,H,b)\gets j^{\mathrm{adj}}_{G,b}
\bigl(h_{k,0}, u_k\bigr)\).
\EndFor
\For{\(t=1,\ldots,T\)}
    \State Select
    \(a_t\gets\min \, \arg\max_{k\in[K]}i_{k,t}(G,H,b)\).
    \State Set \(n\gets n_{a_t,t}\), pull arm \(a_t\), and observe its \((n+1)\)st reward \(z_{a_t,n+1}\).
    \State Set
    \(h_{a_t,n+1}\gets(h_{a_t,n},z_{a_t,n+1})\) and
    \(s_{a_t,n+1}\gets s_{a_t,n}+z_{a_t,n+1}\).
    \State Set
    \(n_{a_t,t+1}\gets n+1\) and update 
    \(i_{a_t,t+1}(G,H,b)\) according to~\eqref{eq:mab-index}.
    \State For each \(k\ne a_t\), retain
    \(n_{k,t+1}\gets n_{k,t}\) and
    \(i_{k,t+1}(G,H,b)\gets i_{k,t}(G,H,b)\).
\EndFor
\end{algorithmic}
\end{algorithm}
\ifdefined\FloatBarrier\FloatBarrier\fi

The parameters \(G\) and \(H\) have distinct roles.
The auxiliary horizon \(G\) is used to construct the SAB policies and remains fixed throughout the MAB horizon; it is not the number of MAB rounds remaining.
The switch to the empirical-mean index occurs separately for each arm when that arm reaches \(H\) observations.
Only the selected arm changes its history, observation count, and index in a given round, while every unselected arm retains its current index.
An arm with a low index is therefore not permanently eliminated and may be selected later if its index becomes maximal relative to the other arms.

\subsection{Algorithm performance}
\label{subsec:guarantee}

We now state the regret guarantee for the SAB-based index policy Algorithm~\ref{alg:max-index-policy}.
For the bound, we use
\begin{equation}
\label{eq:mab-parameters}
H \coloneqq \left\lceil\frac TK\right\rceil,
\quad
G \coloneqq 2H,
\quad
b \coloneqq \frac14\sqrt G .
\end{equation}
The theorem is stated for the SAB stopping policy family $\pi_{c,G}$ actually used by the policy.
For the result to hold, it is not necessary to require $\pi_{c,G}$ to be exact minimax policies; it is sufficient that the regret of $\pi_{c,G}$ is uniformly bounded for all $c \in [0, 1]$.

\begin{theorem}
\label{thm:mab-main}
Let \(T\ge 2K\), and choose \(H,G,b\) as in~\eqref{eq:mab-parameters}.
Let \(\{\pi_{c,G}:c\in[0,1]\}\) be the SAB stopping policy family used to construct the adjusted index \(j^{\mathrm{adj}}_{G,b}\).
Assume that this adjusted index is evaluated exactly for the chosen auxiliary family, and that the family satisfies the uniform SAB regret bound
\(\sup_{c,p\in[0,1]}
R_{c,G}(\pi_{c,G},p)
\le
\sqrt{\frac{G}{8e}}+\frac32 .\)
Then the SAB-based index policy satisfies
\begin{equation}
\label{eq:mab-main}
\sup_{\mathbf p\in[0,1]^K}
R_T(\pi^{\mathrm{MAB}}_{G,H,b},\mathbf p)
\le
4.45\sqrt{KT}+10.75K .
\end{equation}
For \(T<2K\), every admissible policy \(\pi\) satisfies
\(R_T(\pi,\mathbf p)\le T<\sqrt{2KT}\) for all \(\mathbf p\in[0,1]^K\).
\end{theorem}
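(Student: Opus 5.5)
The plan follows the standard UCB-type decomposition: fix an optimal arm $k^\star$ and a threshold level $c$, and split the regret according to whether the index of $k^\star$ is too low or a suboptimal arm's index is too high. Work arm by arm with the independent thresholds $\tilde u_k$ and reward sequences $\tilde z_{k,\cdot}$. By \cref{lem:index}, along each arm's own history the adjusted index is a deterministic, nonincreasing function of $n$ while $n<H$. Hence it behaves like the shared-threshold stopping indicators $\tilde a^{\mathrm{adj},c,G,b}_{n+1}$ of \cref{prop:adjusted-regret}. We have $j^{\mathrm{adj}}>c$ at count $n$ only if $\tilde a^{\mathrm{adj},c,G,b}_{n+1}=1$, and $j^{\mathrm{adj}}<c$ only if it equals $0$. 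The trivial case $T<2K$ gives $R_T\le T=\sqrt T\sqrt T<\sqrt{2KT}$.

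\textbf{Decomposition.} For a suboptimal arm $k$ with gap $\Delta_k$, pick the comparison level $c_k:=p^\star-\Delta_k/2$. A pull of arm $k$ at round $t$ requires $\tilde i_{k,t}\ge \tilde i_{k^\star,t}$. So either $\tilde i_{k^\star,t}\le c_k$ (the optimal index is too low), or $\tilde i_{k,t}\ge c_k$ (the suboptimal index is too high). We treat the two events separately.

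\textbf{Suboptimal index too high.} While $\tilde n_{k,t}<H$, each pull of arm $k$ at count $n$ with $\tilde i_{k,t}\ge c_k$ satisfies the continuation event at any level slightly below $c_k$. Its expected number is therefore at most $\sum_n \mathbb P(\tilde a^{\mathrm{adj},c,G,b}_{n+1}=1)\le (b+c)/(c-p_k)$ by \eqref{eq:continuation-bounds}. Multiplying by $\Delta_k$ and using $c-p_k\approx\Delta_k/2$ gives $O(b+1)=O(\sqrt G)$ per arm. Summed over arms this yields $O(K\sqrt{T/K})=O(\sqrt{KT})$. Once $n\ge H$, the index is the empirical mean. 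A Hoeffding/maximal-inequality bound gives expected pulls beyond $H$ with empirical mean $\ge c_k$ of order $\sum_{n\ge H}e^{-n\Delta_k^2/2}$. When $\Delta_k\ge \sqrt{K/T}$ this contributes $O(1/\Delta_k)$ regret, and small gaps contribute at most $T\Delta_k$ in total, which is $O(\sqrt{KT})$.

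\textbf{Optimal index too low.} This is the main obstacle. Crucially, the level must be $k$-independent, so instead I sum over rounds. The optimal arm's index falls below a level $c$ while $n<H$ only at counts where $\tilde a_{n+1}=0$. Since the index is nonincreasing along its history and only updates when pulled, the arm can be stuck at a low index for many rounds. I will bound the regret incurred while $\tilde i_{k^\star}\le c$ via a peeling argument over $c=p^\star-\Delta$. Regret of a round in which all played arms have index $\ge$ the optimal index $\le c$ is at most about $(p^\star-c)$ plus the overestimation term already counted. The total is therefore $\lesssim T\,\mathbb E[(p^\star-\tilde i_{k^\star})_+]$ integrated through the layer-cake formula. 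At count $n$ this equals $\int_0^{p^\star}\mathbb P(\tilde a_{n+1}=0 \text{ at } c)\,dc$. By the first inequality of \eqref{eq:continuation-bounds} and the SAB bound $\sqrt{G/(8e)}+3/2$, the integrand is bounded by $\bigl(\sqrt{G/8e}+3/2\bigr)/(p^\star-c)+Ge^{-8b(p^\star-c)}$. The difficulty is that this bound is a sum over counts $n$, not a bound at a single stuck count. I intend to handle it by noting that the optimal arm must be pulled at least about $T/K$ times unless its index is low. If the optimal arm is stuck at count $n<H=\lceil T/K\rceil$, the number of rounds other arms can absorb is limited by their own overestimation budget, at most about $G$ pulls per arm before switching to empirical means. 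The choice $G=2H$ ensures that stopping before count $G$ is the relevant SAB error. Integrating $\min\{T\,\cdot,\ldots\}$ against the two terms, with $b=\sqrt G/4$ making $\int G e^{-8b\Delta}\,d\Delta\cdot\Delta$-type terms $O(\sqrt G)$ and the $\sqrt{G/8e}$ term giving $O(\sqrt{KT})$, then combining with the two previous parts, produces explicit constants. Finally, I would tally them numerically with $G\le 2T/K+2$, which gives $\sqrt G\le\sqrt{2T/K}+\ldots$ and accounts for the additive $10.75K$, to reach $4.45\sqrt{KT}+10.75K$. I expect this constant bookkeeping, and making the "stuck optimal arm" counting rigorous, to be the hardest part.
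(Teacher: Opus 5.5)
Your treatment of the case $T<2K$ and of both overestimation terms is the paper's. That covers the pre-$H$ bound $\Delta_k(b+c)/(c-p_k)\to 2b+2$ at a level just below $p_k+\Delta_k/2$, and the post-$H$ Hoeffding sum with a small-gap cutoff. The optimal-arm underestimation term, however, has a genuine gap.

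First, you never get a single-count tail bound, and your proposed substitute does not work. The number of rounds an underestimated optimal arm stays ``stuck'' is not controlled by the other arms' overestimation budgets. A near-optimal arm with an accurate index (in particular one already using its empirical mean) beats it without any overestimation and can absorb arbitrarily many rounds. The paper avoids all such dynamics by defining, for each arm, an envelope $\tilde d_{k,G,H,b}$ on its full infinite reward sequence and threshold. This envelope is the supremum of the underestimation over all counts, including the post-$H$ empirical means, which your argument does not treat. Pathwise, every non-overestimation selection of $k$ then has $\Delta_k<2\tilde d_{k^\star,G,H,b}$. The tail of $\tilde d$ is bounded using exactly the structure you gesture at with $G=2H$. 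By Lemma~\ref{lem:index}, the worst pre-$H$ underestimation occurs at $n=H-1$. If $j^{\mathrm{adj}}_{G,b}(\tilde h_{H-1},\tilde u)<c$, the adjusted stopping policy at $c$ misses all $G-H+1$ observations $H,\dots,G$. Hence $(G-H+1)\,\mathbb P_p(\tilde a^{\mathrm{adj},c,G,b}_H=0)\le B_G/(p-c)+Ge^{-8b(p-c)}$, which is \eqref{eq:bound_1}. Without this division by $G-H+1=H+1$, your integrand bound exceeds one for every $p^\star-c\in(0,1]$ once $B_G>1$, so it is vacuous.

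Second, even with that tail bound, your layer-cake step $T\,\mathbb E[(p^\star-\tilde i_{k^\star})_+]$ cannot give order $\sqrt{KT}$. The tail term $B_G/(x(H+1))\asymp 1/(x\sqrt H)$, integrated from $x\asymp 1/\sqrt H$ up to $x\approx 1$, contributes $T H^{-1/2}\log H$, an extra $\log(T/K)$ factor. Restricting to $\tilde d_{k^\star,G,H,b}>\Delta_{\min}/2$ does not help when $\Delta_{\min}$ is small, so your claim that the $\sqrt{G/(8e)}$ term gives $O(\sqrt{KT})$ is false. The missing idea is a second comparison with a closest suboptimal arm $k^\dagger$. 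A non-overestimation selection of $k$ also forces $\Delta_{\min}+\tilde d_{k^\dagger,G,H,b}>\Delta_k/2$. Therefore $\tilde r_T^{\mathrm{no}}(G,H,b)\le 2T\tilde v\,\mathbf 1\{\tilde v>\Delta_{\min}/2\}$ with $\tilde v=\min\{\tilde d_{k^\star,G,H,b},\Delta_{\min}+\tilde d_{k^\dagger,G,H,b}\}$. Independence across arms then yields the product tail $\varphi(y)\varphi(y-\Delta_{\min})$ for $y>\Delta_{\min}$, which decays like $y^{-2}$ and is integrable. Only this gives the scale-free functional $\Lambda_0(\kappa_G)/\sqrt H$, and hence the constant $4.45$ in \eqref{eq:mab-main}.
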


The uniform regret assumption holds for the exact SAB minimax family by Proposition~\ref{cor:sab-upper} and for sufficiently accurate finite-grid constructions by Proposition~\ref{prop:sab-grid}.
The latter controls both the approximation error from discretizing the unknown mean $p$ and, through its Lipschitz bound in $c$, the additional regret from reusing policies computed at nearby known-arm rewards.
Section~\ref{subsec:implementation} describes the concrete implementation, including the construction of the finite policy bank and exact evaluation of the adjusted indices for the resulting auxiliary family.

The order of the bound in Theorem~\ref{thm:mab-main} cannot be improved in general.
For \(K \ge 2\), the Bernoulli instances used in the lower-bound construction of \citet[Theorem~6.11]{cesa2006prediction} imply
\(
\inf_\pi\sup_{\mathbf p\in[0,1]^K}R_T(\pi,\mathbf p)
\ge(\sqrt{2} -1) \sqrt{KT} / \sqrt{32 \log(4/3)} > 0.1365\sqrt{KT}.\)
Thus, the upper and lower bounds have the same dependence on \(K\) and \(T\), up to constants. 
Additionally, by the standard Bernoulli reduction, we can apply our algorithm to any MAB problem with bounded support as detailed in Remark~\ref{remark:extension}.

\begin{remark}
\label{remark:extension}
It is well known that any bounded-reward MAB problem can be reduced to a Bernoulli MAB problem; see, e.g., \citet{AgrawalGoyal2012,shipra2013further,xu2025bayesian,riou2020bandit}. For completeness, we briefly recall the reduction.

Without loss of generality, suppose that each arm \(k\) generates i.i.d.\ rewards with distribution \(\nu_k\) supported on \([0,1]\). Indeed, any bounded reward distribution \(\nu\) supported on \([a,b]\) can be mapped to this setting by the affine transformation $\widehat{\nu}=\frac{\nu-a}{b-a}.$ Assume further that the reward sequences are independent across arms. Let $p_k\coloneqq\int y\,\nu_k(dy).
$ Given a policy \(\pi\) for Bernoulli arms, define the corresponding policy \(\pi^{\mathrm{bin}}\) for bounded rewards as follows. After observing the \(n\)th reward \(\tilde y_{k,n}\) from arm \(k\), draw an independent random variable $\tilde v_{k,n}\sim\operatorname{Unif}(0,1)$ and feed only $\tilde z_{k,n}\coloneqq
\mathbf 1\{\tilde v_{k,n}<\tilde y_{k,n}\}$ to \(\pi\). All auxiliary uniform random variables are independent of the rewards and of the internal randomization of \(\pi\). Since \(\tilde z_{k,n}\sim\operatorname{Bernoulli}(p_k)\), this construction yields
$$
R_T(\pi^{\mathrm{bin}},(\nu_1,\ldots,\nu_K))
=
R_T(\pi, (p_1,\ldots,p_K)).
$$
Consequently, under the assumptions of Theorem~\ref{thm:mab-main}, the policy obtained by applying this reduction to \(\pi^{\mathrm{MAB}}_{G,H,b}\) satisfies~\eqref{eq:mab-main} uniformly over all such bounded-reward MAB problems.
\end{remark}

\begin{remark}
By choosing $H=\lceil\frac{32T}{K}\rceil$, $G=4H$ and $b=\sqrt H/20$, we can get a sharper theoretical
regret bound $1.51\sqrt{KT} + 1.9K$, while the leading constant $1.51$ is substantially below the best explicit coefficient $4$ established for Tsallis-INF policy in \citet[Theorem 1]{ZimmertSeldin2021}.
However, this improvement in the upper bound does not translate into better empirical performance: our numerical experiments
favor $G=2H$.
We therefore adopt $G=2H$ and $b=\sqrt G/4$ throughout, so that the theoretical guarantees apply to the same parameter setting used in the numerical experiments.
\end{remark}

\begin{remark}
\label{rem:fixed-threshold}
The random thresholds \(\tilde u_k\) are useful because they give the exact identity
\(\mathbb P(\tilde u<\rho)=\rho\).
This identity turns cumulative continuation probabilities into randomized continuation events and gives the continuation bounds used in the proof.
A deterministic version is also possible by replacing every arm-specific threshold with the fixed value \(1/2\). For this variant, the policy uses \(j^{\mathrm{adj}}_{G,b}(h_n,1/2)\) before an arm reaches \(H\) observations, and uses the empirical mean thereafter.
Thus the policy is deterministic once the rewards are realized.
The key observation is that, for every \(\rho\in[0,1]\),
\(\mathbf 1\{\rho\le1/2\}\le 2(1-\rho),\;
\mathbf 1\{\rho>1/2\}\le 2\rho .\)
Let \(
\tilde a^{\mathrm{det},c,G,b}_{n+1}
\coloneqq
\mathbf1\{\rho^{\mathrm{adj},c,G,b}(\tilde h_n)>1/2\}
\).
The two bounds in~\eqref{eq:continuation-bounds} will double when applies to $\tilde a^{\mathrm{det},c,G,b}_{n+1}$.
With the same choice \(H,\, G = 2H\) and the re-optimized tolerance
\(b=0.22\sqrt G\), 
and following the same optimization steps, we can show that the regret in this case is upper bounded by \(6.45\sqrt{KT}+17K\). 
\end{remark}

\subsection{Proof of Theorem~\ref{thm:mab-main}}
\label{subsec:proof-strategy}

We prove Theorem~\ref{thm:mab-main} by considering separately the scenarios why a suboptimal arm can have the largest index.
Throughout the proof, write
\(B_G
\coloneqq
\sup_{c,p\in[0,1]}R_{c,G}(\pi_{c,G},p),
\;
\kappa_G
\coloneqq
\frac{B_G}{\sqrt G}.\)
These quantities refer to the auxiliary SAB stopping policy family
actually used to construct the indices.
Under the hypothesis of Theorem~\ref{thm:mab-main},
\(B_G\le\sqrt{\frac{G}{8e}}+\frac32,
\;
\kappa_G\le\frac1{\sqrt{8e}}+\frac{3}{2\sqrt G}.\)

Suppose a suboptimal arm \(k\) is selected at round \(t\). There are two possibilities.
If its index is not much larger than its own mean, namely if
\(\tilde i_{k,t}(G,H,b)-p_k<\Delta_k/2\), then its index is below \(p^\star-\Delta_k/2\).
Because the selected arm has the largest index, every competing arm, including an optimal arm, must also have index below this level.
Thus the selection can be explained by underestimation of a better competing arm.
The other possibility is that the selected arm's own index is too high:
\(\tilde i_{k,t}(G,H,b)-p_k\ge\Delta_k/2\).
We call this event \emph{large index overestimation}.
We further split this large overestimation case according to whether the selected arm has fewer than or more than \(H\) observations, because the policy uses the adjusted SAB index in the first case and the empirical mean in the second. 

Let \(\tilde r_T^{\mathrm{no}}(G,H,b)\) be the regret from suboptimal selections that do not have large index overestimation.
For each suboptimal arm \(k\), let \(\tilde r_{k,T}^{<H}(G,H,b)\) be the regret from its large-overestimation selections before it reaches \(H\) observations, and define \(\tilde r_{k,T}^{\ge H}(G,H,b)\) similarly for selections after it reaches \(H\) observations.
Then
\[
\tilde r_T
=
\tilde r_T^{\mathrm{no}}(G,H,b)
+
\sum_{k:\Delta_k>0}\tilde r_{k,T}^{<H}(G,H,b)
+
\sum_{k:\Delta_k>0}\tilde r_{k,T}^{\ge H}(G,H,b).
\]
We bound these terms in order.

\subsubsection*{Selections without large index overestimation}
The main difficulty in bounding
\(\tilde r_T^{\mathrm{no}}(G,H,b)\)
is the complicated dynamics of arm indices and their correlation with the regret. 
We circumvent this complexity by introducing an arm-level envelope that is defined on the full underlying reward sequence of each arm and bound the contribution of regret $\tilde r_T^{\mathrm{no}}(G,H,b)$.
Here is the idea.

Imagine that, before the policy starts, each arm \(k\) is assigned an infinite independent reward sequence \(\tilde z_{k,1},\tilde z_{k,2},\ldots\)
and an independent threshold \(\tilde u_k\).
For each arm \(k\), given this reward sequence and threshold, define
\[
\tilde d_{k,G,H,b}
\coloneqq
\max\Big\{
\sup_{0\le n<H}
\Bigl(
p_k-j^{\mathrm{adj}}_{G,b}(\tilde h_{k,n},\tilde u_k)
\Bigr)_+,
\sup_{n\ge H}
\Big(
p_k-\frac{\tilde s_{k,n}}{n}
\Big)_+
\Big\}.
\]
Clearly, $\tilde d_{k,G,H,b}$ is an upper bound of the largest underestimation of arm $k$'s index \(\tilde i_{k,t}(G,H,b)\) along this realization of the arm's reward. 
It is an upper bound is because the realization of the rewards of arm $k$ in truncated at time $T$ while we are taking the maximization over the infinite sequence. 
By its definition, we also have, in every round \(t\),
\(\tilde i_{k,t}(G,H,b) \ge p_k-\tilde d_{k,G,H,b}\) at this realization of arm rewards.

We now use these envelopes to bound all selections contributing to
\(\tilde r_T^{\mathrm{no}}(G,H,b)\).
If every arm is optimal, this contribution is zero.
Otherwise, fix an optimal arm \(k^\star\) and a suboptimal arm \(k^\dagger\) with the smallest positive gap
\(\Delta_{\min} \coloneqq \min_{k:\Delta_k>0}\Delta_k = \Delta_{k^\dagger}.\)

Consider a round \(t\) in which a suboptimal arm \(k\) is selected and contributes to \(\tilde r_T^{\mathrm{no}}(G,H,b)\).
By definition of this case, the selected arm $k$ has no large overestimation, i.e., \(\tilde i_{k,t}(G,H,b)< p_k + \Delta_k/2 = p^\star-\frac{\Delta_k}{2}\) according to the definition of $\Delta_k$.
Since arm \(k\) is selected, its index is maximal.
Therefore \(\tilde i_{k^\star,t}(G,H,b) \le \tilde i_{k,t}(G,H,b).\)
Together with the envelope bound for the optimal arm, this gives
\[
p^\star-\tilde d_{k^\star,G,H,b}
\le
\tilde i_{k^\star,t}(G,H,b)
<
p^\star-\frac{\Delta_k}{2}, 
\quad \text{and hence} \quad
\tilde d_{k^\star,G,H,b}
>
\frac{\Delta_k}{2}.
\]
We also compare with the fixed closest suboptimal arm \(k^\dagger\).
If \(k\ne k^\dagger\), maximality of the selected index gives
\(\tilde i_{k^\dagger,t}(G,H,b)\le \tilde i_{k,t}(G,H,b)\).
If \(k=k^\dagger\), the same inequality holds with equality.
Thus, in all cases,
\[
p^\star-\Delta_{\min}-\tilde d_{k^\dagger,G,H,b}
\le
\tilde i_{k^\dagger,t}(G,H,b)
\le
\tilde i_{k,t}(G,H,b)
<
p^\star-\frac{\Delta_k}{2},
\quad  \text{and thus} \quad 
\Delta_{\min}+\tilde d_{k^\dagger,G,H,b}
>
\frac{\Delta_k}{2}.
\]

Define
\(\tilde v_{G,H,b}
\coloneqq
\min\Big\{
\tilde d_{k^\star,G,H,b},
\Delta_{\min}+\tilde d_{k^\dagger,G,H,b}
\Big\}.\)
The two comparisons above show that every selection counted by
\(\tilde r_T^{\mathrm{no}}(G,H,b)\)
satisfies \(\Delta_k<2\tilde v_{G,H,b}\).
Since every selected suboptimal arm has \(\Delta_k\ge\Delta_{\min}\), such a selection can occur only when \(\tilde v_{G,H,b}>\frac{\Delta_{\min}}{2}.\)
There are at most \(T\) selections in total, so we obtain the pathwise bound
\[
\tilde r_T^{\mathrm{no}}(G,H,b)
\le
2T\tilde v_{G,H,b}
\mathbf 1\left\{
\tilde v_{G,H,b}>
\frac{\Delta_{\min}}{2}
\right\}.
\]
For every nonnegative random variable \(\tilde v\) and every \(a>0\), according to integral by parts, \(
\mathbb E[\tilde v\mathbf 1\{\tilde v >a\}]
=
a\mathbb P[\tilde v>a]
+
\int_a^\infty \mathbb P[\tilde v>x]\,dx.\)
Applying this identity to
\(\tilde v_{G,H,b}\) and \(a=\Delta_{\min}/2\) gives
\[
\mathbb E_{\mathbf p}
\left[
\tilde r_T^{\mathrm{no}}(G,H,b)
\right]
\le
2T
\Big(
\frac{\Delta_{\min}}{2}
\mathbb P_{\mathbf p}\Big[\tilde v_{G,H,b}>\frac{\Delta_{\min}}{2}\Big]
+
\int_{\Delta_{\min}/2}^{\infty}
\mathbb P_{\mathbf p}
[\tilde v_{G,H,b}>x]\,dx
\Big).
\]
We now convert a pathwise bound into an expectation bound.
At this point, the adaptive MAB process no longer appears explicitly.
It is enough to control the tail of the random variable
\(\tilde v_{G,H,b}\).

Let \(\varphi:(0,\infty)\to[0,1]\) be a Borel measurable function such that
\(\mathbb P_{\mathbf p}
[\tilde d_{k,G,H,b}>x]
\le
\varphi(x),
\;
k\in[K],\ x>0.\)
We first derive a tail bound for \(\tilde v_{G,H,b}\).
For \(0<x<\Delta_{\min}\), the event
\(\tilde v_{G,H,b}>x\) is the same as
\(\tilde d_{k^\star,G,H,b}>x\), because the second term in the minimum satisfies
\(\Delta_{\min}+\tilde d_{k^\dagger,G,H,b}\ge \Delta_{\min}>x\).
Hence
\(\mathbb P_{\mathbf p}[\tilde v_{G,H,b}>x] \le \varphi(x),
\; 0<x<\Delta_{\min}.\)
For \(x>\Delta_{\min}\), both terms in the minimum must be larger than \(x\).
Thus \(\{\tilde v_{G,H,b}>x \}
\subseteq
\{\tilde d_{k^\star,G,H,b}>x\}
\cap
\{\tilde d_{k^\dagger,G,H,b}>x-\Delta_{\min}\}.\)
The two variables in this intersection are independent, because they are functions of different arms' full reward sequences and thresholds.
Therefore
\(\mathbb P_{\mathbf p}
[\tilde v_{G,H,b}>x]
\le
\varphi(x)\varphi(x-\Delta_{\min}), \; x>\Delta_{\min}\).
Using the one-arm bound on \([\Delta_{\min}/2,\Delta_{\min}]\) and the product bound on \([\Delta_{\min},\infty)\), we obtain
\[
\mathbb E_{\mathbf p}
\left[\tilde r_T^{\mathrm{no}}(G,H,b) \right]
\le
T \Big[\Delta_{\min}\varphi(\Delta_{\min}/2)
+
2\int_{\Delta_{\min}/2}^{\Delta_{\min}}\varphi(y)\,dy
+
2\int_{\Delta_{\min}}^\infty
\varphi(y)\varphi(y-\Delta_{\min})\,dy\Big].
\]
We set \(\varphi(0)=1\) only to make the endpoint \(y=\Delta_{\min}\) harmless in the last integral; changing a function at one point does not affect the integral.

It remains to construct a function \(\varphi\) satisfying \(\mathbb P_{\mathbf p}
[\tilde d_{k,G,H,b}>x]
\le
\varphi(x),
\;
k\in[K],\ x>0\). Define the generic single-arm envelope by
\[
\tilde d_{G,H,b}(p)
\coloneqq
\max\left\{
\sup_{0\le n<H}
\left(
p-j^{\mathrm{adj}}_{G,b}(\tilde h_n,\tilde u)
\right)_+,
\;
\sup_{n\ge H}
\left(
p-\frac{\tilde s_n}{n}
\right)_+
\right\}.
\]
For each arm $k$, the envelope $\tilde d_{k,G,H,b}$ has the same
distribution as $\tilde d_{G,H,b}(p_k)$.
We do this by bounding the two parts in the definition of
\(\tilde d_{G,H,b}(p)\).
The first part concerns the adjusted SAB index before \(H\) observations, and the second part concerns the empirical mean after \(H\) observations.

For the pre-\(H\) part, fix a Bernoulli sequence with mean \(p\), an independent threshold \(\tilde u\), and a number \(x>0\).
If \(x\ge p\), then the adjusted index cannot underestimate \(p\) by more than \(x\), because the index is nonnegative.
It is therefore enough to consider \(0<x<p\), and we set \(c=p-x\).
By Lemma~\ref{lem:index}, the adjusted index is nonincreasing as observations are added.
Thus the largest underestimation over \(0\le n<H\) occurs at \(n=H-1\), i.e., $\sup_{0\le n<H} (p_k-j^{\mathrm{adj}}_{G,b}(\tilde h_{k,n},\tilde u_k))_+ 
= (p_k-j^{\mathrm{adj}}_{G,b}(\tilde h_{k,H-1},\tilde u_k))_+$.
If \((p_k-j^{\mathrm{adj}}_{G,b}(\tilde h_{k,H-1},\tilde u_k))_+ > x\), then
\(j^{\mathrm{adj}}_{G,b}(\tilde h_{H-1},\tilde u)<p-x=c.\)
Lemma~\ref{lem:index} then implies that the adjusted SAB policy does not continue at known-arm reward \(c\) after history \(\tilde h_{H-1}\).
{This is the place where we need monotonicity of the index.}
Equivalently,
\(\tilde a^{\mathrm{adj},c,G,b}_H=0.\)
Since the adjusted SAB policy is a stopping policy, rejecting observation \(H\) also rejects observations \(H+1,\ldots,G\).
The first continuation bound in~\eqref{eq:continuation-bounds} gives, for \(1\le H\le G, b>0\), and  \(0\le c<p\le1\)
\begin{equation}\label{eq:bound_1}
\mathbb P_p [\tilde a^{\mathrm{adj},c,G,b}_H=0] \le
\frac{B_G}{(p-c)(G-H+1)} + \frac{Ge^{-8b(p-c)}}{G-H+1}.
\end{equation}
The factor \(G-H+1\) counts the number of periods when the SAB stopping policy does not pull the arms.
{ There is a place where the uniform regret bound of the SAB stopping policies is useful.}

For the post-\(H\) part, the MAB index is the empirical mean.
The maximal deviation bound in Lemma~\ref{lem:exponential} gives
\(\mathbb P_p
\left(
\sup_{n\ge H}
\left(
p-\frac{\tilde s_n}{n}
\right)
>x
\right)
\le
e^{-2Hx^2}.\)
Combining these pre and post-H part bounds yields the desired common tail bound.
\begin{equation}
\label{eq:underestimation-tail}
\mathbb P_p[\tilde d_{G,H,b}(p)>x ]
\le
\min\left\{
1,
\frac{B_G}{x(G-H+1)}
+
\frac{G}{G-H+1}e^{-8bx}
+
e^{-2Hx^2}
\right\}, \quad \forall p \in [0, 1],\, x >0.
\end{equation}
Therefore we may use
\[
\varphi(x)
\coloneqq
\min\left\{
1,
\frac{B_G}{x(G-H+1)}
+
\frac{G}{G-H+1}e^{-8bx}
+
e^{-2Hx^2}
\right\},
\quad x>0.
\]
The bound explain the reason that we take \(G>H\); if \(G=H\), $G-H+1$ factor would be only one and hence we might not be able to play around the probability.
This is also a place where we use the uniform regret bound of SAB stopping policy family to analyze the regret of the SAB-based index policy.

\subsubsection*{Selections with large index overestimation}

We next control the regret terms in which the selected suboptimal arm has an index that is too high.
Fix a suboptimal arm \(k\).
A selection of arm \(k\) has large index overestimation if \(\tilde i_{k,t}(G,H,b)-p_k\ge \frac{\Delta_k}{2}\).
We split this case further into two sub-cases: when the selected arm has been pulled fewer than $H$ observations or at least $H$ observations. 

We first consider the selections that occur before arm \(k\) reaches \(H\) observations.
At such a selection, the index of arm \(k\) is the adjusted SAB index
\(j^{\mathrm{adj}}_{G,b}(\tilde h_{k,n},\tilde u_k).\)
The large index overestimation assumption then implies
\(j^{\mathrm{adj}}_{G,b}(\tilde h_{k,n},\tilde u_k) \ge p_k+\frac{\Delta_k}{2}\).
For each fixed \(n\), this event is defined on the full reward sequence of arm \(k\), and a selection at count $n$ can happen at most once.
Therefore, by the definition of $\tilde r_{k,T}^{<H}(G,H,b)$, we have
\[
\mathbb E_{\mathbf p} \bigl[\tilde r_{k,T}^{<H}(G,H,b)\bigr]
\le
\Delta_k
\sum_{n=0}^{H-1}
\mathbb P_{p_k} \Big[j^{\mathrm{adj}}_{G,b}(\tilde h_{k,n},\tilde u_k)
\ge p_k+\frac{\Delta_k}{2} \Big].
\]

We now relate the event inside the sum to a wrong decision of the SAB stopping policy.
Choose
\(c_\varepsilon \coloneqq p_k+\frac{\Delta_k}{2}-\varepsilon,\; 0<\varepsilon<\frac{\Delta_k}{2}.\)
Then \(p_k<c_\varepsilon\).
Hence, for an auxiliary SAB problem with unknown-arm mean \(p_k\) and known-arm reward \(c_\varepsilon\), the known arm is better.
Continuing to sample the unknown arm is therefore the wrong decision.

Since \(\varepsilon > 0\), the large-overestimation event implies
\(j^{\mathrm{adj}}_{G,b}(\tilde h_{k,n},\tilde u_k)>c_\varepsilon.\)
By Lemma~\ref{lem:index}, this strict inequality implies
\(\rho^{\mathrm{adj},c_\varepsilon,G,b}(\tilde h_{k,n})>\tilde u_k\),
or equivalently, the adjusted SAB policy takes observation \(n+1\) at known-arm reward \(c_\varepsilon\).
Thus every large-overestimation event before \(H\) observations is contained in a wrong-continuation event for the auxiliary SAB problem.
Since \(H\le G\), using the second continuation bound in~\eqref{eq:continuation-bounds}, with \(p=p_k\) and \(c=c_\varepsilon\), gives
\(\mathbb E_{\mathbf p}\bigl[\tilde r_{k,T}^{<H}(G,H,b)\bigr]
\le
\Delta_k \sum_{n=0}^{G-1} \mathbb P_{p_k}[\tilde a^{\mathrm{adj},c_\varepsilon,G,b}_{n+1}=1] 
\le \Delta_k \frac{b+c_\varepsilon}{c_\varepsilon-p_k}\).
Letting \(\varepsilon\downarrow0\) gives
\begin{equation} \label{eq:large_overestimation_short}
\mathbb E_{\mathbf p} \bigl[ \tilde r_{k,T}^{<H}(G,H,b) \bigr]
\le
\Delta_k
\frac{b+p_k+\Delta_k/2}{\Delta_k/2}
=
2\left(b+p_k+\frac{\Delta_k}{2}\right)
\le
2b+2,
\end{equation}
because \(p_k+\Delta_k=p^\star\le1\).

We now consider selections after arm \(k\) has reached \(H\) observations.
In this range the policy uses the empirical mean as the index.
Therefore a large-overestimation selection at count \(n\ge H\) requires
\(\frac{\tilde s_{k,n}}{n}-p_k\ge\frac{\Delta_k}{2}.\)
Then, the expected regret under this can can be bounded by 
\(\mathbb E_{\mathbf p}
\bigl[\tilde r_{k,T}^{\ge H}(G,H,b) \bigr] \le
\Delta_k \sum_{n=H}^{\infty} \mathbb P_{p_k}\big[\frac{\tilde s_{k,n}}{n}-p_k\ge\frac{\Delta_k}{2}\big].\)
By Lemma~\ref{lem:exponential}, the probability in the \(n\)th term is at most
\(e^{-n\Delta_k^2/2}\).
Thus
\[
\mathbb E_{\mathbf p}\bigl[\tilde r_{k,T}^{\ge H}(G,H,b)\bigr]
\le 
\Delta_k\sum_{n=H}^{\infty}e^{-n\Delta_k^2/2}
\le
e^{-H\Delta_k^2/2}
\Big(\Delta_k+\frac{2}{\Delta_k}
\Big).
\]

Summing the pre-\(H\) bound over the at most \(K-1\) suboptimal arms gives
\((K-1)(2b+2)\).
However, the post-\(H\) term needs one additional step.
The regret bound contains the factor \(2/\Delta_k\), which can be large for very small $\Delta_k$.
However, small-gap arms create little regret per pull, and there are at most \(T\) pulls in total.
To use this fact, fix an analysis threshold \(\delta>0\) and decompose the regret of each post-\(H\) large-overestimation selection as
\(\Delta_k
=
\min\{\Delta_k,\delta\}
+
(\Delta_k-\delta)_+.\)
The first parts contribute at most \(\delta T\) in total over all rounds.
For an arm with \(\Delta_k>\delta\), the remaining part is the fraction
\(1-\frac{\delta}{\Delta_k}\)
of the original regret of each such selection.
Therefore, we can bound the post-\(H\) large index overestimation bound as
\[
\mathbb E_{\mathbf p} \Big[\sum_{k:\Delta_k>0} \tilde r_{k,T}^{\ge H}(G,H,b) \Big]
\le 
\inf_{\delta > 0}\Big\{\delta T + (K-1) \sup_{\Delta>\delta}
\Big(1-\frac{\delta}{\Delta} \Big)
e^{-H\Delta^2/2}
\Big(\Delta+\frac{2}{\Delta}
\Big)\Big\}.
\]

\subsubsection*{Combining and minimizing the total regret.}
We now put the pieces together.
Taking the worst-case $\Delta_{\min} > 0$ and best value of \(\delta\) gives the following general bound.
\begin{equation}
\label{eq:master}
\begin{aligned}
R_T(\pi^{\mathrm{MAB}}_{G,H,b},\mathbf p)
\le{}&
T\sup_{\Delta_{\min}>0}\Big[\Delta_{\min}\varphi(\Delta_{\min}/2)
+
2\int_{\Delta_{\min}/2}^{\Delta_{\min}}\varphi(y)\,dy
+
2\int_{\Delta_{\min}}^\infty
\varphi(y)\varphi(y-\Delta_{\min})\,dy\Big] 
\\ &
+(K-1)(2b+2) + 
\inf_{\delta > 0}\Big\{\delta T + (K-1) \sup_{\Delta>\delta}
\Big(1-\frac{\delta}{\Delta} \Big)
e^{-H\Delta^2/2}
\Big(\Delta+\frac{2}{\Delta}
\Big)\Big\}.
\end{aligned}
\end{equation}
The supremum over \(\Delta>\delta\) is taken over a slightly larger range than the possible Bernoulli gaps, which only makes the bound larger and simplifies the expression.

Increasing \(b\) lowers the probability of premature stopping but increases the pre-switch overestimation term \((K-1)(2b+2)\).
When \(B_G\) is of order \(\sqrt G\), choosing \(G/H\) as a fixed constant larger than one and \(b\) proportional to \(\sqrt H\) makes the underestimation contribution of order \(T/\sqrt H\) and the overestimation contributions of order \(K\sqrt H\).
Balancing these two orders leads to \(H\) of order \(T/K\).
This is the reason for the choice \(H=\lceil T/K\rceil\), \(G=2H\), and \(b=\sqrt G / 4\).

For \(\kappa>0\), set \(\Phi_\kappa(y)=1\) for \(y\le0\), and define
\begin{equation}
\label{eq:phi}
\Phi_\kappa(y)
\coloneqq
\min\left\{1,\frac{\sqrt2\,\kappa}{y}
+2e^{-2\sqrt2\,y}+e^{-2y^2}\right\},\quad y>0.
\end{equation}
For a Borel measurable function \(\varphi:(0,\infty)\to[0,1]\), define
\begin{equation}
\label{eq:tail-integral}
\begin{aligned}
J_\varphi(d)
&\coloneqq d\varphi(d/2)+2\int_{d/2}^{d}\varphi(y)\,dy
+2\int_d^\infty\varphi(y)\varphi(y-d)\,dy, \quad \text{and} \quad 
\Lambda_0(\kappa) \coloneqq\sup_{d>0}J_{\Phi_\kappa}(d).
\end{aligned}
\end{equation}
The endpoint value of \(\varphi\) in the last integral can be chosen arbitrarily.
Since \(B_G=\kappa_G\sqrt G=\sqrt2\,\kappa_G\sqrt H\),
\(G-H+1=H+1\), and \(8b=2\sqrt{2H}\),
\eqref{eq:underestimation-tail} is bounded by
\(\Phi_{\kappa_G}(x\sqrt H)\).
In particular, we may use \(\varphi(x)=\Phi_{\kappa_G}(x\sqrt H)\) in \eqref{eq:master}.
A change of variables gives
\[
\sup_{d>0}J_\varphi(d)=\frac{\Lambda_0(\kappa_G)}{\sqrt H},
\quad
\mathbb E_{\mathbf p}[\tilde r_T^{\mathrm{no}}(G,H,b)]
\le\frac{T\Lambda_0(\kappa_G)}{\sqrt H}
\le\Lambda_0(\kappa_G)\sqrt{KT}.
\]

It remains to bound the post-switch contribution.
Let $\gamma_\star>0$ be the unique solution of
$\gamma_\star^2e^{\gamma_\star^2/2}=2$, and set \(\gamma_0\coloneqq\gamma_\star(\gamma_\star^2+1)/(\gamma_\star^2+2)\). 
Taking $\delta=\gamma_0/\sqrt H$ in~\eqref{eq:master},
applying~\eqref{eq:post-switch-scalar-bound}, and using
$2b=\sqrt H/\sqrt2$, we obtain
\[
\begin{aligned}
R_T(\pi^{\mathrm{MAB}}_{G,H,b},\mathbf p)
\le{}
\frac{T\Lambda_0(\kappa_G)}{\sqrt H}
+2(K-1)
+\gamma_0\frac{T}{\sqrt H} +
\left(
\frac1{\sqrt2}+\gamma_\star-\gamma_0
\right)(K-1)\sqrt H
+\frac{K-1}{\sqrt{eH}}.
\end{aligned}
\]
For \(T\ge2K\), rounding \(T/K\) up to \(H\) gives
\(\frac T{\sqrt H}\le\sqrt{KT},\; K\sqrt H\le\sqrt{KT}+\frac K{2\sqrt2},\) and  \(\frac1{\sqrt{eH}}\le\frac1{\sqrt{2e}}.\)
Since
\(2+\frac{1/\sqrt2+\gamma_\star-\gamma_0}{2\sqrt2}
+\frac1{\sqrt{2e}}
<2.799020<\frac{14}{5}\),
we conclude that
\begin{equation*}
R_T(\pi^{\mathrm{MAB}}_{G,H,b},\mathbf p)
\le\left[\Lambda_0(\kappa_G)+\frac1{\sqrt2}+\gamma_\star\right]\sqrt{KT}
+\frac{14}{5}K.
\end{equation*}

Finally, put \(\kappa_0=1/\sqrt{8e}\).
The hypothesis of Theorem~\ref{thm:mab-main} gives
\(\kappa_G\le\kappa_0+3/(2\sqrt G)\).
The numerical bounds in Lemma~\ref{lem:quantity-kappa-G} are
\(\Lambda_0(\kappa_0)<2.671,
\;
0<\Lambda_0'(\kappa)\le\sqrt2(2+3\log3),\)
and hence
\(\Lambda_0(\kappa_G)
\le2.671+\frac{3\sqrt2}{2\sqrt G}(2+3\log3).\)
Because \(G=2H\ge2T/K\), substitution into the above regret gives
\[
\begin{aligned}
R_T(\pi^{\mathrm{MAB}}_{G,H,b},\mathbf p)
\le{}&\Big(2.671+\frac1{\sqrt2}+\gamma_\star\Big)\sqrt{KT}+\left[\frac{14}{5}+\frac32(2+3\log3)\right]K\;
< \;4.45\sqrt{KT}+10.75K.
\end{aligned}
\]
This proves \eqref{eq:mab-main}. The scalar root and integral calculations are certified in Appendix~\ref{subsec:lambda-evaluation}.

\subsection{Implementation and numerical approximation}
\label{subsec:implementation}

Theorem~\ref{thm:mab-main} requires a uniform SAB regret bound and exact index evaluation from the SAB stopping policy family actually used.
Both requirements can be met using finitely many SAB programs.
We discretize the unknown mean $p$ within each program and use a finite grid of known-arm rewards $c$ to define a piecewise-constant SAB stopping policy family.
The index of this family is then evaluated exactly, rather than approximating the index of the minimax family over the continuum $c\in [0, 1]$.

Fix the auxiliary horizon $G$, an integer $M\ge1$, and a grid $\mathcal C=\{v_0,\ldots,v_L\}, \, 0=v_0<\cdots<v_L=1$.
At each interior point $v_i$, solve the SAB program with the unknown-mean grid $\mathcal P_M$ from Section~\ref{subsec:sab-bounds} and recover the table $q^{v_i,G,M}_{n,s}$ using~\eqref{eq:sab-recovery}.
At $v_0=0$ and $v_L=1$, use the analytical policies that always continue and always stop, respectively.
For each $c\in[0,1]$, let $v(c)$ be a nearest grid point, choosing the smaller one in a tie, and define
\(\pi_{c,G}\coloneqq\pi_{v(c),G,M}, \;
q^{c,G}_{n,s}\coloneqq q^{v(c),G,M}_{n,s}.\)
Thus the same stopping policy is used at every known-arm reward assigned to a given grid point.

Write $\varepsilon_p\coloneqq\frac{G(G+1)}{4M}$ and $\varepsilon_c\coloneqq\frac12\max_{0\le i<L}(v_{i+1}-v_i).$ Proposition~\ref{prop:sab-grid} directly gives, for every $c,p\in[0,1]$,
\[
\begin{aligned}
R_{c,G}(\pi_{c,G},p)
&\le R_{v(c),G}(\pi_{v(c),G,M},p)+G|c-v(c)|\le V_{G,M}(v(c))+\varepsilon_p+G\varepsilon_c.
\end{aligned}
\]
Since $V_{G,M}(v_i)\le V_G(v_i)$, Proposition~\ref{cor:sab-upper} yields
\begin{equation*}
\sup_{c,p\in[0,1]}R_{c,G}(\pi_{c,G},p)
\le \max_{0\le i\le L}V_{G,M}(v_i)+\varepsilon_p+G\varepsilon_c\le \sqrt{\frac{G}{8e}}+1+\varepsilon_p+G\varepsilon_c.
\end{equation*}
The first bound also provides a certificate from the finite-grid optimal values.
Consequently, if $\varepsilon_p+G\varepsilon_c\le\frac12,$
then this auxiliary family satisfies the regret assumption of Theorem~\ref{thm:mab-main}.
For the uniform grid $v_i=i/L$, the sufficient choices $L\ge2G$ and $M\ge G(G+1)$ make each error term at most $1/4$.
With $H,G,b$ chosen as in~\eqref{eq:mab-parameters}, the resulting MAB policy therefore retains the bound~\eqref{eq:mab-main}.
These are sufficient grid sizes for the stated guarantee, not requirements on every implementation.

For a history $h_n=(z_1,\ldots,z_n)$, let $s_j=\sum_{r=1}^j z_r$ and compute $\rho^{v_i,G,M}(h_n)
\coloneqq\prod_{j=0}^n q^{v_i,G,M}_{j,s_j}.$
Define
\[
\xi_i\coloneqq\frac{v_i+v_{i+1}}2\quad(0\le i<L),
\quad \xi_L\coloneqq1.
\]
The set of rewards assigned to $v_i$ is an interval with supremum $\xi_i$.
For a given threshold $u\in[0,1)$, the accepted set is the union of those intervals whose cumulative continuation probability exceeds $u$.
Its supremum is therefore
\begin{equation}
\label{eq:finite-family-index}
\begin{aligned}
j^{\mathrm{aux}}_G(h_n,u)
&=\max\{\xi_i:0\le i\le L,\ \rho^{v_i,G,M}(h_n)>u\},\\
j^{\mathrm{adj}}_{G,b}(h_n,u)
&=\min\{j^{\mathrm{aux}}_G(h_n,u),\ell_b(h_n)\},
\end{aligned}
\end{equation}
where the second equality follows from Lemma~\ref{lem:index}.
The maximum is nonempty because the policy at $v_0=0$ always continues.
The quantities maximized are the interval endpoints $\xi_i$, not the representative grid points $v_i$.
The cumulative probabilities need not be ordered in $i$, so they can be scanned to evaluate the maximum without any monotonicity assumption in the known-arm reward.
No explicit computation of the monotone envelope is needed.

For each arm, initialize the cumulative probabilities at $q^{v_i,G,M}_{0,0}$ and the reward-shortfall bound at $\ell_b(h_0)=1$.
After that arm receives its $n$th observation, update, for $1\le n<H$,
\[
\rho^{v_i,G,M}(h_n)
=\rho^{v_i,G,M}(h_{n-1})q^{v_i,G,M}_{n,s_n},
\quad
\ell_b(h_n)=\min\left\{\ell_b(h_{n-1}),\frac{s_n+b}{n}\right\},
\]
and evaluate~\eqref{eq:finite-family-index} with the arm's threshold $u$.
Only the selected arm requires an update.
Once its count reaches $H$, use its empirical mean as prescribed by~\eqref{eq:mab-index}.
Thus an index update costs $O(L)$ before the switch and $O(1)$ thereafter; scanning the $K$ current indices to choose an arm costs $O(K)$ per round.
Only the table entries with $n<H$ are needed online, so shared continuation tables and per-arm cumulative probabilities require $O(LH+KL)$ storage, without retaining complete ordered histories.

%% file: main/numerical_study.tex
\section{Numerical Experiments}
\label{sec:experiments}

This section examines the numerical performance of the SAB minimax policy and the corresponding SAB-based index policy for the MAB problem.
In the single-armed experiments, the SAB policy has substantially smaller worst-case regret than the benchmark policies, with regret roughly half that of the best benchmark at the tested horizons up to \(T=200\).
We then examine the SAB continuation decisions to understand this difference: at a fixed empirical mean, the SAB policy is less willing than benchmark policies to continue sampling at high known-arm rewards, particularly at small pull counts.
In the multi-armed experiments, the SAB-based index policy has lower regret than every benchmark policy across all tested \((K,T)\) combinations.
For \(K=2\), its regret is close to that of the grid-based MAB minimax policy obtained by discretizing the semi-infinite LP~\eqref{eq:mab-lp}, which only adds on small regret to the minimax optimal policy.

We compare the SAB minimax policy and the SAB-based index policy, labeled SAB-LP-MAB in the figures, with five benchmark policies that are either standard in the literature or closely related to minimax exploration: KL-UCB \citep{GarivierCappe2011}, KL-UCB-Switch \citep{GarivierHadijiMenardStoltz2022}, MOSS \citep{AudibertBubeck2009}, KL-MOSS \citep{GarivierCappe2011,AudibertBubeck2009}, $\varepsilon$-Thompson sampling ($\varepsilon$-TS) \citep{JinYangXiaoXu2023}, and Tsallis-INF \citep{ZimmertSeldin2021}.
Let \(n_k\) and \(\widehat\mu_k\) denote the pull count and empirical mean of arm \(k\) before round \(t\).
Except for \(\varepsilon\)-TS, the benchmark policies first pull every arm once; exact index ties are broken uniformly at random.
We briefly describe these policies below.

The {KL-UCB} policy replaces the classical UCB quadratic confidence radius with a Bernoulli KL constraint:
\begin{equation*}
  I_{k,t}^{\mathrm{KL-UCB}}
  =\sup\left\{q\in[\widehat\mu_k,1]:
      n_k\text{KL}(\widehat\mu_k,q)\le \log t\right\},
\end{equation*}
where
\(\text{KL}(p,q)=p\log(p/q)+(1-p)\log((1-p)/(1-q))\).
The MOSS policy uses the known-horizon index
\begin{equation*}
  I_{k,t}^{\mathrm{MOSS}}
  =\widehat\mu_k+
   \sqrt{\frac{\logplus(T/(K n_k))}{2 n_k}},
  \qquad
  \logplus(x)=\max\{0,\log x\}.
\end{equation*}
It is a modification of the classical distribution-free minimax policy of~\cite{AudibertBubeck2009} to reduce the regret \citep{GarivierHadijiMenardStoltz2022}.
The {KL-MOSS} policy is the KL analogue of MOSS used in this study:
\begin{equation*}
  I_{k,t}^{\mathrm{KL-MOSS}}
  =\sup\left\{q\in[\widehat\mu_k,1]:
      n_k\kl(\widehat\mu_k,q)
      \le \logplus\!\left(\frac{T}{K n_k}\right)\right\}.
\end{equation*}
Thus, KL-MOSS combines the Bernoulli KL confidence set of~\cite{GarivierCappe2011} with the MOSS exploration budget of~\cite{AudibertBubeck2009}.
The {KL-UCB-Switch} policy follows~\cite{GarivierHadijiMenardStoltz2022}.
In our implementation, it uses the KL-MOSS index while
\(n_k\le\lfloor(T/K)^{1/5}\rfloor\), and then switches to the modified MOSS index $I_{k,t}^{\mathrm{MOSS}}$. 
The {\(\varepsilon\)-TS} benchmark is the reduced-exploration Thompson sampling policy of~\cite{JinYangXiaoXu2023}.
We use \(\varepsilon=1/2\) and a \(\operatorname{Beta}(1,1)\) prior.
For each arm at each decision, we sample its score from \(\operatorname{Beta}(S_k+1,n_k-S_k+1)\) with probability \(1/2\), and otherwise use its empirical mean.
This policy has no forced initialization; the empirical score of an unpulled arm is defined as zero.

We implement the SAB-based index policy using the discretization procedure in Section~\ref{subsec:implementation}, with $M$ and $L$ grid points for $p$ and $c$, respectively, on $[0,1]$.
The choices of $M$ and $L$ depend on the horizon $T$ in each experiment.
For each interior point $c$ on the $L$-point grid, we precompute the SAB stopping policy using a uniform $M$-point optimization grid for $p \in [0,1]$.
Only the continuation boundary of each policy needs to be stored, requiring $\mathcal{O}(LT)$ memory.
The remaining steps follow Section~\ref{subsec:implementation}, with the corresponding parameter choices.
These precomputed SAB stopping policies are reused across simulations, so the SAB linear programs do not need to be solved again.

Since our objective is to reduce finite-horizon worst-case regret, we compare all aforementioned algorithms with our proposed method in terms of their worst-case regret over the considered problem class.  For the Monte Carlo experiments, we approximate worst-case regret using a coarse-to-fine search rather than an exhaustive grid. For each algorithm and horizon, we construct a broad candidate set comprising structured instances, boundary cases, and low-discrepancy points. We initially screen each candidate using 256 simulations, reevaluate 24 promising candidates using 2,048 simulations each, and retain 8 for four rounds of local refinement with perturbation scales \(0.08,0.04,0.02,0.01\), applying the same screening and reevaluation procedure in each round. The resulting finalists, supplemented with previously identified difficult instances, are compared using 20,000 simulations per instance. Finally, we fix the selected instance and report its mean regret from 100,000 independent simulations, avoiding reuse of the simulation noise involved in selection. This procedure provides an empirical estimate, rather than a global certificate, of worst-case regret.

\subsection{Single-armed bandit experiments}
\label{subsec:sab-experiments}

For each benchmark policy, we compute the unknown arm's index or score using the expressions above, with $K=2$ where applicable.
We use $K=2$ rather than $K=1$ because it gives the benchmark policies smaller regret in our experiments.
Since the worst-case regret over the continuous parameter space is difficult to evaluate directly, we discretize $c\in[0,1]$ into $L= 100$ grid points and $p\in[0,1]$ uniformly into $M =1,000$ grid points.
For each cost grid point \(c\), we evaluate every benchmark policy at all \(M\) grid points for \(p\). Its expected regret at each \((c,p)\) is computed by probability-based exact dynamic programming rather than Monte Carlo simulation, and we report the maximum over the discretized \((c,p)\) domain.
For the same $c$, we solve the finite-grid discretization of LP~\eqref{eq:sab-lp} on the same $p$-grid to obtain the grid-based SAB minimax policy, labeled SAB-LP in the figures.
The worst-case regret of this policy over the $p$-grid equals the optimal objective value $\eta$.
For each policy and horizon, we then take the largest of these values over the $L$ grid points for $c$.
Figure~\ref{fig:exp-sab-regret} reports the resulting worst-case regrets over the $(c,p)$-grid across horizons, with the benchmark regrets computed by exact probability-based dynamic programming and the SAB-LP regret obtained from the LP objective values.

At all ten horizons, the SAB policy has substantially lower regret than every benchmark policy, with worst-case regret roughly half that of the best benchmark.
Figure~\ref{fig:exp-sab-states} shows a representative SAB policy at \(T=20\) and \(c=0.5\).
Filled circles denote pulling the unknown arm, hollow circles denote stopping and pulling the known arm, and half-filled circles denote randomized actions.
For each fixed pull count \(n\), the continuation set is an upper set in the success count: sufficiently many successes lead to continuation.
The policy may randomize at certain states; for example, at state $(1,0)$, it pulls the unknown arm with probability $0.6077$.
For a given $c$, we observe very few randomized states, even at $T=200$.

\begin{figure}[htbp]
  \centering
  \begin{subfigure}[t]{0.485\linewidth}
    \centering
    \includegraphics[width=\linewidth]{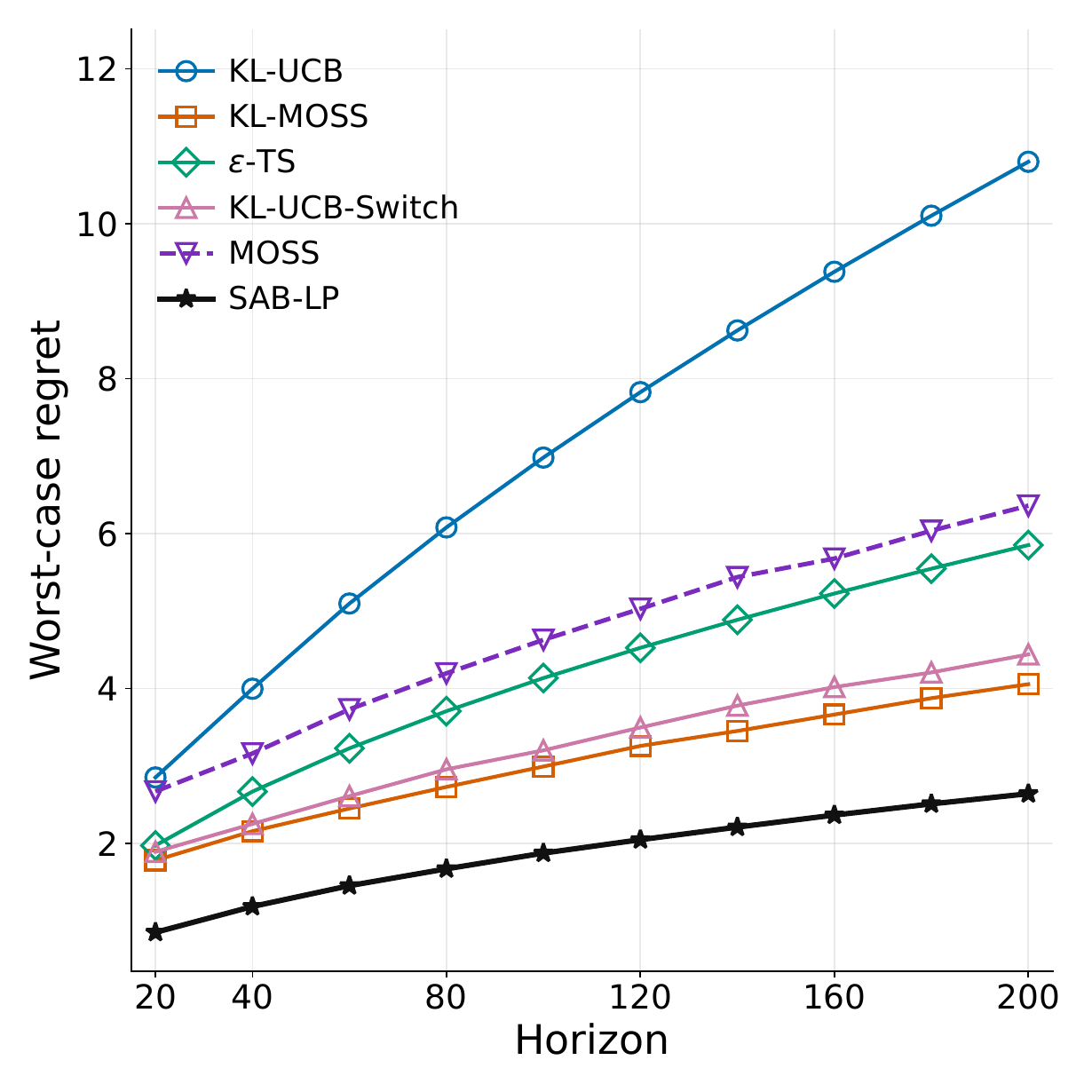}
    \caption{Worst-case regret versus horizon.}
    \label{fig:exp-sab-regret}
  \end{subfigure}\hfill
  \begin{subfigure}[t]{0.485\linewidth}
    \centering
    \includegraphics[width=\linewidth]{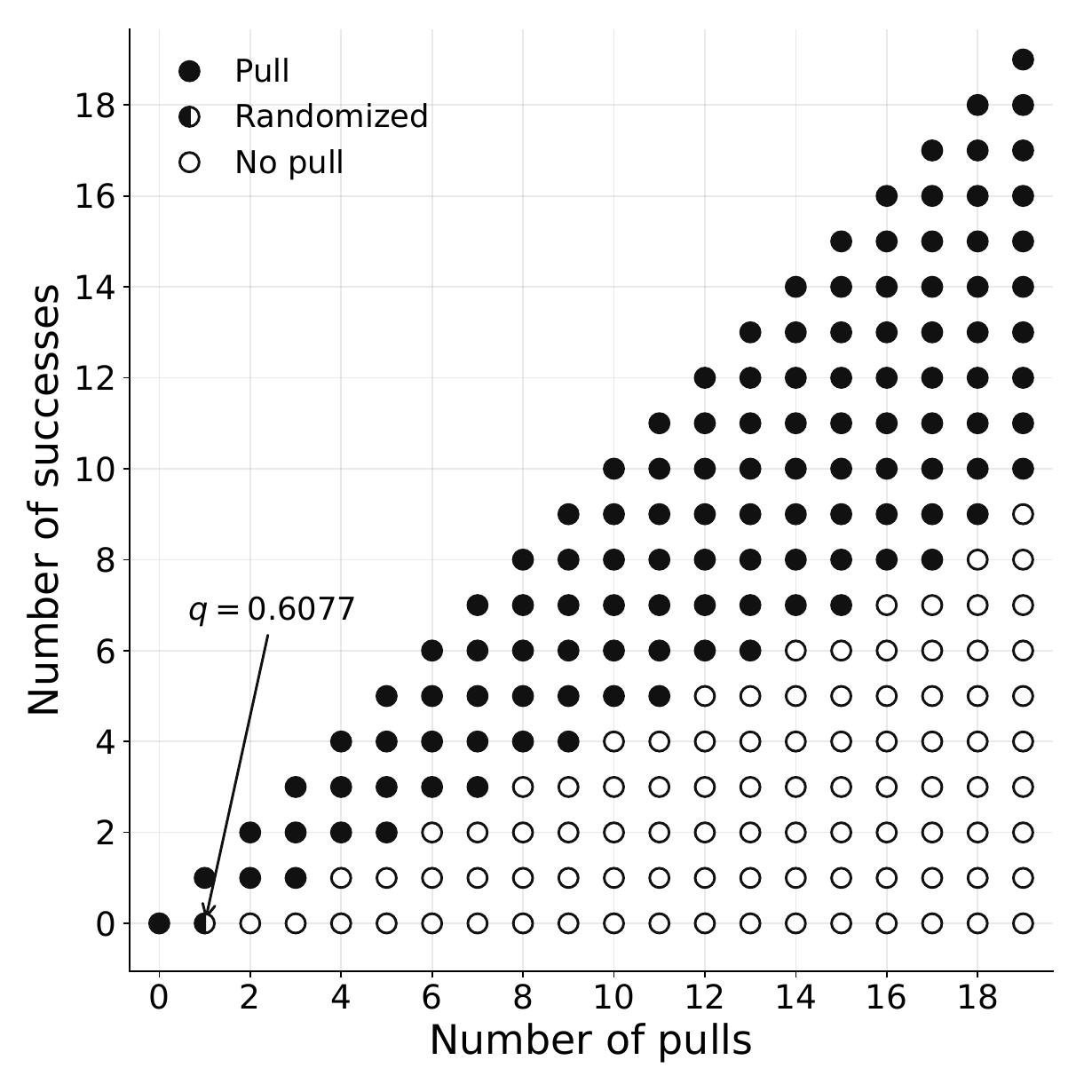}
    \caption{State decisions at \(T=20,c=0.5\).}
    \label{fig:exp-sab-states}
  \end{subfigure}
  \caption{Single-arm performance and continuation decisions. In the right panel, filled black circles denote pulling the unknown arm, hollow circles denote stopping, and the half-filled circle denotes the randomized state of the SAB policy.}
  \label{fig:exp-sab-pair}
\end{figure}

Figure~\ref{fig:exp-sab-boundary} helps explain the regret difference.
We fix \(T=200\), consider even pull counts \(n\) within the horizon, and set the success count to \(s=n/2\).
The figure shows the known-arm reward boundary for pulling the unknown arm with probability greater than \(1/2\) at state $(n,n/2)$.
For the SAB policy, we locate the crossing \(q_c(n,n/2;200)=1/2\) by local linear interpolation between adjacent points on the \(0.01\)-spaced $c$-grid whose continuation probabilities contains \(1/2\).
For the benchmark policies with deterministic indices, we obtain the boundary directly from their analytic index formulas.
For this statewise comparison, we evaluate KL-UCB at the next decision, so its exploration budget is \(\log(n+1)\).

\begin{figure}[htbp]
  \centering
  \includegraphics[width=0.78\linewidth]{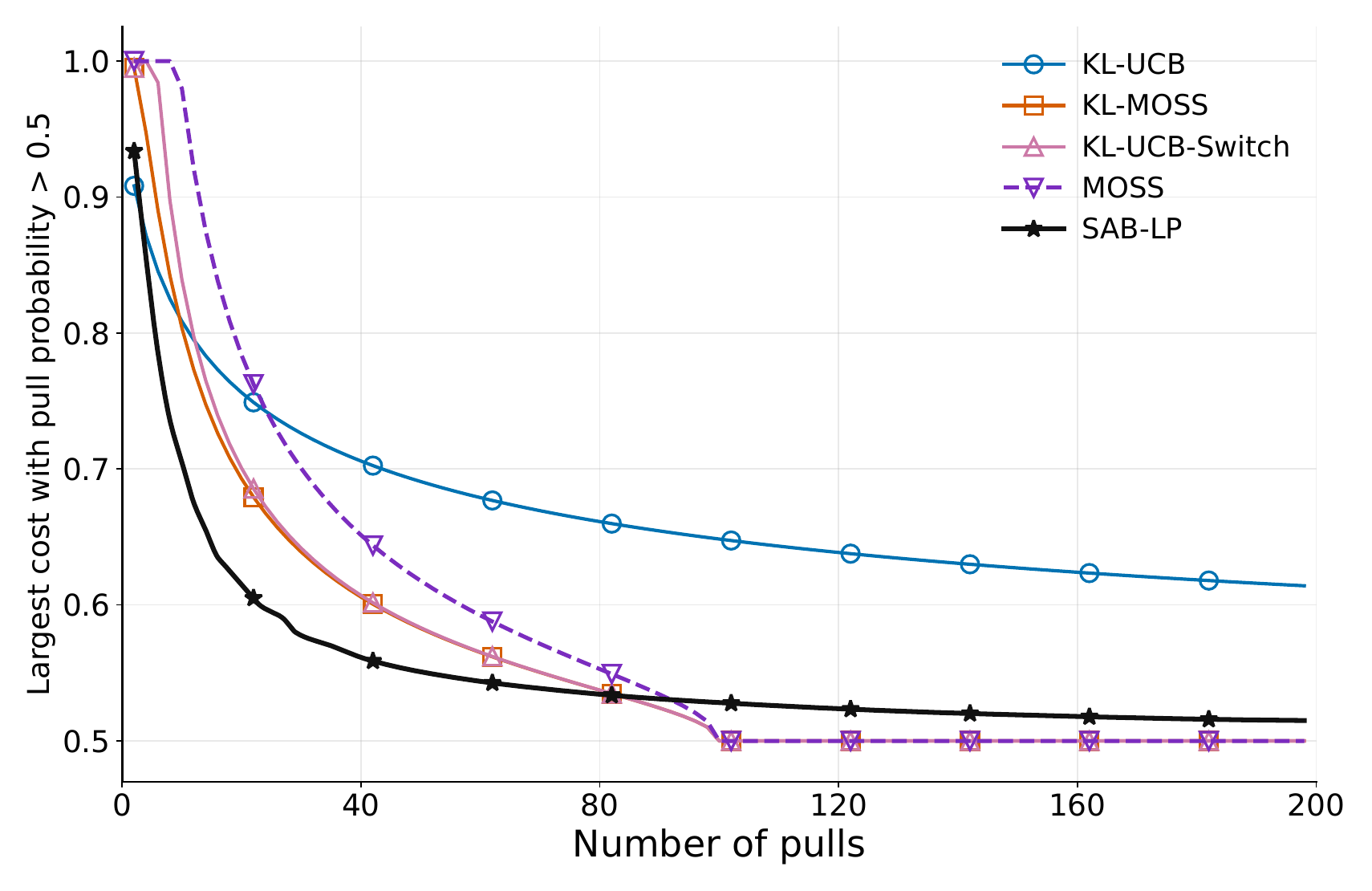}
  \caption{Known-arm reward boundary for pulling with probability greater than \(1/2\), at
  \(T=200\) and \(s=n/2\). The horizontal line is the empirical mean
  \(s/n=0.5\).}
  \label{fig:exp-sab-boundary}
\end{figure}

When the pull count is below about $80$, the benchmark policies retain a larger continuation margin above the empirical mean \(s/n=0.5\): they continue sampling at known-arm rewards for which the SAB policy already favors stopping.
Beyond this threshold, the SAB continuation boundary soon exceeds those of the benchmark policies, while remaining only slightly above the empirical mean.
As a substantial part of the horizon remains at this point, so limited further exploration can reduce the loss from abandoning a promising unknown arm too early.
Thus, the SAB policy is more conservative at small pull counts but allows a small exploration margin at larger pull counts while there is still a cost to missing a promising arm.
This pattern suggests an explanation for the approximately twofold regret difference in Figure~\ref{fig:exp-sab-regret}.
The \(\varepsilon\)-TS boundary equals \(0.5\) under this symmetric probability convention and should not be interpreted as a deterministic confidence bonus.

\subsection{Multi-armed bandit experiments}
\label{subsec:mab-experiments}

We consider MAB problems with $K=2,4$, and $8$, at horizons up to $T=200$.
These experiments compare the SAB-based index policy with the benchmark policies.
For $K=2$, we also compute the grid-based MAB minimax policy by solving a discretization of the exact MAB formulation~\eqref{eq:mab-lp} with grid spacing $0.005$.
By symmetry, we restrict attention to $p_1 \geq p_2$, giving the optimization grid \(\Pgrid_{0.005}^{(2)} =
\left\{(p_1,p_2): p_i\in\{0,0.005,\ldots,1\},\ p_1\ge p_2\right\}\).
The direct formulation has \(2\binom{T+3}{4}=\Theta(T^4)\) decision variables.
Its solution provides the grid-based MAB minimax policy against which we compare the other policies.

Figure~\ref{fig:exp-mab-k2} compares the worst-case regret of the benchmark policies, the SAB-based index policy, and the grid-based MAB minimax policy for $K=2$.
As above, the reported values are the estimated worst-case regrets following the approach described in the beginning of the section.
The SAB-based index policy has lower regret than every benchmark policy at all five horizons, reducing regret relative to the best benchmark by \(11.2\%\)--\(13.2\%\).
Its regret is \(12.6\%\) higher than that of the grid-based MAB minimax policy at \(T=40\), with this gap decreasing to \(9.6\%\) at \(T=200\).

\begin{figure}[htbp]
  \centering
  \includegraphics[width=0.70\linewidth]{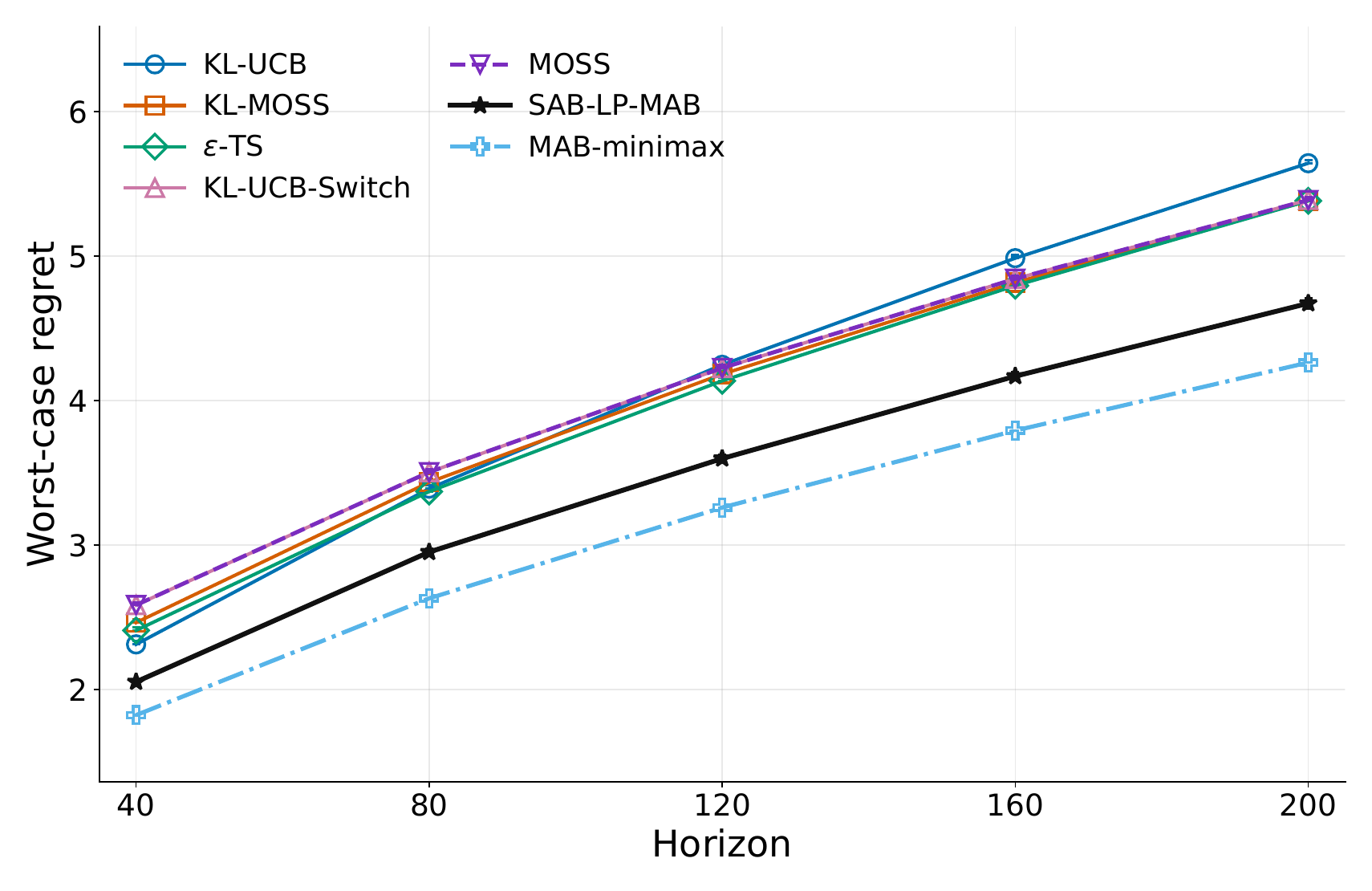}
  \caption{Estimated worst-case regret for \(K=2\). SAB-LP-MAB is compared with the benchmark policies and the grid-based MAB minimax policy computed on the \(0.005\)-spaced Bernoulli grid.}
  \label{fig:exp-mab-k2}
\end{figure}

Figures~\ref{fig:exp-mab-k4} and~\ref{fig:exp-mab-k8} show the same comparison for \(K=4\) and \(K=8\), without the grid-based MAB minimax policy.
For \(K=4\), the SAB-based index policy has regrets
\(4.629,6.557,8.076,9.374,10.480\).
KL-MOSS is the best benchmark, with regrets
\(5.619,7.862,9.538,10.856,12.117\), so the regret reduction ranges from \(17.6\%\) at \(T=40\) to \(13.5\%\) at \(T=200\).
For \(K=8\), the SAB-based index policy has regrets
\(8.035,11.250,13.850,15.911,17.895\), compared with
\(10.706,13.701,16.746,19.005,20.945\) for KL-MOSS.
The reduction is \(25.0\%\) at \(T=40\) and remains \(14.6\%\) at \(T=200\).
Across all 15 tested \((K,T)\) pairs, the SAB-based index policy has lower regret than every benchmark policy.
The advantage is largest for \(K=8\) at shorter horizons, where early exploration of each arm is particularly costly.
This pattern is consistent with the more conservative continuation decisions of the SAB policy at small pull counts, as shown in Figure~\ref{fig:exp-sab-boundary}.

We omit standard UCB and Tsallis-INF from Figures~\ref{fig:exp-mab-k2} and~\ref{fig:exp-mab-k48} because they perform substantially worse than the other methods. A comprehensive comparison including these policies is provided in Figure~\ref{fig:exp-mab-k48-full} in Appendix~\ref{sec:extendedexp}.

\begin{figure}[htbp]
  \centering
  \begin{subfigure}[t]{0.485\linewidth}
    \centering
    \includegraphics[width=\linewidth]{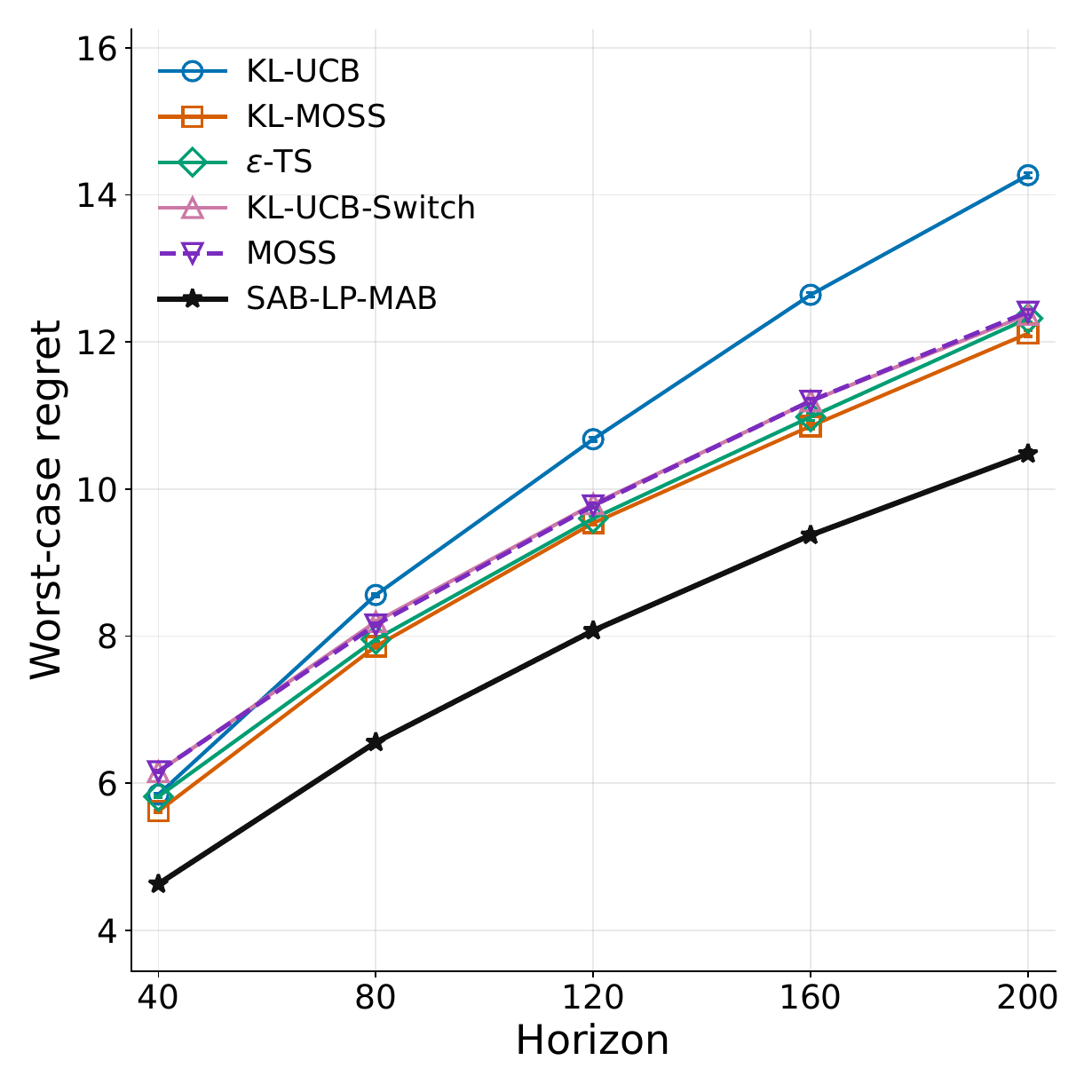}
    \caption{\(K=4\).}
    \label{fig:exp-mab-k4}
  \end{subfigure}\hfill
  \begin{subfigure}[t]{0.485\linewidth}
    \centering
    \includegraphics[width=\linewidth]{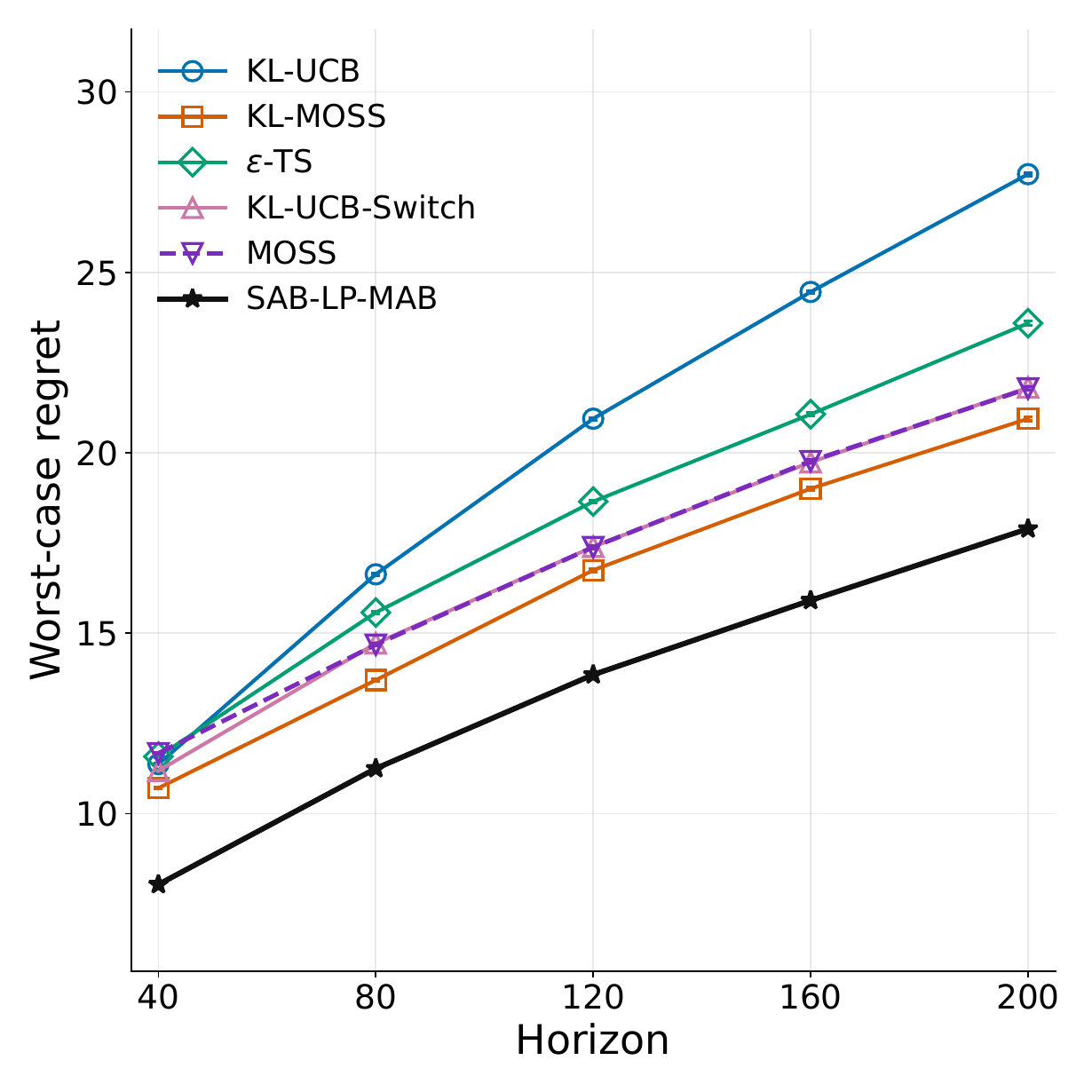}
    \caption{\(K=8\).}
    \label{fig:exp-mab-k8}
  \end{subfigure}
  \caption{Estimated worst-case regret over the evaluation grid for \(K=4\) and \(K=8\).
  SAB-LP-MAB uses the same untuned construction
  \(H=\lceil T/K \rceil\), \(G=2H\), and
  \(b=\sqrt{G}/4\) in both panels.}
  \label{fig:exp-mab-k48}
\end{figure}

%% file: main/appendix_proof.tex
\section{Proofs and auxiliary calculations}
\label{app:proofs}

This appendix proves the SAB and
index results used there and verifies the scalar regret constants.
The bounded-reward extension and the two parameter variants are treated
separately. The exact joint-MAB formulation is collected at the end;
it motivates the construction but is not used in the index-policy proof.

\subsection{SAB policy representation and the linear program}
\label{app:sab-representation}

\begin{proof}{Proof of Proposition~\ref{prop:count-state}}
By \citet[Lemma~4.1]{BradtJohnsonKarlin1956}, every admissible policy is equivalent, for every $p\in[0,1]$, to a stopping policy that takes the same number of unknown-arm observations, so we may assume that $\pi$ is a possibly randomized stopping policy. It remains to replace $\pi$ by a randomized stopping policy whose continuation probability depends only on $(n,s_n)$, without changing its regret for any $p$.

For a prescribed history $h_n$ with $s_n$ successes, let $w(h_n)$ be the
probability, over the policy's randomization, of taking the first $n$
observations, and let $q(h_n)$ be its conditional continuation
probability. Set $w(h_0)=1$ and $q(h_n)=0$ when $w(h_n)=0$.
These quantities do not depend on $p$, and
\[
\mathbb P_p(\pi\text{ reaches }h_n)
=w(h_n)p^{s_n}(1-p)^{n-s_n}.
\]
For \(0\le s\le n<G\), define
\begin{equation}
\label{eq:qcgns}
q^{c,G}_{n,s}
\coloneqq
\begin{cases}
\displaystyle
\frac{\sum_{h_n:s_n=s}w(h_n)q(h_n)}
     {\sum_{h_n:s_n=s}w(h_n)},
&\sum_{h_n:s_n=s}w(h_n)>0,\\[1ex]
0,&\text{otherwise}.
\end{cases}
\end{equation}
The common Bernoulli likelihood factor cancels, so these are
$p$-independent probabilities in $[0,1]$. For the original policy, write $E_{n,s}$ for the event that sampling
has reached $(n,s)$. The preceding definition gives
\[
\mathbb P_p(E_{n,s},\text{continue})
=q^{c,G}_{n,s}\mathbb P_p(E_{n,s}).
\]
The count-state policy defined by $q^{c,G}_{n,s}$ satisfies the same
identity. Under either policy, the state probabilities obey
\[
\mathbb P_p(E_{n+1,s})
=pq^{c,G}_{n,s-1}\mathbb P_p(E_{n,s-1})
 +(1-p)q^{c,G}_{n,s}\mathbb P_p(E_{n,s}),
\]
with the same initial value and zero terms outside the natural ranges.
Induction proves equality of the state and continuation probabilities,
including at $p=0,1$. Summing continuation probabilities gives equal
expected pull counts and hence equal regret by~\eqref{eq:sab-regret}.
\end{proof}

\begin{proof}{Proof of Theorem~\ref{thm:sab-lp}}
By the preceding reduction, it suffices to consider count-state stopping
policies. Given such a policy, set $x^{c,G}_{0,0}=q^{c,G}_{0,0}$ and
\[
x^{c,G}_{n,s}
=q^{c,G}_{n,s}\left(
\frac{s}{n}x^{c,G}_{n-1,s-1}
+\frac{n-s}{n}x^{c,G}_{n-1,s}\right),\qquad 1\le n<G.
\]
These entries are independent of $p$. Induction shows that they satisfy
the box and flow constraints in~\eqref{eq:sab-lp}. The state-action
recursion in Section~\ref{subsec:sab-lp} gives
\[
w^{c,G}_{n,s}(p)
=\binom ns p^s(1-p)^{n-s}x^{c,G}_{n,s},
\]
with the endpoint convention stated there. Summing over $(n,s)$ gives
the expected pull count and therefore the displayed regret constraints.

Conversely, recover a policy from any feasible array using
\eqref{eq:sab-recovery}, with that array in place of $x^{\star,c,G}$.
A positive denominator gives a ratio in $[0,1]$ by the flow inequality.
A zero denominator forces the numerator to vanish, so assigning zero
is consistent with the same recursion. Induction then recovers exactly
the prescribed state-action probabilities. The two constructions give
the same attainable regret functions and hence the same optimal value.

The feasible set of arrays is nonempty and compact. If two arrays differ
by at most $\varepsilon$ in every entry, their regrets differ by at most
$G\varepsilon$ for every $p$, because the binomial weights sum to one
at each count and $|p-c|\le1$. Thus the worst-case regret is continuous
on that compact set and attains its minimum. Taking $\eta$ equal to the
worst-case regret proves attainment of the linear program.

At $c=0$, the all-one array is feasible and has zero regret; at $c=1$,
the zero array has the same property. To show uniqueness of the
minimizing arrays at these endpoints, evaluate regret at $p=1/2$.
Zero regret forces the expected unknown-arm pull count to be $G$ at
$c=0$ and zero at $c=1$. Every binomial weight at $p=1/2$ is positive,
and their sum over all states and counts is $G$. Since every entry lies
in $[0,1]$, all entries must respectively be one or zero. The recovery
formula gives the claimed endpoint policies.
\end{proof}

\subsection{SAB regret bounds and finite-grid approximation}
\label{app:sab-bounds}

For $c\in[0,1]$ and $b>0$, define
\[
\tilde\tau_{c,b}
\coloneqq\inf\{n\ge1:\tilde s_n-cn\le-b\},
\qquad \inf\varnothing\coloneqq\infty.
\]
The reward-shortfall policy $\pi^b_{c,G}$ takes
$\tilde n^b_{c,G}\coloneqq\min\{\tilde\tau_{c,b},G\}$ unknown-arm
observations and then selects the known arm. This comparison policy
takes its first observation with certainty; the adjusted policy need
not do so.

\begin{lemma}[Exponential bounds]
\label{lem:exponential}
Let $\tilde z_1,\tilde z_2,\ldots$ be independent Bernoulli observations
with mean $p$, and let $\tilde s_n=\sum_{j=1}^n\tilde z_j$.
For $x>0$ and integers $n,H\ge1$,
\[
\mathbb P_p(\tilde s_n/n-p\ge x)\le e^{-2nx^2},
\qquad
\mathbb P_p\!\left(\sup_{n\ge H}(p-\tilde s_n/n)>x\right)
\le e^{-2Hx^2}.
\]
For $0\le c<p\le1$ and $b>0$,
\[
\mathbb P_p(\tilde\tau_{c,b}<\infty)\le e^{-8b(p-c)}.
\]
\end{lemma}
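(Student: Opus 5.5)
The three bounds in Lemma~\ref{lem:exponential} are standard, and I will prove each one separately.

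\textbf{First inequality.} This is Hoeffding's inequality for $n$ i.i.d.\ variables in $[0,1]$. Apply Markov to $e^{\lambda(\tilde s_n-np)}$, use Hoeffding's lemma $\mathbb E e^{\lambda(\tilde z-p)}\le e^{\lambda^2/8}$, and optimize at $\lambda=4x$.

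\textbf{Second inequality.} This is a maximal inequality, and the event has the form $\{\exists n\ge H: np-\tilde s_n>nx\}$. The threshold $nx$ grows with $n$, so a plain Doob bound on $\tilde s_n$ does not apply directly. I will use the exponential supermartingale $\tilde M_n:=\exp\bigl(\lambda(np-\tilde s_n)-n\lambda^2/8\bigr)$, which is a nonnegative supermartingale with $\tilde M_0=1$ by Hoeffding's lemma.
\begin{itemize}
\item[] Take $\lambda=4x$, so that $\lambda^2/8=2x^2$.
\item[] On the event $np-\tilde s_n>nx$ we get $\tilde M_n>\exp\bigl(n(4x^2-2x^2)\bigr)=e^{2nx^2}\ge e^{2Hx^2}$, using $n\ge H$.
\item[] Ville's maximal inequality for nonnegative supermartingales, applied from time $0$ (or first optional stopping at the hitting time, truncated and then sent to infinity), gives probability at most $e^{-2Hx^2}$.
\end{itemize}
No union bound over $n$ is needed, so the constant is exactly as stated.

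\textbf{Third inequality.} Here $0\le c<p$, and we need the probability that the walk $\tilde s_n-cn$, which has drift $p-c>0$, ever falls to $-b$ or below. Again use an exponential supermartingale $\tilde N_n:=\exp\bigl(-\lambda(\tilde s_n-cn)\bigr)$, choosing $\lambda>0$ so that $\mathbb E\exp\bigl(-\lambda(\tilde z-c)\bigr)\le1$. By Hoeffding's lemma,
\[
\mathbb E e^{-\lambda(\tilde z-c)}\le \exp\bigl(-\lambda(p-c)+\lambda^2/8\bigr),
\]
which is at most $1$ when $\lambda=8(p-c)$. Then $\tilde N_n$ is a nonnegative supermartingale with $\tilde N_0=1$. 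On $\{\tilde\tau_{c,b}<\infty\}$ we have $\tilde N_{\tilde\tau}\ge e^{\lambda b}$. Ville's inequality (or optional stopping at $\tilde\tau\wedge m$ followed by Fatou as $m\to\infty$) then gives
\[
\mathbb P_p(\tilde\tau_{c,b}<\infty)\le e^{-\lambda b}=e^{-8b(p-c)}.
\]

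\textbf{Main obstacle.} The one delicate point is justifying the maximal inequality over an infinite horizon. I will handle it in the same way for both the second and third bounds: apply optional stopping at bounded stopping times $\tau\wedge m$, then let $m\to\infty$ using nonnegativity and monotone convergence of the events. The endpoint case $p=1$, $c<1$ needs no separate treatment, because the Hoeffding bound still holds and the walk cannot decrease.
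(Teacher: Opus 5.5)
Your proposal is correct and follows the paper's proof essentially step for step. You use Hoeffding's lemma with Markov at $\lambda=4x$, the supermartingale $\exp\{4x(np-\tilde s_n)-2nx^2\}$ with an optional-stopping crossing bound for the maximal inequality, and $\exp\{-8(p-c)(\tilde s_n-cn)\}$ for the crossing probability; the only difference is that the paper also sketches a proof of Hoeffding's lemma (via the second derivative of the log-moment generating function), which you cite as known.
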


\begin{proof}{Proof}
For any real $\lambda$, a Bernoulli observation satisfies
$\mathbb E_p[e^{\lambda(\tilde z_1-p)}]\le e^{\lambda^2/8}$.
Indeed, the logarithm of the left-hand side and its first derivative
vanish at zero, while its second derivative is a variance under
exponential weighting and is at most $1/4$. Integrating this bound
twice proves the inequality. Independence and Markov's inequality,
with $\lambda=4x$, give the fixed-count bound.

We also use the following crossing argument. If a nonnegative
supermartingale starts at one, stop it at its first crossing of $A>0$
or at a fixed finite count. Its stopped expectation is at most one,
so the crossing probability is at most $1/A$. Letting the finite count
tend to infinity gives the same bound for an eventual crossing.

For the maximum over counts, apply this argument to
\[
\tilde m_n
=\exp\{4x(np-\tilde s_n)-2nx^2\},\qquad n\ge0.
\]
The moment bound makes this a nonnegative supermartingale. If
$p-\tilde s_n/n>x$ for some $n\ge H$, then
$\tilde m_n>e^{2Hx^2}$, proving the second assertion.

Finally, put $x=p-c>0$. The same moment bound gives
\[
\mathbb E_p e^{-8x(\tilde z_1-c)}
\le e^{-8x^2+(8x)^2/8}=1.
\]
Thus $\exp\{-8x(\tilde s_n-cn)\}$ is a nonnegative supermartingale
starting at one. At $\tilde\tau_{c,b}<\infty$ it reaches at least
$e^{8bx}$, which proves the last assertion.
\end{proof}

\begin{proof}{Proof of Proposition~\ref{cor:sab-upper}}
Use the reward-shortfall policy with tolerance $\beta>0$ and write
$\tilde n=\min\{\tilde\tau_{c,\beta},G\}$. Before a first crossing,
$\tilde s_n-cn>-\beta$, and one more observation can decrease this
quantity by at most $c$. If no crossing occurs by $G$, it remains
strictly above $-\beta$. Hence, in either case,
\[
\tilde s_{\tilde n}-c\tilde n>-\beta-c.
\]
The decision to take observation $j$ precedes its reward, so
\[
\mathbb E_p[\tilde s_{\tilde n}]
=\sum_{j=1}^G\mathbb E_p[
\tilde z_j\mathbf1\{\tilde n\ge j\}]
=p\mathbb E_p[\tilde n].
\]
For $p<c$, it follows that
$R_{c,G}(\pi^\beta_{c,G},p)<\beta+c$.
For $p>c$, put $x=p-c$ and apply Lemma~\ref{lem:exponential}:
\[
R_{c,G}(\pi^\beta_{c,G},p)
=x\mathbb E_p[G-\tilde n]
\le Gx\mathbb P_p(\tilde\tau_{c,\beta}<\infty)
\le Gxe^{-8\beta x}
\le\frac{G}{8e\beta}.
\]
At $p=c$, regret is zero. Choosing
\[
\beta=\frac{\sqrt{c^2+G/(2e)}-c}{2}>0
\]
equalizes $\beta+c$ and $G/(8e\beta)$ and gives
\[
V_G(c)
\le\frac{\sqrt{c^2+G/(2e)}+c}{2}
\le\sqrt{\frac{G}{8e}}+c.
\]
In particular,
$\sup_{c\in[0,1]}V_G(c)\le\sqrt{G/(8e)}+1$.
\end{proof}

\begin{proof}{Proof of Proposition~\ref{prop:sab-grid}}
Fix a stopping rule $\pi$, and write $m_\pi(p)$ for its expected
unknown-arm pull count. Couple Bernoulli sequences of means $p,p'$
using the same independent uniforms. Their first $n$ rewards differ
with probability at most $n|p-p'|$. Since the cumulative continuation
probability is in $[0,1]$ and is identical on identical histories,
\[
\left|\mathbb E_p[\rho^{c,G}(\tilde h_n)]
-\mathbb E_{p'}[\rho^{c,G}(\tilde h_n)]\right|
\le n|p-p'|.
\]
Summing over $0\le n<G$ yields
\[
|m_\pi(p)-m_\pi(p')|
\le\frac{G(G-1)}2|p-p'|.
\]
On each side of $c$, regret is the gap multiplied by the expected
number of inferior-arm selections. That count is at most $G$, and the
gap is at most one. Therefore
\[
|R_{c,G}(\pi,p)-R_{c,G}(\pi,p')|
\le\frac{G(G+1)}2|p-p'|
\]
when $p,p'$ lie on the same side of $c$. For opposite sides, apply
the two bounds through $c$, where regret is zero.

Every $p$ is within $1/(2M)$ of a point of $\mathcal P_M$.
Consequently, the full-interval worst-case regret of any feasible
policy is at most its grid regret plus $G(G+1)/(4M)$.
The finite-grid objective is continuous on the same compact feasible
set as the semi-infinite problem, and thus attains its minimum.
Applying the preceding bound to any minimizing array gives the
policy guarantee. Since the grid problem relaxes the full problem,
\[
V_{G,M}(c)\le V_G(c)
\le\sup_p R_{c,G}(\pi_{c,G,M},p)
\le V_{G,M}(c)+\frac{G(G+1)}{4M},
\]
which proves the value bound as well.

For the last assertion, hold the stopping rule fixed as $c$ varies;
in particular, do not reoptimize or change its continuation table.
Then $m_\pi(p)$ is unchanged, and
\[
R_{c,G}(\pi,p)=
\begin{cases}
(p-c)(G-m_\pi(p)),&c\le p,\\
(c-p)m_\pi(p),&c\ge p.
\end{cases}
\]
This is continuous and piecewise affine in $c$, with slopes in
$[-G,G]$. It proves~\eqref{eq:sab-c-lipschitz}. Taking the supremum
over $p$ and the infimum over the common set of stopping rules
preserves this uniform Lipschitz bound, giving the assertion for
$V_G$. The argument includes $c=0,1$.
\end{proof}

\subsection{Shared thresholds, adjusted policies, and indices}
\label{app:indices}

The endpoint claims and the nonempty sets defining the indices use
the analytical endpoint convention
\[
\rho^{0,G}(h_n)=1,\qquad \rho^{1,G}(h_n)=0,
\qquad 0\le n<G.
\]
It is automatic for the exact minimax family and is explicit in
Section~\ref{subsec:implementation}. For an arbitrary auxiliary
family, it must likewise be imposed: replace its endpoint policies
by always continuing at $c=0$ and always stopping at $c=1$.
This gives zero regret at both endpoints without changing any
interior policy or increasing the uniform SAB regret bound.
The index definitions in the main text are understood with this
convention.

\begin{proof}{Proof of Proposition~\ref{prop:shared-threshold}}
Condition on the entire reward sequence. Every continuation
probability is then fixed. With independent draws, taking observation
$n+1$ requires accepting all $n+1$ continuation decisions, with
conditional probability
$\prod_{j=0}^n q^{c,G}(\tilde h_j)=\rho^{c,G}(\tilde h_n)$.
With a shared uniform threshold independent of the reward sequence,
the continuation events are nested, so taking observation $n+1$
is equivalent to $\tilde u<\rho^{c,G}(\tilde h_n)$ and has the
same conditional probability.

A variable in $\{0,\ldots,G\}$ is determined in distribution by
these $G$ tail probabilities. The original stopping rule has the
same conditional survival probabilities by the chain rule, including
when its randomization is internally dependent: $q^{c,G}(h_j)$
is conditioned on having survived to $h_j$.
Thus both implementations preserve its pull-count law and regret.
For $p>c$ and $p<c$, respectively, the regret identity becomes
\[
\begin{aligned}
R_{c,G}(\pi_{c,G},p)
&=(p-c)\sum_{n=0}^{G-1}
\mathbb E_p[1-\rho^{c,G}(\tilde h_n)],&&p>c,\\
R_{c,G}(\pi_{c,G},p)
&=(c-p)\sum_{n=0}^{G-1}
\mathbb E_p[\rho^{c,G}(\tilde h_n)],&&p<c.
\end{aligned}
\]
These are precisely~\eqref{eq:continuation-regret}; at $p=c$
both sides are zero. For a count-state rule, comparison with the
normalized state-action representation also gives
\[
x^{c,G}_{n,s}
=\binom{n}{s}^{-1}\sum_{h_n:s_n=s}\rho^{c,G}(h_n).
\]
Thus $x^{c,G}_{n,s}$ averages the cumulative probabilities over
histories with the same terminal count; individual cumulative
probabilities may still depend on reward order.
\end{proof}

\begin{proof}{Proof of Proposition~\ref{prop:adjusted-regret}}
Fix the entire auxiliary family $\{\pi_{c,G}:c\in[0,1]\}$.
For each reward sequence, $\rho^{v,G}(h_n)$ is nonincreasing in $n$
for every $v$. Taking the supremum over $v\in[c,1]$ preserves this
property, as does multiplication by the nonincreasing indicator
$\chi^{c,b}(h_n)$. The adjusted probabilities are in $[0,1]$, so a
shared independent threshold realizes them as an admissible stopping
policy. Each decision depends only on the observed history.

For a fixed history, increasing $c$ shrinks the envelope's index set
and cannot increase $\chi^{c,b}$. The adjusted probabilities are
therefore nonincreasing in $c$ and Borel measurable. This conclusion
does not require a measurable selection of the original policies in
$c$. There are finitely many histories at each count, which also
ensures measurability in histories and thresholds. The endpoint
convention gives the stated values at $c=0,1$.

For $p>c$, necessarily $c<1$. The definition of the shortfall gate
then gives
$\{\chi^{c,b}(\tilde h_n)=0\}
=\{\tilde\tau_{c,b}\le n\}$, including the empty event at $n=0$.
Since $\bar\rho^{c,G}\ge\rho^{c,G}$,
\[
1-\rho^{\mathrm{adj},c,G,b}(\tilde h_n)
\le 1-\rho^{c,G}(\tilde h_n)
+\mathbf1\{\tilde\tau_{c,b}\le n\}.
\]
Multiply by $p-c$, sum, and apply the shared-threshold identity and
Lemma~\ref{lem:exponential} to obtain
\[
\begin{aligned}
R_{c,G}(\pi^{\mathrm{adj}}_{c,G,b},p)
&\le R_{c,G}(\pi_{c,G},p)
+G(p-c)\mathbb P_p(\tilde\tau_{c,b}<\infty)\\
&\le R_{c,G}(\pi_{c,G},p)
+G(p-c)e^{-8b(p-c)}.
\end{aligned}
\]
For $p<c$, the adjusted policy cannot take more observations than
$\pi^b_{c,G}$ on the same reward sequence, since its continuation
probability vanishes once the shortfall rule has stopped.
The bounded-stopping-time argument in the proof of
Proposition~\ref{cor:sab-upper} consequently gives
\[
R_{c,G}(\pi^{\mathrm{adj}}_{c,G,b},p)
\le(c-p)\mathbb E_p[\tilde n^b_{c,G}]<b+c.
\]
This comparison remains valid at $c=1$, where the adjusted endpoint
policy stops immediately. Dividing the two regret bounds by the
corresponding positive gaps proves~\eqref{eq:continuation-bounds}.
\end{proof}

\begin{proof}{Proof of Lemma~\ref{lem:index}}
Fix $h_n$ and $u\in[0,1)$. Both accepted sets contain $c=0$.
If $0\le c<j_G(h_n,u)$, some $v>c$ has
$\rho^{v,G}(h_n)>u$, and hence $\bar\rho^{c,G}(h_n)>u$.
Conversely, if $\bar\rho^{c,G}(h_n)>u$, some $v\ge c$ has
$\rho^{v,G}(h_n)>u$, so $c\le j_G(h_n,u)$.
Thus the monotone envelope does not change the supremum index.

For $c<1$, the shortfall gate further requires $c<\ell_b(h_n)$;
this also holds at $n=0$, since $\ell_b(h_0)=1$.
Every $c$ strictly below
$\min\{j_G(h_n,u),\ell_b(h_n)\}$ is therefore accepted, and no $c$
strictly above it is accepted. The endpoint $c=1$ is rejected by
convention, while $c=0$ is accepted. This proves
\eqref{eq:index-minimum}, without requiring attainment of the supremum.

If $j^{\mathrm{adj}}_{G,b}(h_n,u)<c$, acceptance at $c$ is impossible
by the definition of a supremum. If
$j^{\mathrm{adj}}_{G,b}(h_n,u)>c$, some $v>c$ is accepted, and
monotonicity in $c$ implies
\[
\rho^{\mathrm{adj},c,G,b}(h_n)\ge
\rho^{\mathrm{adj},v,G,b}(h_n)>u.
\]
Its contrapositive gives the stated weak converse.
Finally, the accepted sets shrink both when observations are added
and when $u$ increases. The adjusted index is nonincreasing in the
observation count, and its monotonicity in $u$ makes it Borel measurable.
\end{proof}

\subsection{Evaluation of the regret constants}
\label{subsec:lambda-evaluation}
\label{app:numerical}

We first bound the functional $\Lambda_0$ in~\eqref{eq:tail-integral}
for the tail function~\eqref{eq:phi}. The post-switch scalar
optimization is verified afterward. 

\begin{lemma}[Bounds for the scalar tail functional]
\label{lem:quantity-kappa-G}
Let $\kappa_0=1/\sqrt{8e}$. The function $\Lambda_0$ is finite,
increasing, and continuously differentiable on $(0,\infty)$, with
\[
\Lambda_0(\kappa_0)<2.671,
\qquad
0<\Lambda_0'(\kappa)\le\sqrt2(2+3\log3).
\]
Consequently, if $0<\kappa_G\le\kappa_0+3/(2\sqrt G)$, then
\[
\Lambda_0(\kappa_G)
\le2.671+\frac{3\sqrt2}{2\sqrt G}(2+3\log3).
\]
\end{lemma}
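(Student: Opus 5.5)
The plan is to handle the claims about $\Lambda_0$ in the order finiteness, smoothness and monotonicity, derivative bound, numerical value, and then the consequence. Throughout I write $J_\kappa(d):=J_{\Phi_\kappa}(d)$, so that $\Lambda_0(\kappa)=\sup_{d>0}J_\kappa(d)$.

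\textbf{Finiteness and a compact range of $d$.} Since $\Phi_\kappa\le 1$, we have $d\,\Phi_\kappa(d/2)\le d$ and $2\int_{d/2}^d\Phi_\kappa\le d$. The cross integral $2\int_d^\infty\Phi_\kappa(y)\Phi_\kappa(y-d)\,dy$ needs more care. On $y\in[d,2d]$ I use $\Phi_\kappa(y)\le\sqrt2\kappa/y+2e^{-2\sqrt2 y}+e^{-2y^2}$ together with $\Phi_\kappa(y-d)\le1$. On $y\ge2d$ I use the product of the two tail bounds. The awkward term is the product of two $1/y$ tails: it behaves like $\int 2\kappa^2/(y(y-d))\,dy$, which converges at infinity and, after the truncation $\min\{1,\cdot\}$, is $O(\kappa^2\log(1/\kappa)/d)$-type near $y=d$.

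This gives $J_\kappa(d)\lesssim d$ for small $d$. For large $d$, every term is at most $d\,\Phi_\kappa(d/2)+\dots=O(\kappa)+o(1)$, because $d\Phi_\kappa(d/2)\to2\sqrt2\kappa$ and the integral terms also stay bounded: $2\int_{d/2}^d\sqrt2\kappa/y\,dy=2\sqrt2\kappa\log2$, and the cross term decays. Hence the supremum is finite. For $\kappa$ in a bounded interval it is attained on a compact set $[d_-,d_+]$ that can be chosen uniformly in $\kappa$.

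\textbf{Smoothness, monotonicity and the derivative bound.} For fixed $y$, $\Phi_\kappa(y)$ is nondecreasing in $\kappa$ and Lipschitz: its derivative is $\sqrt2/y$ on the region where the min is not $1$, and $0$ otherwise. So $J_\kappa(d)$ is increasing in $\kappa$, and $\Lambda_0$ is increasing. It is strictly increasing because at any maximizer the set $\{\Phi_\kappa<1\}$ meets the integration range.

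For the derivative I would use a Danskin/envelope argument: $\partial_\kappa J_\kappa(d)$ is continuous in $(\kappa,d)$ off a null set, and this yields continuous differentiability once the maximizer is shown to be unique or the derivatives are shown to agree. If uniqueness is hard to establish, I would settle for a one-sided derivative and a Lipschitz bound, which is all the consequence needs.

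The derivative bound should come from
\[
\partial_\kappa J_\kappa(d)\le d\cdot\frac{2\sqrt2}{d}+2\int_{d/2}^d\frac{\sqrt2}{y}dy+2\int_d^\infty\Bigl[\tfrac{\sqrt2}{y}\Phi_\kappa(y-d)+\Phi_\kappa(y)\tfrac{\sqrt2}{y-d}\mathbf 1\{\cdot\}\Bigr]dy.
\]
The first two pieces give $2\sqrt2+2\sqrt2\log2$. The cross piece must be bounded using the truncation at $1$: the region where $\Phi<1$ starts at $y-d\gtrsim\kappa$, and the resulting logarithms should combine into $3\sqrt2\log3$-type terms. This is where the constant $\sqrt2(2+3\log3)$ should emerge, and matching it exactly is the first main obstacle.

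\textbf{Numerical value and consequence.} To show $\Lambda_0(\kappa_0)<2.671$, I would evaluate $J_{\kappa_0}$ on a fine $d$-grid inside $[d_-,d_+]$ with interval arithmetic for the integrals, and control the values between grid points by a Lipschitz bound in $d$. Differentiating in $d$ gives $|\partial_dJ|\le$ an explicit constant, since each term has a bounded derivative. Getting this certified numerics rigorous is the second main obstacle. The consequence then follows from the mean value theorem:
\[
\Lambda_0(\kappa_G)\le\Lambda_0(\kappa_0)+\sup\Lambda_0'\cdot(\kappa_G-\kappa_0)_+\le 2.671+\frac{3}{2\sqrt G}\sqrt2(2+3\log3).
\]
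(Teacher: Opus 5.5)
Your plan has a genuine gap at the derivative bound. The supremum over $d$ is attained exactly at the kink $d=2r_\kappa$, where $r_\kappa$ is the root of $f_\kappa(r)=1$ with $f_\kappa(y)=\sqrt2\kappa/y+2e^{-2\sqrt2y}+e^{-2y^2}$. That is precisely where $\Phi_\kappa(d/2)$ switches from the cap $1$ to $f_\kappa$. On this curve $\partial_\kappa J_\kappa(d)$ jumps: the term $d\Phi_\kappa(d/2)$ has $\kappa$-derivative $2\sqrt2$ from the left and $0$ from the right. So the null set you discard in the Danskin argument contains the maximizer, and the envelope theorem does not give $\Lambda_0'$. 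Freezing $d$ and comparing can only bound $\Lambda_0'$ above by the left partial $\partial^-_\kappa J_\kappa(2r_\kappa)$. Your own display, evaluated at $d=2r_\kappa$ with $r=r_\kappa$, is exactly this left partial:
\[
2\sqrt2+2\sqrt2\log2+2\sqrt2\log\tfrac32+2\sqrt2\,\mathcal C_\kappa
=\sqrt2\,(2+2\log3+2\mathcal C_\kappa),
\qquad
\mathcal C_\kappa=\int_{r}^\infty\Bigl[\tfrac{f_\kappa(y+2r)}{y}+\tfrac{f_\kappa(y)}{y+2r}\Bigr]dy .
\]
Since $f_\kappa(y)\ge\sqrt2\kappa/y$, we get $\mathcal C_\kappa\ge(\sqrt2\kappa/r_\kappa)\log3$, and this tends to $\log3$ as $\kappa\to\infty$. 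Already at $\kappa=\kappa_0+3/4$ (the endpoint relevant for $G=4$) it exceeds $1$, so your bound is above $8.8$, whereas $\sqrt2(2+3\log3)\approx7.49$. Even completed, your route cannot deliver the stated constant on the range the consequence needs, and your ``one-sided derivative plus Lipschitz'' fallback inherits the same loss. Continuous differentiability also stays unproved, since it hinges on a uniqueness you do not establish.

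The missing idea is to locate the maximizer and differentiate along it. The paper first shows $r_\kappa>1/2$ and that $yf_\kappa(y)$ is decreasing on $[1/2,\infty)$. This gives $f_\kappa(y)\le r_\kappa/y$ and $h_\kappa:=-f_\kappa'\ge f_\kappa/y$ for $y\ge r_\kappa$. Plugging these into $J_\kappa'(d)=\tfrac d2\Phi_\kappa'(d/2)+2\int_r^\infty f_\kappa(t+d)h_\kappa(t)\,dt$ shows $J'>0$ on $(0,2r)$ and $J'<0$ on $(2r,\infty)$. Hence $2r_\kappa$ is the unique maximizer, and $\Lambda_0(\kappa)=2r+2\int_r^{3r}f_\kappa+2\int_r^\infty f_\kappa(y)f_\kappa(y+2r)\,dy$. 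Differentiating this with $r_\kappa'=\sqrt2/(r_\kappa h_\kappa(r_\kappa))\le\sqrt2$ gives $\Lambda_0'=2\sqrt2(\log3+\mathcal C_\kappa)+4r_\kappa'\mathcal B_\kappa$, where $\mathcal B_\kappa=\int_r^\infty h_\kappa(y)f_\kappa(y+2r)\,dy\le(2-\log3)/4$. In place of your $2\sqrt2$, the moving maximizer contributes at most $\sqrt2(2-\log3)$. That saving of $\sqrt2\log3$ is exactly what yields $\sqrt2(2+3\log3)$, and the same representation gives continuous differentiability via the implicit function theorem.

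A smaller error: $J_\kappa(d)=O(d)$ as $d\to0$ is false, because the convolution term tends to $2\int_0^\infty\Phi_\kappa^2>0$. Finiteness survives, but your compact localization $[d_-,d_+]$ for the grid certification is then unjustified. The margin there is only about $4\times10^{-5}$. The sign analysis of $J'$ fixes this and reduces the certification to a single evaluation at $d=2r_{\kappa_0}$.
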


\begin{proof}{Proof}
\textit{Step 1: locate the maximizing gap.}
Write
\[
\begin{gathered}
A=\sqrt2,\qquad B=2,\qquad\alpha=2\sqrt2,
\qquad a_\kappa=A\kappa,\\
g(y)=Be^{-\alpha y}+e^{-2y^2},
\qquad f_\kappa(y)=\frac{a_\kappa}{y}+g(y).
\end{gathered}
\]
The function $f_\kappa$ is strictly decreasing from infinity to zero,
so it has a unique positive root of
\begin{equation}
\label{eq:lambda-cutoff}
f_\kappa(r_\kappa)=1.
\end{equation}
Thus $\Phi_\kappa=1$ on $(-\infty,r_\kappa]$ and equals $f_\kappa$
afterward. Since $g(1/2)=2e^{-\sqrt2}+e^{-1/2}>1$, we have
$r_\kappa>1/2$. Moreover, for $y\ge1/2$,
\[
-\frac{d}{dy}[yf_\kappa(y)]
=B(\alpha y-1)e^{-\alpha y}+(4y^2-1)e^{-2y^2}>0.
\]
With $h_\kappa=-f_\kappa'>0$, this implies
\begin{equation}
\label{eq:lambda-tail-domination}
h_\kappa(y)\ge\frac{f_\kappa(y)}{y},
\qquad f_\kappa(y)\le\frac{r_\kappa}{y},
\qquad y\ge r_\kappa.
\end{equation}
Abbreviate $r=r_\kappa$ and $a=a_\kappa$ when $\kappa$ is fixed.

The last integral in $J_{\Phi_\kappa}(d)$ is
$2\int_0^\infty\Phi_\kappa(t+d)\Phi_\kappa(t)\,dt$.
The function $\Phi_\kappa$ is continuous and locally absolutely
continuous; its tail and derivative are respectively $O(1/y)$ and
$O(1/y^2)$. These bounds justify differentiation and integration by
parts, giving, for $d\ne2r$,
\begin{equation}
\label{eq:lambda-J-derivative}
J_{\Phi_\kappa}'(d)
=\frac d2\Phi_\kappa'(d/2)
+2\int_r^\infty f_\kappa(t+d)h_\kappa(t)\,dt.
\end{equation}
For $d<2r$, the first term vanishes and the integral is positive.
For $d>2r$, put $x=d/2>r$ and use
$\int_r^\infty h_\kappa(t)\,dt=1$. Monotonicity of $f_\kappa$ and
$yf_\kappa(y)$ gives
\[
\begin{aligned}
J_{\Phi_\kappa}'(d)
&\le-xh_\kappa(x)+2f_\kappa(r+d)\\
&\le-f_\kappa(x)+\frac{2x}{r+d}f_\kappa(x)
=-\frac{r}{r+d}f_\kappa(x)<0.
\end{aligned}
\]
Continuity therefore shows that the unique maximizer is $d=2r$.
Substituting it and splitting the convolution at $t=r$ yields
\begin{equation}
\label{eq:lambda-representation}
\Lambda_0(\kappa)
=2r+2\int_r^{3r}f_\kappa(y)\,dy
+2\int_r^\infty f_\kappa(y)f_\kappa(y+2r)\,dy.
\end{equation}
In particular, $\Lambda_0$ is finite.

\par\medskip\noindent\textit{Step 2: control sensitivity to $\kappa$.}
The implicit function theorem gives
\[
r_\kappa'=\frac{A}{r_\kappa h_\kappa(r_\kappa)}\le A.
\]
The representation~\eqref{eq:lambda-representation} is continuously
differentiable: on every compact $\kappa$ interval, the integrand
of its improper integral and its parameter derivatives have tails
bounded uniformly by constant multiples of $1/y^2$.
Define
\[
\mathcal C_\kappa
=\int_r^\infty
\left[\frac{f_\kappa(y+2r)}{y}
+\frac{f_\kappa(y)}{y+2r}\right]dy,
\qquad
\mathcal B_\kappa
=\int_r^\infty h_\kappa(y)f_\kappa(y+2r)\,dy.
\]
The direct parameter derivatives in~\eqref{eq:lambda-representation}
contribute $2A(\log3+\mathcal C_\kappa)$. The moving endpoints and
shift $2r$ contribute
\[
4r_\kappa'\left[f_\kappa(3r)
-\int_r^\infty f_\kappa(y)h_\kappa(y+2r)\,dy\right]
=4r_\kappa'\mathcal B_\kappa,
\]
where the equality follows by integrating the derivative of
$f_\kappa(y)f_\kappa(y+2r)$. Hence
\[
\Lambda_0'(\kappa)
=2A(\log3+\mathcal C_\kappa)
+4r_\kappa'\mathcal B_\kappa>0.
\]
By~\eqref{eq:lambda-tail-domination},
\[
\mathcal C_\kappa
\le2r\int_r^\infty\frac{dy}{y(y+2r)}=\log3.
\]
Also, for $y\ge r$,
\[
f_\kappa(y+2r)
\le\frac{y}{y+2r}f_\kappa(y)
\le\frac{f_\kappa(y)}{1+2f_\kappa(y)}.
\]
Substitute $u=f_\kappa(y)$ to get
\[
\mathcal B_\kappa
\le\int_0^1\frac{u}{1+2u}\,du
=\frac{2-\log3}{4}.
\]
Combining these bounds proves
\begin{equation}
\label{eq:lambda-uniform-derivative}
0<\Lambda_0'(\kappa)
\le4A\log3+A(2-\log3)
=A(2+3\log3)=\sqrt2(2+3\log3).
\end{equation}
The argument in Steps 1--2 applies equally to other positive
coefficients $A,B,\alpha$ whenever $yf_\kappa(y)$ is decreasing
on $[r_\kappa,\infty)$; this observation will be used for the
parameter variants below.

\par\medskip\noindent\textit{Step 3: certify the baseline value.}
Define
\[
\mathcal G(x)=\int_x^\infty g(y)\,dy
=\frac{B}{\alpha}e^{-\alpha x}
+\frac{\sqrt\pi}{2\sqrt2}\operatorname{erfc}(\sqrt2 x)
\]
and
\[
\mathcal R_\kappa(y)
=\frac{a}{y}g(y+2r)
+\frac{a}{y+2r}g(y)
+g(y)g(y+2r).
\]
The algebraic part of the product integral is explicit:
$\int_r^\infty[y(y+2r)]^{-1}\,dy=\log3/(2r)$.
Thus~\eqref{eq:lambda-representation} becomes
\begin{equation}
\label{eq:lambda-fast-evaluation}
\Lambda_0(\kappa)
=2r+\left(2a+\frac{a^2}{r}\right)\log3
+2[\mathcal G(r)-\mathcal G(3r)]
+2\int_r^\infty\mathcal R_\kappa(y)\,dy.
\end{equation}
At $\kappa_0=1/\sqrt{8e}$, $a=1/(2\sqrt e)$.
Interval evaluation of $f_{\kappa_0}-1$ at the two endpoints gives
\[
r_-=0.735965204197677442
<r_{\kappa_0}<
r_+=0.735965204197677443.
\]
Then, numerical computation of \eqref{eq:lambda-fast-evaluation} shows that
$$
\Lambda_0(\kappa) \le 2 r_+ + \left(2a+\frac{a^2}{r_-}\right)\log3 + 2[\mathcal G(r_-)-\mathcal G(3r_+)] + 2\int_{r_-}^\infty\mathcal R_\kappa(y)\,dy <2.670963621<2.671.
$$
Finally, integrating~\eqref{eq:lambda-uniform-derivative} gives
\[
\Lambda_0(\kappa)
\le2.671+\sqrt2(2+3\log3)(\kappa-\kappa_0),
\qquad \kappa\ge\kappa_0.
\]
Monotonicity and the bound
$\kappa_G\le\kappa_0+3/(2\sqrt G)$ prove the lemma, including when
$\kappa_G<\kappa_0$.
\end{proof}

\subsubsection{Evaluation of the post-switch term}
\label{app:combination}

Recall
\[
\gamma_\star^2e^{\gamma_\star^2/2}=2,
\qquad
\gamma_0=\frac{\gamma_\star(\gamma_\star^2+1)}
{\gamma_\star^2+2}.
\]
The defining function for $\gamma_\star$ is strictly increasing from
zero to infinity, so the positive solution is unique, with
$0<\gamma_0<\gamma_\star$. For $u>\gamma_0$, let
\[
\psi(u)=\frac{2(u-\gamma_0)}{u^2}e^{-u^2/2}.
\]
Its logarithmic derivative is
\[
\frac{\psi'(u)}{\psi(u)}
=\frac1{u-\gamma_0}-\frac2u-u
=\frac{\gamma_0(u^2+2)-u(u^2+1)}{u(u-\gamma_0)}.
\]
The function $u(u^2+1)/(u^2+2)$ has positive derivative
$(u^4+5u^2+2)/(u^2+2)^2$. Thus $\psi$ increases up to
$\gamma_\star$ and decreases afterward. The defining equation gives
\begin{equation}
\label{eq:post-switch-scalar-maximum}
\sup_{u>\gamma_0}\frac{2(u-\gamma_0)}{u^2}e^{-u^2/2}
=\frac{2(\gamma_\star-\gamma_0)}{\gamma_\star^2}
 e^{-\gamma_\star^2/2}
=\gamma_\star-\gamma_0.
\end{equation}
For $\delta=\gamma_0/\sqrt H$, the substitution $u=\Delta\sqrt H$
therefore gives
\[
\sup_{\Delta>\delta}
\left(1-\frac\delta\Delta\right)\frac2\Delta e^{-H\Delta^2/2}
=(\gamma_\star-\gamma_0)\sqrt H.
\]
The other part is bounded by
$\sup_{\Delta>0}\Delta e^{-H\Delta^2/2}=1/\sqrt{eH}$.
The supremum of the sum is at most the sum of these suprema, so
\begin{equation}
\label{eq:post-switch-scalar-bound}
\sup_{\Delta>\delta}
\left(1-\frac\delta\Delta\right)e^{-H\Delta^2/2}
\left(\Delta+\frac2\Delta\right)
\le(\gamma_\star-\gamma_0)\sqrt H+\frac1{\sqrt{eH}}.
\end{equation}
Interval evaluation gives
\[
1.065028910790485417<\gamma_\star<1.065028910790485418.
\]
In particular, the constants used in the main-text combination satisfy
\[
\begin{aligned}
2+\frac{1/\sqrt2+\gamma_\star-\gamma_0}{2\sqrt2}
+\frac1{\sqrt{2e}}&<2.799020<\frac{14}{5},\\
2.671+\frac1{\sqrt2}+\gamma_\star&<4.45,\\
\frac{14}{5}+\frac32(2+3\log3)&<10.75.
\end{aligned}
\]

\subsection{The exact joint-MAB formulation and its size}
\label{subsec:mab-exact}

We use $T$ for the joint-MAB horizon, reserving $G$ for the auxiliary
SAB horizon. Before a selection, let $\mathbf n=(n_1,\ldots,n_K)$ and
$\mathbf s=(s_1,\ldots,s_K)$ be the per-arm pull and success counts.
The decision-state set is
\[
\mathcal S_{K,T}
=\{(\mathbf n,\mathbf s): n_k\in\{0,1,\ldots\},\ 
0\le s_k\le n_k,\ |\mathbf n|\le T-1\},
\qquad |\mathbf n|=\sum_kn_k.
\]
A count-state policy selects arm $k$ with probability
$q_{\mathbf n,\mathbf s}(k)$, where $\sum_kq_{\mathbf n,\mathbf s}(k)=1$.

\paragraph{Count-state reduction.}
For any ordered action-reward history $\mathfrak h$ with counts
$(\mathbf n,\mathbf s)$, its probability factorizes as
\[
\mathbb P_{\mathbf p}(\mathfrak h)
=w(\mathfrak h)\prod_{k=1}^Kp_k^{s_k}(1-p_k)^{n_k-s_k},
\]
where $w(\mathfrak h)$ is the probability of its prescribed actions
under the prescribed rewards and does not depend on $\mathbf p$.
Average the next-action probabilities over histories with the same
counts, using these weights. If the total weight is zero, choose any
fixed action distribution. The common likelihood factor cancels,
and induction over the total count proves that the resulting
count-state policy preserves every state-action probability and every
expected arm pull count, for every $\mathbf p$. It therefore has the
same regret. This is the multi-arm version of the reduction proved above.

\paragraph{State-action formulation.}
Let
\[
\beta_{\mathbf p}(\mathbf n,\mathbf s)
=\prod_{k=1}^K\binom{n_k}{s_k}p_k^{s_k}(1-p_k)^{n_k-s_k}.
\]
Define $\mathbf p$-independent normalized masses
$y_{\mathbf n,\mathbf s,k}$ by
\[
\mathbb P_{\mathbf p}(\text{reach }(\mathbf n,\mathbf s)
\text{ and select }k)
=\beta_{\mathbf p}(\mathbf n,\mathbf s)y_{\mathbf n,\mathbf s,k}.
\]
The independence from $\mathbf p$ follows from the preceding
likelihood factorization. At boundary means, use continuous extension.
With $e_k$ the $k$th unit vector, flow conservation gives
\begin{equation}
\label{eq:mab-flows}
\begin{aligned}
\sum_{k=1}^Ky_{\mathbf0,\mathbf0,k}&=1,\\
\sum_{k=1}^Ky_{\mathbf n,\mathbf s,k}
&=\sum_{k:n_k\ge1}\left[
\frac{s_k}{n_k}y_{\mathbf n-e_k,\mathbf s-e_k,k}
+\frac{n_k-s_k}{n_k}y_{\mathbf n-e_k,\mathbf s,k}\right],
\quad 1\le|\mathbf n|\le T-1.
\end{aligned}
\end{equation}
Terms outside the natural index ranges are zero. Conversely, any
nonnegative array satisfying these equations defines a policy by
\[
q_{\mathbf n,\mathbf s}(k)
=\frac{y_{\mathbf n,\mathbf s,k}}
{\sum_{k'}y_{\mathbf n,\mathbf s,k'}}
\]
when the denominator is positive, with an arbitrary fixed action
distribution otherwise. Induction verifies the prescribed masses.
Since the expected number of pulls of arm $k$ is the sum of its
state-action probabilities, the exact formulation is
\begin{equation}
\label{eq:mab-lp}
\begin{aligned}
V^{\mathrm{MAB}}_{K,T}=\min_{\eta,y}\quad&\eta\\
\text{subject to}\quad
&\eta\ge\sum_{k=1}^K(p^\star-p_k)
\sum_{(\mathbf n,\mathbf s)\in\mathcal S_{K,T}}
\beta_{\mathbf p}(\mathbf n,\mathbf s)y_{\mathbf n,\mathbf s,k},
&&\mathbf p\in[0,1]^K,\\
&\text{the flow equations~\eqref{eq:mab-flows}},
\qquad y_{\mathbf n,\mathbf s,k}\ge0.
\end{aligned}
\end{equation}
The feasible arrays form a compact set: under
$\mathbf p=(1/2,\ldots,1/2)$, every $\beta_{\mathbf p}$ is positive,
and the corresponding state-action probability is at most one.
The feasible set is closed and nonempty, and the worst-case regret
is continuous on it, so the minimum is attained.

The SAB formulation uses the same likelihood normalization, but
also includes a known alternative and the option to stop. These
additional features yield its flow inequalities. Simply setting
$K=1$ in~\eqref{eq:mab-lp} does not give the SAB problem: a single
unknown arm without a known alternative has zero regret.

\paragraph{Size of the exact program.}
The generating function
$\sum_{\mathbf n}\prod_k(n_k+1)z^{|\mathbf n|}=(1-z)^{-2K}$ gives
\[
|\mathcal S_{K,T}|
=\sum_{|\mathbf n|\le T-1}\prod_k(n_k+1)
=\binom{T+2K-1}{2K}.
\]
There are $K|\mathcal S_{K,T}|$ policy variables, one additional
variable $\eta$, and $|\mathcal S_{K,T}|$ flow equalities.
For fixed $K$, the policy-variable and flow-constraint counts grow
as $\Theta(T^{2K})$. The regret constraints are still indexed by the
continuum $\mathbf p\in[0,1]^K$; the exact program is semi-infinite.
For example, the policy-variable counts are $83{,}834{,}250{,}500$
for $K=2,T=1000$ and $4{,}828{,}032{,}300$ for $K=3,T=100$.

\section{Comprehensive Comparison in MAB Setting}
\label{sec:extendedexp}
In Figure~\ref{fig:exp-mab-k48-full}, we include the results of standard UCB and Tsallis-INF for a more comprehensive comparison. The Tsallis-INF policy follows~\cite{ZimmertSeldin2021}, sampling arms according to Tsallis-regularized cumulative estimated losses with symmetric regularization and $\alpha=1/2$.
We implement both the importance-weighted (IW) and reduced-variance (RV) variants, using losses $1-X_t$ and learning rates $\eta_t=2/\sqrt{t}$ and $\eta_t=4/\sqrt{t}$, respectively, as specified in \citep[Theorem~1]{ZimmertSeldin2021}.

\begin{figure}[htbp]
  \centering
  \begin{subfigure}[t]{0.485\linewidth}
    \centering
    \includegraphics[width=\linewidth]{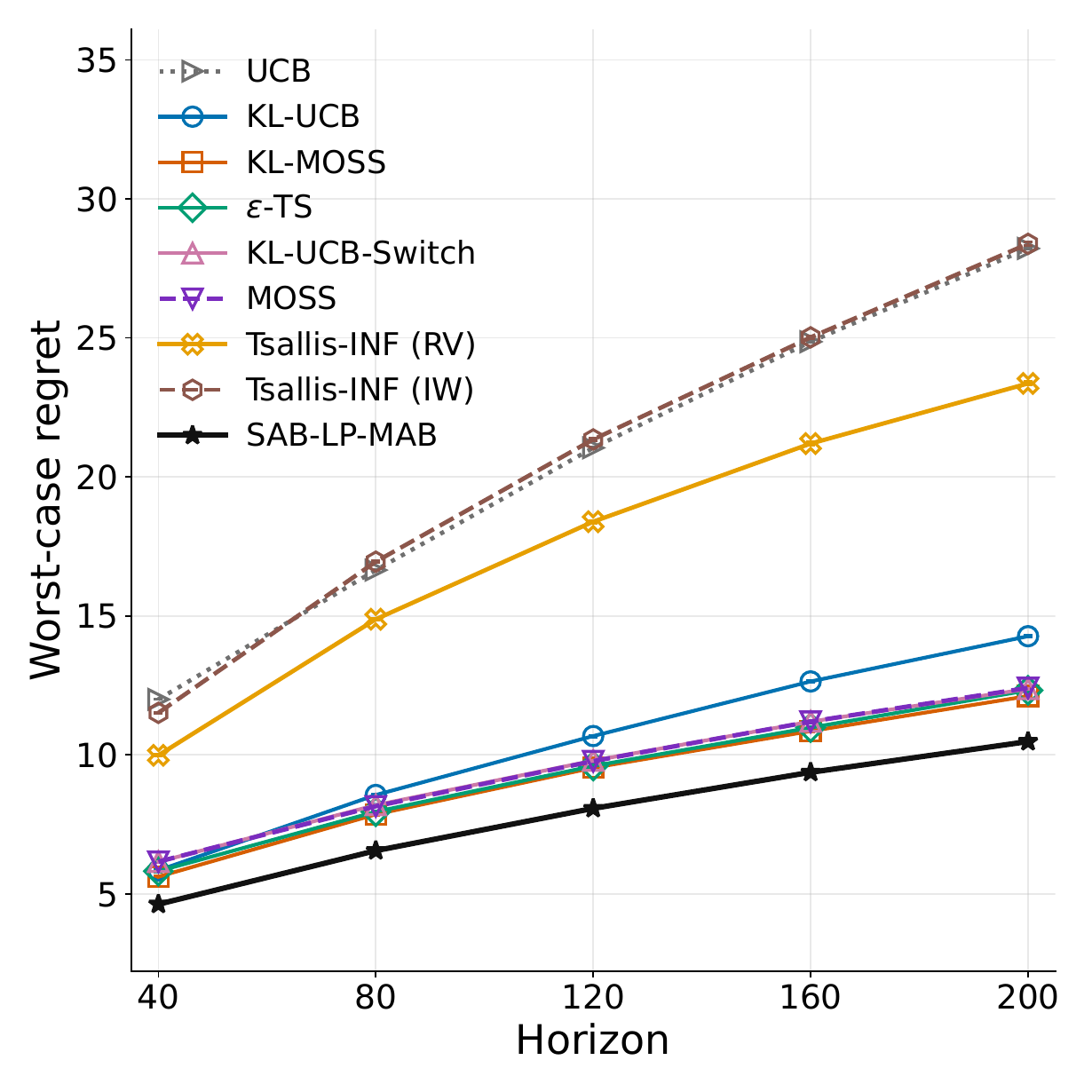}
    \caption{\(K=4\).}
    \label{fig:exp-mab-k4-full}
  \end{subfigure}\hfill
  \begin{subfigure}[t]{0.485\linewidth}
    \centering
    \includegraphics[width=\linewidth]{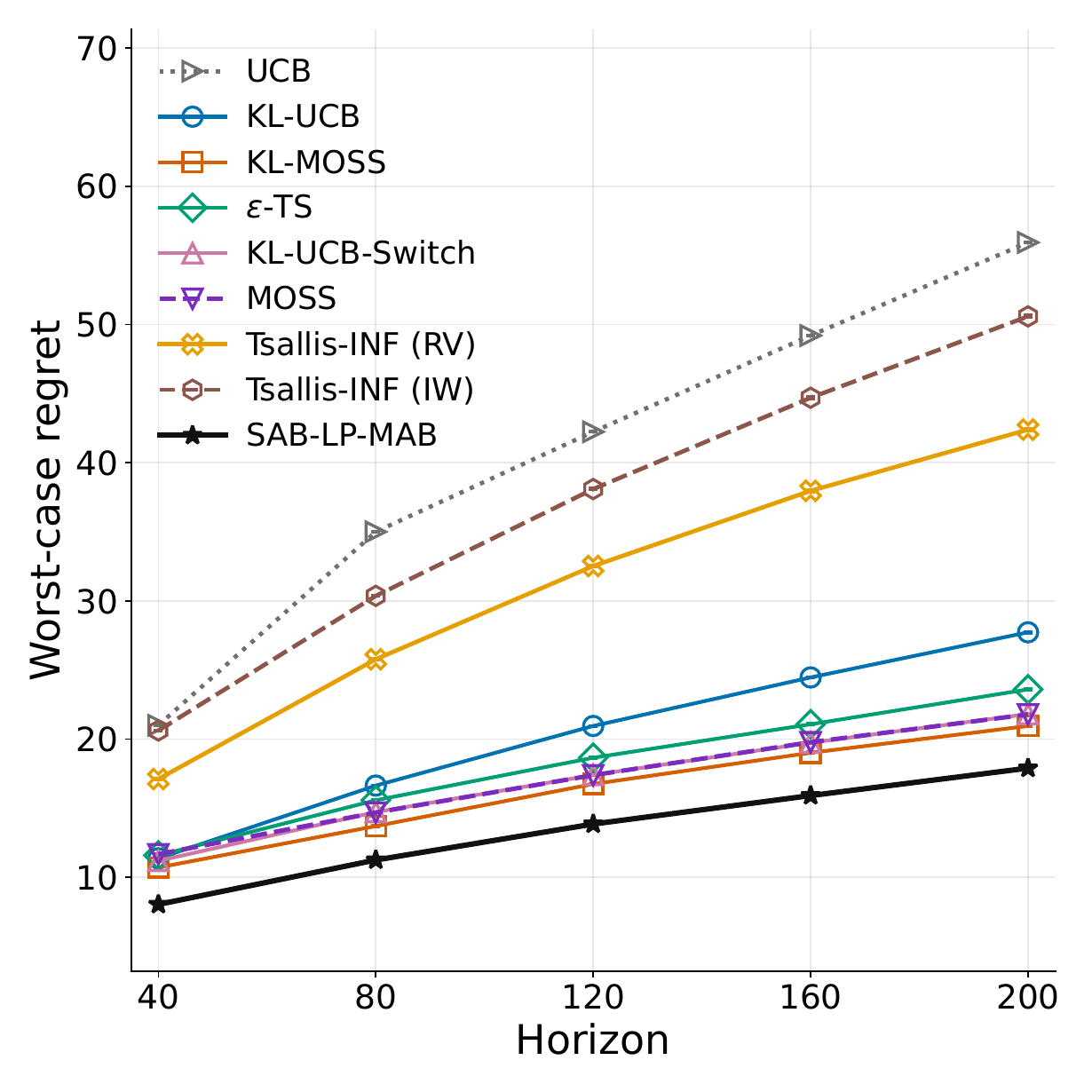}
    \caption{\(K=8\).}
    \label{fig:exp-mab-k8-full}
  \end{subfigure}
  \caption{Estimated worst-case regret over the evaluation grid for \(K=4\) and \(K=8\).
  SAB-LP-MAB uses the same untuned construction
  \(H=\lceil T/K \rceil\), \(G=2H\), and
  \(b=\sqrt{G}/4\) in both panels.}
  \label{fig:exp-mab-k48-full}
\end{figure}